\def\journal@name{}
\newcommand{\beq}{\begin{equation}}
\newcommand{\eeq}{\end{equation}}
\newcommand{\ignore}[1]{}
\definecolor{forestgreen}{rgb}{0.0, 0.27, 0.13}
\newcommand\numberthis{\addtocounter{equation}{1}\tag{\theequation}}
\newcommand{\opt}{^\star}
\newcommand{\iid}{\stackrel{\text{\ i.i.d.}}{\sim}}
\newtheorem{thm}{Theorem}
\newtheorem{cor}{Corollary}
\newtheorem{assum}{Assumption}
\newtheorem{prop}{Proposition}
\newtheorem{remark}{Remark}
\renewcommand\b[1]{\boldsymbol{#1}}
\crefname{lem}{lemma}{lemmas}
\Crefname{lem}{Lemma}{Lemmas}
\crefname{prop}{proposition}{propositions}
\Crefname{prop}{Proposition}{Propositions}
\crefname{thm}{theorem}{theorems}
\Crefname{thm}{Theorem}{Theorems}
\DeclareMathOperator*{\argmax}{\arg\!\max}
\newif\ifshowanswer    
\newcommand{\isitthree}[1]
{
  \ifnum#1=3
    number #1 is 3
  \else
    number #1 is not 3
  \fi
}
\newcommand{\be}{\begin{equation}}
\newcommand{\ee}{\end{equation}}
\newcommand\diag{\operatorname{diag}}
\newcommand\R{{\mathbb{R}}}
\newcommand\Z{{\mathbb{Z}}}
\newcommand\E{{\mathds{E}}}
\newcommand\eps{{\varepsilon}}
\renewcommand\b{\boldsymbol}
\newcommand\By{{\mathbf y}}
\newcommand\CN{{\mathcal N}}
\newcommand\CS{{\mathcal S}}
\newtheorem{lemma}{Lemma}
\Crefname{assum}{Assumption}{Assumptions}
\crefname{assum}{Assumption}{Assumptions}
\crefname{proposition}{proposition}{propositions}
\Crefname{proposition}{Proposition}{Propositions}
\newcommand{\jgz}{\mathcal{J}_{>0}}
\newcommand{\pto}{\xrightarrow{\text{p}}}
\newcommand{\ip}{\pto}
\newcommand{\stacksvd}{\texorpdfstring{\texttt{Stack-SVD}\xspace}{Stack-SVD}}
\newcommand{\svdstack}{\texorpdfstring{\texttt{SVD-Stack}\xspace}{SVD-Stack}}
\definecolor{forestgreen}{RGB}{34,139,34}
\begin{document}

\begin{frontmatter}
\title{Stacked SVD or SVD stacked? 
A Random Matrix Theory perspective on data integration
}

\runtitle{Stacked SVD or SVD stacked?}

\begin{aug}

\author[A,B]{\fnms{Tavor Z.}~\snm{Baharav}\thanksref{equal}\ead[label=e1]{baharav@broadinstitute.org}\orcid{0000-0001-8924-0243}},
\author[B,C]{\fnms{Phillip B.}~\snm{Nicol}\thanksref{equal}\ead[label=e2]{phillipnicol@g.harvard.edu}\orcid{0000-0002-8526-5889}},
\author[B,C]{\fnms{Rafael A.}~\snm{Irizarry}\ead[label=e3]{rafael\_irizarry@dfci.harvard.edu}\orcid{0000-0002-3944-4309}} \and
\author[A,B,C]{\fnms{Rong}~\snm{Ma}\thanksref{corr}\ead[label=e4]{rongma@hsph.harvard.edu}\orcid{0000-0002-9248-8503}}
\thankstext{equal}{\textbf{Equal contributions, listed alphabetically.}}
\thankstext{corr}{\textbf{Corresponding author.}}

\address[A]{Eric and Wendy Schmidt Center, Broad Institute, Cambridge, MA, 02142, USA\printead[presep={,\ }]{e1}}
\address[B]{Department of Data Science, Dana Farber Cancer Institute, Boston, MA, 02115, USA\printead[presep={,\ }]{e3}}
\address[C]{Department of Biostatistics, Harvard T.H. Chan School of Public Health, Boston, MA, 02115, USA\printead[presep={,\ }]{e2,e4}}
\end{aug}

\begin{abstract}
    Modern data analysis increasingly requires identifying shared latent structure across multiple high-dimensional datasets. 
    A commonly used model assumes that the data matrices are noisy observations of low-rank matrices with a shared singular subspace. In this case, two primary methods have emerged for estimating this shared structure, which vary in how they integrate information across datasets. 
    The first approach, termed \stacksvd, concatenates all the datasets, and then performs a singular value decomposition (SVD).
    The second approach, termed \svdstack, first performs an SVD separately for each dataset, then aggregates the top singular vectors across these datasets, and finally computes a consensus amongst them.
    While these methods are widely used, they have not been rigorously studied in the proportional asymptotic regime, which is of great practical relevance in today's world of increasing data size and dimensionality.
    This lack of theoretical understanding has led to uncertainty about which method to choose and limited the ability to fully exploit their potential.
    To address these challenges, we derive exact expressions for the asymptotic performance and phase transitions of these two methods and develop optimal weighting schemes to further improve both methods.
    Our analysis reveals that while neither method uniformly dominates the other in the unweighted case, optimally weighted \stacksvd dominates optimally weighted \svdstack.
    We extend our analysis to accommodate multiple shared components, and provide practical algorithms for estimating optimal weights from data, offering theoretical guidance for method selection in practical data integration problems.
    Extensive numerical simulations and semi-synthetic experiments on genomic data corroborate our theoretical findings.
\end{abstract}

\begin{keyword}[class=MSC]
\kwd[Primary ]{15A18}
\kwd{62H25}
\end{keyword}

\begin{keyword}
 \kwd{Data integration}
 \kwd{Random matrix theory}
\kwd{Spectral methods}
\end{keyword}

\end{frontmatter}

\section{Introduction}

Modern biomedical data analyses often aim to integrate high-dimensional datasets obtained from diverse sets of assays or samples.
For example, single-cell RNA-sequencing (scRNA-seq) data measures the expression of $d$ genes across $n$ individual cells \cite{saliba2014single}.
A scRNA-seq study of $M$ patients, where patient $i$ has $n_i$ cells, can be represented by matrices $X_1, \ldots, X_M$, with $X_i \in \R^{n_i \times d}$.
It is often of interest to identify latent structure shared among the $M$ patients as this could reveal novel mechanisms associated with disease or other biological processes \cite{zhang2024recovery}.
This shared structure is commonly modeled by assuming the matrices $X_m$ are noisy observations of low-rank matrices with a shared right singular subspace $V \in \R^{d \times r}$, with $r \ll d$ \cite{argelaguet2020mofa+, shen2012sparse,de2021bayesian,grabski2023bayesian,zhang2023scmomat}.

In the case of a single data matrix $X \in \R^{n \times d}$, the truncated singular value decomposition (SVD) produces $\hat{U} \in \R^{n \times r}$, $\hat{\Theta} \in \R^{r \times r}$ and $\hat{V} \in \R^{d \times r}$, that minimizes the Frobenius norm reconstruction error with $\hat{X} = \hat{U}\hat{\Theta}\hat{V}^\top$ over all rank $r$ matrices \cite{eckart1936approximation}.
It also produces the maximum likelihood estimates for each entry under a model that assumes that $X = U \Theta V^{\top} + E$, where $E \in \R^{n \times d}$ is a matrix of independent Gaussian noise, and $\Theta\in \R^{r \times r}$ is the diagonal matrix of singular values.

In this paper, we investigate and compare through careful theoretical analyses two commonly used extensions of SVD (depicted graphically in \Cref{fig:tikz_graphic_1}) designed to estimate a subspace $V$ shared among multiple noisy data matrices. The first, \stacksvd, begins by vertically stacking the matrices into one large matrix $X_{\text{stack}} \in \R^{(n_1 + \ldots + n_M) \times d}$ and then performs an SVD to obtain the estimate $\hat{V}_\stacksvd \in \R^{d \times r}$.
The second, termed \svdstack, performs these operations in the opposite order.
A truncated SVD is first applied to each matrix individually to obtain $\hat{V}_1, \ldots, \hat{V}_M \in \R^{d \times r}$.
Then, the matrices $\hat{V}_i^{\top}$ are vertically stacked to produce $\tilde{V} \in \R^{Mr \times d}$.
Finally, the leading $r$ right singular vectors of $\tilde{V}$ are taken as the estimate $\hat{V}_\svdstack \in \R^{d \times r}$. 
\svdstack can be interpreted as ``averaging'' the $\hat{V}_i$, since the estimator can be equivalently defined as the $r$ leading eigenvectors of $\hat{V}_1 \hat{V}_1^{\top} + \ldots + \hat{V}_M \hat{V}_M^{\top}$. We note that there exists a broader literature on data integration methods that are not directly related to \stacksvd and \svdstack.
Examples include canonical correlation analysis (CCA) \cite{stuart2019comprehensive}, partial least squares (PLS) \cite{wold1985partial}, generalized SVD (gSVD) \cite{van1976generalizing}, kernel SVD \cite{ding2024kernel,landa2024entropic} (these four being only applicable in the case of $M=2$), Bayesian methods \cite{de2021bayesian, grabski2023bayesian}, and methods based on the product of individual projection matrices \cite{sergazinov2024spectral}.

\begin{figure}
\vspace{-.3cm}
    \centering
\begin{tikzpicture}[
    font=\sffamily,
    node distance=2.0cm,
    >=Latex, 
    align=center  
]

\node[draw,
      rectangle,
      minimum width=2cm,
      minimum height=0.8cm] (data1) {\(X_1\)};
\node[above=0.25cm of data1] (Data) {Data};
\node[draw,
    rectangle,
    below=0.5cm of data1,
    minimum width=2cm,
    minimum height=0.8cm] (data2) {\(X_2\)};
\node[below=0cm of data2] (dots) {\(\vdots\)};
\node[draw,
    rectangle,
    below=0.1cm of dots,
    minimum width=2cm,
    minimum height=0.8cm] (dataM) {\(X_M\)};

\node[draw, rectangle,
    above right=-.5cm and 1.5cm of data1,
    text width=2cm,
    minimum height=1.2cm] (stack) {$X_1$ \\ \(\vdots\) \\ $X_M$};
\node[rectangle,    right=1.1cm of stack,  text width=1.4cm,
      minimum height=1.2cm] (vhatTop) {\(\hat{V}_{\stacksvd}\)};

    \node[
          above right=.5cm and 2cm of dataM,
          ] (svdeach1) {\(\hat{V}_1\)};
     \node[
          below=0.5cm of svdeach1,
          ] (svdeach2) {\(\hat{V}_2\)};
     \node[below=0cm of svdeach2] (dots2) {\(\vdots\)};
     \node[
          below=0.1cm of dots2,
          ] (svdeachM) {\(\hat{V}_M\)};
\node[above=0.05cm of svdeach1] (svdlabel) {Per-matrix \\ SVD};
      
    \node[draw, rectangle,
        right=1cm of svdeach2,
        text width=2cm,
        minimum height=1.2cm] (vstack) 
        {$\hat{V}_1^\top$ \\ \(\vdots\) \\ $\hat{V}_M^\top$};

\node[rectangle,
      right=1.1cm of vstack,
      text width=1.4cm,
      minimum height=1.2cm] (vhatBot) {\(\hat{V}_{\svdstack}\)};

\draw[->] (data1.east) -- ++(0.5,0) -- (stack.west);
\draw[->] (data2.east) -- ++(0.6,0) -- (stack.west);
\draw[->] (dataM.east) -- ++(0.7,0) -- ++(0,1.5) -- (stack.west);
\draw[->] (stack.east) -- (vhatTop.west) node[midway, above] {SVD};

\draw[->] (data1.east) -- ++(0.5,0) |- (svdeach1.west) node[pos=0.75, above] {SVD};
\draw[->] (data2.east) -- ++(0.6,0) |- (svdeach2.west) node[pos=0.75, above] {SVD};
\draw[->] (dataM.east) -- ++(0.7,0) |- (svdeachM.west) node[pos=0.75, above] {SVD};

\node[right=.5cm of svdeach2] (dummy_node) {};
\draw[-] (svdeach1.east) -| (dummy_node.west);
\draw[-] (svdeach2.east) -- (dummy_node.west);
\draw[-] (svdeachM.east) -| (dummy_node.west);
\draw[->] (dummy_node.west) -- (vstack.west);
\draw[->] (vstack.east) -- (vhatBot.west) node[midway, above] {SVD};

\node[above=3pt of stack] {$X_{\textnormal{Stack}}$};
\node[above=3pt of vstack] {$\tilde{V}$};

\end{tikzpicture}
\vspace{-.2cm}
    \caption{Schematic showing the methods \stacksvd and \svdstack.}
    \label{fig:tikz_graphic_1}
    \vspace{-.3cm}
\end{figure}

Our focus on \stacksvd and \svdstack is motivated by their broad practical relevance and growing impact in modern data science.
Despite widespread use across diverse applications, the theoretical properties of these approaches, and variations thereof, have only recently begun to receive systematic attention.
For example, methods based on \stacksvd have been applied to integrate multi-patient single-cell RNA-seq data \cite{hie2019efficient}, multimodal genomic data \cite{shen2012sparse, argelaguet2020mofa+, hie2019efficient}, as well as electronic health record (EHR) data \cite{gan2025arch}.
On the theoretical front, the minimax rate-optimality of \stacksvd under the aforementioned model was recently established \cite{ma2024optimal}, provided that the data matrices are of comparable sizes or their signal strengths are sufficiently large. 

Similarly, methods based on \svdstack have been used across varied applications, including federated unsupervised learning \cite{shi2024personalized}, distributed unsupervised learning \cite{fan2019distributed,zheng2022limit,bhaskara2019distributed,jou2024generalized,he2024distributed}, ensemble learning \cite{ma2023spectral,danning2025lace},  integrative single-cell genomics \cite{ozbay2023navigating, zhang2024recovery}, and reference-free sample clustering in genomics \cite{chaung2023splash,baharav2024oasis}.
Several statistical methods are equivalent or closely related to \svdstack such as angle-based joint and individual variation explained (AJIVE) \cite{yang2025estimating}, distributed PCA \cite{fan2019distributed}, and the common subspace independent edge (COSIE) model \cite{arroyo2021inference}, which also considers weighting higher signal matrices.
Regarding \svdstack, existing theoretical analyses generally focus on obtaining finite sample bounds of performance as well as rates of convergence.
For example, \cite{zheng2022limit} studied the limiting distribution of estimators of this form under the assumption of equal sample sizes across all matrices;
\cite{yang2025estimating} extended these results to the setting of AJIVE, which permits individual-specific components to be present in the data matrices under certain assumptions, yet its minimax optimality result was established in the setting with identical sample sizes.

In line with prior studies, a deeper theoretical understanding of \stacksvd and \svdstack and their tradeoffs would help researchers more effectively leverage information across different datasets. In modern applications both $n_i$ and $d$ are typically large, and $n_i$ may vary substantially across data matrices; for example, scRNA-seq typically measure tens of thousands of genes across thousands to tens of thousands of cells \cite{hie2019efficient,ma2024principled}.
It is thus reasonable and of practical interest to study the performance of \stacksvd and \svdstack under the \textit{proportional} regime where $n_i \to \infty$, $d \to \infty$ in such a way that $n_i/d \to c_i \in (0, \infty)$ for possibly distinct $c_i$.
In this regime, we show that the exact asymptotic performance of these methods can be characterized and compared in a \emph{pointwise} manner. In contrast, the minimax lower and upper bounds obtained by the aforementioned works \cite{fan2019distributed,ma2024optimal,yang2025estimating,zheng2022limit}, based on non-asymptotic analyses, mostly contrast different methods in terms of their worse-case performance.
This offers valuable theoretical insights, but limited  guidance on method selection for specific practical scenarios. 
While a large body of work has studied the spectral properties of a single data matrix under the proportional regime within the Random Matrix Theory framework \cite{tao2012topics, livan2018introduction}, the behavior of \svdstack, \stacksvd, and their variations under this setting remains largely unexplored. The only related work we are aware of is \citep{hong2023optimally}, which analyzes optimal weighted PCA under a heteroscedastic factor model—a setting that resembles a weighted version of \stacksvd. To our knowledge, no analogous theoretical results exist for \svdstack.

This paper provides theoretical insights into these two families of data integration methods through a systematic analysis using recent advances in Random Matrix Theory.
Our major contributions are summarized in the following, with additional details in \Cref{tab:results_summary} below.

\begin{itemize}
    \item We derive the limit of $|| \hat{V}^{\top} V||_F$ for both \stacksvd and \svdstack under a joint signal-plus-noise model where $n_i / d \to c_i$. We establish convergence in probability to the limiting forms, which we term the \emph{asymptotic performance} of a method. 
    Our framework provides a precise asymptotic characterization of these methods under possibly unbalanced sample sizes and varying signal strengths across data matrices, allowing for general noise distributions.
    Our analyses reveal phase transitions in performance as a function of the per matrix signal strength, and its interaction with the dimensionality and sample sizes. 
    \item To better account for the varying signal strengths across different data matrices, we consider \textit{weighted} variants of \stacksvd and \svdstack where each matrix is scaled prior to performing the SVD.
    We derive the asymptotically optimal weightings, along with the associated performance limits and phase transition behavior for both weighted \stacksvd and weighted \svdstack.
    \item We characterize the relationships between the asymptotic performances of these different methods in a pointwise sense (see \Cref{tab:tikz_comparison_summary}), offering a more nuanced perspective that complements prior minimax analyses \cite{yang2025estimating,ma2024optimal,shi2024personalized}. We show that without weights, either \svdstack or \stacksvd can outperform the other, depending on the problem instance. In the weighted case, however, we show that optimally weighted \stacksvd uniformly outperforms weighted \svdstack. We also show that even optimally binary-weighted \stacksvd can be far from optimal, and that there exists a sequence of instances where optimally weighted \stacksvd has performance going to 1, but optimally binary-weighted \stacksvd, optimally weighted \svdstack, unweighted \stacksvd, and unweighted \svdstack all fall below the threshold of detectability.
\end{itemize}
To establish these statistical insights, we develop novel theoretical techniques that are broadly applicable. In particular, the analysis of  \svdstack  draws on recent results concerning the {delocalization} of  eigenvector residuals for matrices under general noise conditions, while the analysis of  \stacksvd  builds on recent advances in studying spiked eigenvalue problems with heteroscedastic noise. See \Cref{app:weightedSVDStackProof} and \Cref{app:weightedStackSVDProof} for more details.

The rest of this paper is structured as follows.
In \Cref{sec:model_setup} we introduce the joint signal-plus-noise model and formally define the (weighted) \stacksvd and \svdstack estimators.
In \Cref{sec:unweighted_analysis} we compute the performance for \svdstack and \stacksvd in the setting with rank one signal, establishing convergence in probability to the stated limit.
In \Cref{sec:optimal_weighting} we analyze the weighted estimators and derive the asymptotically optimal weightings and their associated performances.
In \Cref{sec:relative_performance}, we compare the relative performances of our estimators, and identify the specific settings in which each method exhibits superior performance. \Cref{sec:extensions} generalizes our findings to arbitrary finite signal rank, leveraging the rank one results.
In \Cref{sec:experiments} we perform an extensive synthetic simulation study that corroborates our theoretical findings.
Finally, \Cref{sec:single_cell} demonstrates the practical utility of the optimally weighted integration methods for single-cell data integration on a semi-synthetic scRNA-seq dataset.

{\everymath{\displaystyle}
\begin{table}[t]
\centering
\renewcommand{\arraystretch}{1.5}  
\begin{tabular}{|>{\centering\arraybackslash}m{1.6cm}|>{\centering\arraybackslash}m{1.7cm}|>{\centering\arraybackslash}m{2.5cm}|>{\centering\arraybackslash}m{6.8cm}|}
\hline
Method & Component & Unweighted & Weighted \\ \hline
\multirow{4}{1.6cm}{\centering \stacksvd}  
& Theorem & \Cref{prop:stacksvd_general} & \Cref{thm:stacksvd_weighted} \\ \cline{2-4}
& Weights & N/A & \vspace{5pt}$w_i\opt = \frac{\theta_i}{\sqrt{\theta_i^2 + c_i}}$ \vspace{5pt}\\ \cline{2-4}
& Detectability Threshold
& $\|\theta\|_2^4/ \sum_i c_i  \ge 1$ 
& $\sum_i \theta_i^4/c_i \ge 1$ \\ \cline{2-4}
& Performance & \vspace{5pt} $\frac{\|\theta\|_2^4-\sum_i c_i}{ \|\theta\|_2^2(\|\theta\|_2^2 + 1)}$ \vspace{5pt} &\vspace{5pt} $\displaystyle x\opt \in (0,1) \text{ s.t. }\sum_{i=1}^M \theta_i^4 \frac{1-x\opt}{c_i + \theta_i^{2} x\opt} = 1$ \vspace{5pt}\\ 
\hline
\multirow{4}{*}[-2ex]{\svdstack}  
& Theorem & \Cref{thm:svd_stack_general} & \Cref{thm:svdstack_weighted} \\ \cline{2-4}
& Weights & N/A & \vspace{5pt} $w_i\opt = 
\theta_i\sqrt{\frac{\theta_i^2+1}{\theta_i^2 + c_i}}
$ \vspace{5pt}\\ \cline{2-4}
& Detectability Threshold & $\beta_2 > 0$ & $\beta_1>0 \quad \iff \quad \max_i \, \theta_i^4/c_i \ge 1$ \\ \cline{2-4}
& Performance & \vspace{5pt} $\frac{\left( \beta^\top v_{\text{max}}\left(A_\beta\right) \right)^2}{\lambda_\text{max}\left(A_\beta\right)}$ \vspace{1pt}& \vspace{5pt}$\frac{S}{S+1}=1-\left(1+\sum_i \frac{\theta_i^4 - c_i}{\theta_i^2 + c_i} \mathds{1}\left\{ \theta_i^4 > c_i \right\} \right)^{-1}$ \vspace{5pt}\\ \hline
\end{tabular}
\vspace{0.2cm}  
\caption{Results for \stacksvd and \svdstack specialized to the rank-one setting where $X_i = \theta_i u_i v^\top + E_i$, under noise conditions (\Cref{assum:general_noise}) and RMT scaling (\Cref{assum:main}).
$\beta_i$ is defined in \Cref{prop:single_table}, $A_\beta$ in \eqref{eq:A_beta_main_text}, and $S$ in \Cref{thm:stacksvd_weighted}.
$\theta$ and $\beta$ denote vectors with entries $\theta_i$ and $\beta_i$ resp.
}
\label{tab:results_summary}
\end{table}
}

\section{Problem formulation} \label{sec:model_setup}

We assume that each data matrix $X_i \in \R^{n_i \times d}$ is a noisy observation of a low-rank matrix with a shared right singular subspace. That is,
\begin{equation} \label{eq:model_rankr}
    X_i = U_i \Theta_i V^{\top} + E_i,
\end{equation}
where $U_i \in \mathbb{O}(n, r)$, $\Theta_i  = \text{diag}(\theta_{i1}, \ldots, \theta_{ir}) \in \R^{r \times r}_{\ge 0}$ is a matrix of singular values, and $V \in \mathbb{O}(d, r)$.
Here $\mathbb{O}(n,r)$ denotes the set of $n \times r$ matrices with orthonormal columns.
The set $[M]$ is defined as $[M] := \{1,2,\hdots,M\}$.
Throughout, we use ``table'' and ``data matrix'' interchangeably.
The unit sphere in $d$ dimensions is defined as $S^{d-1}$.
We use the standard notation that $v_{\text{max}}(X)$ is a principal right singular vector of the matrix $X$.
$\sigma_i(X)$ and $\lambda_i(X)$ denote the $i$-th largest singular value and eigenvalue of a matrix $X$, respectively, and $\sigma_{\text{max}}(X) := \sigma_1(X)$ and $\lambda_\text{max}(X) := \lambda_1(X)$.
We use $\mathds{1}$ to denote the indicator function.
We use standard order-statistic notation, where $x_{(i)}$ denotes the $i$ largest element of a vector $x$.
$\|x\|_p$ denotes the $\ell_p$ norm of a vector $x$, with $\|x\|=\|x\|_2$ if not otherwise specified.
We use $\pto$ to denote convergence in probability of a sequence of random variables.

Throughout this paper, we will assume that entries $z_{i,j,k}$ of $E_i \in \R^{n_i \times d}$ are independent and identically distributed, with $\sqrt{d} z_{i,j,k}$ having 0 mean, variance $1$, and bounded fourth moment. 
\begin{assum} \label{assum:general_noise}
    We assume that the noise matrices $E_i \in \R^{n_i \times d}$ have i.i.d. entries $z_{i,j,k}$ for $i \in [M], j \in [n_i]$, $k \in [d]$ such that for some constant $C$: 
    \begin{equation*}
        \E z_{i,j,k} = 0 \quad , \quad 
        \E \left(\sqrt{d} z_{i,j,k}\right)^2 = 1 \quad , \quad 
        \E \left(\sqrt{d} z_{i,j,k}\right)^4 \le C.
    \end{equation*}
\end{assum}
To simplify our presentation, we first focus on the rank 1 case, and discuss the natural extension to rank $r>1$ in \Cref{sec:extensions}.
\Cref{eq:model_rankr} can then be simplified as: 
\begin{equation}\label{eq:model_rank1}
    X_i = \theta_i u_i v^{\top} + E_i.
\end{equation}
We study this model under the proportional asymptotic regime: 
\begin{equation} \label{eq:RMT_limit}
    n_1,\ldots,n_M,d \to \infty  \quad \text{ and } \quad n_i/d \to c_i \in (0,\infty) \ \forall i.
\end{equation}
In this rank one setting, we will write $\theta = (\theta_1, \ldots, \theta_M) \in \R^M$ to denote the vector of signal strengths and $c = (c_1, \ldots, c_M) \in \R^M$ to denote the vector of aspect ratios. 

\begin{assum}\label{assum:main}
   We assume that the matrices $\{X_i\}$ are generated according to \Cref{eq:model_rank1}, and satisfy the scaling limits in \Cref{eq:RMT_limit}.
\end{assum}

Under these assumptions, the behavior of the leading eigenvector/singular vector has been characterized by existing work \cite{Baik2005, paul2007asymptotics, benaych2012singular, liu2023asymptotic, johnstone2018pca,10.3150/19-BEJ1129}. We summarize a key result about $\hat{v}_i := v_{\text{max}}(X_i)$ in \Cref{prop:single_table} below, specializing Theorem 1 of \cite{liu2023asymptotic} to our setting.

\begin{prop}\label{prop:single_table}
    Under \Cref{assum:general_noise,assum:main}, we have
    \begin{equation*}
        |\langle \hat{v}_i, v \rangle|^2 \ip \beta_i^2 :=  \begin{cases}
        \frac{\theta_i^4 - c_i}{\theta_i^4 + \theta_i^2} & \text{ if } \theta_i \geq c_i^{1/4}, \\
        0 & \text{ otherwise.}
        \end{cases}
    \end{equation*}

\noindent Furthermore, for any deterministic sequence of unit vectors $w^{(d)}$ orthogonal to $v$:
    \begin{equation*}
        |\langle \hat{v}_i, w^{(d)} \rangle|^2 \ip 0.
    \end{equation*}
\end{prop}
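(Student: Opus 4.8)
The plan is to derive Proposition~\ref{prop:single_table} by reducing the singular-vector problem for the rectangular matrix $X_i = \theta_i u_i v^\top + E_i$ to a spiked eigenvalue problem for the Gram-type matrix $\frac{1}{1}X_i^\top X_i \in \R^{d\times d}$, and then to quote the relevant BBP-type phase-transition result (Theorem~1 of \cite{liu2023asymptotic}, specialized to rank one) together with a standard delocalization argument for the orthogonal-component claim. Concretely, first I would recall that $\hat v_i = v_{\mathrm{max}}(X_i)$ is the top eigenvector of $W_i := X_i^\top X_i$, and that $W_i$ is a rank-one additive (or multiplicative, depending on how one bookkeeps the cross terms $\theta_i v u_i^\top E_i$) perturbation of the sample covariance $E_i^\top E_i$, whose empirical spectral distribution under Assumption~\ref{assum:general_noise} converges to the Marchenko–Pastur law with ratio $c_i$ (here using that $\sqrt d\, z_{i,j,k}$ has unit variance and bounded fourth moment, which is exactly what is needed for the MP law plus edge universality / no outlier from the bulk). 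The signal strength parameter in \cite{liu2023asymptotic} translates to $\theta_i$ in our normalization, the critical threshold becomes $\theta_i = c_i^{1/4}$, and the overlap formula specializes to $\beta_i^2 = (\theta_i^4 - c_i)/(\theta_i^4 + \theta_i^2)$ above threshold and $0$ below; I would spell out this change of variables carefully, since getting the normalization right (the paper scales so that $E_i$ has operator norm $O(1)$, not $O(\sqrt d)$) is the one place where an off-by-a-factor error would propagate through the whole paper.

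For the second statement, that $|\langle \hat v_i, w^{(d)}\rangle|^2 \ip 0$ for any deterministic unit sequence $w^{(d)} \perp v$, I would use the resolvent / Cauchy-integral representation of the spectral projector onto the top eigenvector. Writing $\hat\lambda_i$ for the top eigenvalue of $W_i$, which converges to the deterministic outlier location $\rho(\theta_i, c_i)$ strictly outside the support of the MP bulk (on the event $\theta_i > c_i^{1/4}$), the projector $\hat v_i \hat v_i^\top = -\frac{1}{2\pi i}\oint_\Gamma (W_i - z)^{-1}\, dz$ for a small contour $\Gamma$ around $\hat\lambda_i$. Then $\langle w^{(d)}, \hat v_i\hat v_i^\top w^{(d)}\rangle$ is controlled by $\langle w^{(d)}, (W_i - z)^{-1} w^{(d)}\rangle$ along $\Gamma$, and the key input is an isotropic / anisotropic local law for the resolvent of $E_i^\top E_i$ (or a direct second-moment computation using that $w^{(d)}$ is orthogonal to the planted direction $v$ and is deterministic, so $E_i w^{(d)}$ is an isotropic noise vector independent of the part of the geometry that creates the outlier). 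This yields $\langle w^{(d)}, (W_i - z)^{-1} w^{(d)}\rangle \to m(z)$, the deterministic Stieltjes transform, which is analytic at $z = \rho(\theta_i,c_i)$ and therefore has vanishing contour integral around that point — giving the claimed limit $0$. Equivalently, one can invoke the eigenvector-delocalization results for the residual of the top eigenvector that the paper says it will use for \svdstack (mentioned just after the contributions list), which directly state that the component of $\hat v_i$ orthogonal to $v$ is delocalized, hence has asymptotically negligible overlap with any fixed direction.

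The main obstacle, I expect, is not conceptual but bookkeeping and citation-matching: Theorem~1 of \cite{liu2023asymptotic} is presumably stated for a particular normalization and a particular form of spike (e.g. additive signal in the data matrix, or a spiked population covariance), and one must verify that our model \eqref{eq:model_rank1} with the noise scaling of Assumption~\ref{assum:general_noise} falls within its hypotheses — in particular that only a finite-moment condition (fourth moment) is assumed rather than Gaussianity, that $c_i$ may be on either side of $1$, and that the deterministic vectors $u_i, v$ are arbitrary (no incoherence needed beyond being unit vectors, which is automatic). A secondary subtlety is the orthogonal-component claim: the cleanest route requires an anisotropic local law uniform over deterministic test vectors, and one should make sure the cited reference provides this rather than only a law for the resolvent trace; if not, a direct moment computation of $\E |\langle E_i^\top E_i\,\text{-resolvent}\, w^{(d)}, w^{(d)}\rangle - m(z)|^2$ combined with a union bound over a fine grid on $\Gamma$ and a Lipschitz bound in $z$ will do the job, at the cost of a few lines of routine estimates that I would relegate to the appendix.
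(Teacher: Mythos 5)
Your proposal is correct and coincides with what the paper actually does: the paper offers no independent proof of Proposition~\ref{prop:single_table} at all, stating only that it is ``specializing Theorem 1 of \cite{liu2023asymptotic} to our setting,'' with the second (orthogonal-direction) claim being exactly part 2 of that same theorem, which the paper later invokes verbatim in the proof of Lemma~\ref{lem:delocalization}. Your additional sketch of the underlying resolvent/isotropic-local-law machinery and the normalization check goes beyond what the paper records but is the standard route and contains no errors that would affect the stated limits.
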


From the above proposition, for each data matrix $X_i$, the asymptotic performance $\beta_i^2$ of $\hat v_i$ undergoes a phase transition as the signal strength $\theta_i$ increases. Specifically, $c_i^{1/4}$ serves as the \textit{detectability threshold} for $\theta_i$: when $\theta_i< c_i^{1/4}$, $\hat v_i$ is asymptotically orthogonal to $v$. The second result states that $\hat{v}_i$ will have a limiting inner product of $0$ with any fixed vector orthogonal to $v$.

As discussed, two classes of methods have been popularly studied in this setting ( \Cref{fig:tikz_graphic_1}).
We now formally define them.
The first, termed \stacksvd, concatenates all the matrices, and performs an SVD on the concatenated matrix $X_\text{Stack} \in \R^{(n_1+...+n_M)\times d}$.
This method can optionally be weighted during the integration step, as is defined below for a weight vector $w \in \R^M_{\ge 0}$:
\begin{equation}\label{stacksvd.def}
    \stacksvd(X_1, \ldots, X_M ; w) = v_{\text{max}} \left( \begin{bmatrix} w_1X_1 \\ \vdots \\ w_M X_M \end{bmatrix} \right) := \hat{v}_\stacksvd(w).
\end{equation}

The second method, termed \svdstack, performs an SVD on each matrix separately to compute $\hat{v}_i = v_{\text{max}}(X_i)$.
It then stacks these singular vectors into a matrix $\tilde{V}\in\R^{M\times d}$, and performs an SVD to obtain $\hat{v}_\svdstack$.
This method can also optionally be weighted:
\begin{equation}\label{svdstack.def}
    \svdstack(X_1, \ldots, X_M ; w) = v_{\text{max}} \left( \begin{bmatrix} w_1\hat{v}_1^\top \\ \vdots \\ w_M \hat{v}_M^\top \end{bmatrix} \right) := \hat{v}_\svdstack(w).
\end{equation}

Unless otherwise specified, we assume the unweighted versions of each method, i.e. $w = (1, \ldots, 1) \in \R^M$. Furthermore, we will refer to the estimators simply as $\hat{v}_\stacksvd$ and $\hat{v}_\svdstack$ when the choice of $w$ is clear from context.

\section{Analysis of unweighted \stacksvd and \svdstack} \label{sec:unweighted_analysis}

To build intuition, we begin by analyzing the performance of the two methods in the simple setting where the matrices have the same signal strength and the same asymptotic size.

\begin{cor}\label{thm:simple_thm1}
    Suppose $c_i=c_0$ and $\theta_i=\theta_0$ for all $i$.
    Under \Cref{assum:general_noise,assum:main}, the performance of \stacksvd and \svdstack is given by:

\begin{align*}
    | \langle \hat{v}_\svdstack, v \rangle |^2 &\pto \begin{cases}
              1 - \frac{c_0 +  \theta_0^2}{M \theta_0^4 + \theta_0^2 - (M-1)c_0} & \text{ if } \theta_0 \geq c_0^{1/4}, \\
              0 & \text{ otherwise.}
             \end{cases}\\
    | \langle \hat{v}_\stacksvd, v \rangle |^2 &\pto \begin{cases}
                1 - \frac{c_0 +  \theta_0^2}{M \theta_0^4 + \theta_0^2} & \text{ if } \theta_0 \geq M^{-1/4} c_0^{1/4}, \\
                0 & \text{ otherwise.}
            \end{cases}
\end{align*}
\end{cor}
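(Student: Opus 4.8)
The plan is to reduce \stacksvd to a single rank-one spiked matrix and \svdstack to a fixed-size ($M\times M$) eigenvalue problem, and to feed both into \Cref{prop:single_table}. Throughout write $\beta_0^2 := \frac{\theta_0^4-c_0}{\theta_0^4+\theta_0^2}\,\mathds{1}\{\theta_0\ge c_0^{1/4}\}$ for the common limiting value of each $|\langle\hat v_i,v\rangle|^2$ (so $\beta_0=0\iff\theta_0\le c_0^{1/4}$). For \stacksvd: stacking \eqref{eq:model_rank1} gives $X_{\mathrm{Stack}}=\theta_0(u_1^\top,\dots,u_M^\top)^\top v^\top+E_{\mathrm{Stack}}$, where $E_{\mathrm{Stack}}=(E_1^\top,\dots,E_M^\top)^\top$ still has i.i.d.\ entries obeying \Cref{assum:general_noise} (same constant) and aspect ratio $(\sum_i n_i)/d\to Mc_0$. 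Since $(u_1^\top,\dots,u_M^\top)^\top$ has squared norm $M$, after normalization $X_{\mathrm{Stack}}$ is again of the form \eqref{eq:model_rank1} with signal strength $\sqrt M\,\theta_0$ and aspect ratio $Mc_0$; applying \Cref{prop:single_table} to it yields the threshold $\sqrt M\,\theta_0\ge(Mc_0)^{1/4}\iff\theta_0\ge M^{-1/4}c_0^{1/4}$ and, above it,
\[
|\langle\hat v_{\stacksvd},v\rangle|^2\;\pto\;\frac{(\sqrt M\,\theta_0)^4-Mc_0}{(\sqrt M\,\theta_0)^4+(\sqrt M\,\theta_0)^2}\;=\;1-\frac{c_0+\theta_0^2}{M\theta_0^4+\theta_0^2}.
\]

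For \svdstack, I would first fix the sign of each $\hat v_i$ so that $a_i:=\langle\hat v_i,v\rangle\ge0$, set $a:=(a_1,\dots,a_M)^\top$, and let $\tilde V\in\R^{M\times d}$ have rows $\hat v_i^\top$. Passing to the Gram matrix $K:=\tilde V\tilde V^\top=(\langle\hat v_i,\hat v_j\rangle)_{i,j}\in\R^{M\times M}$ and letting $(\lambda,\mu)$ be a leading eigenpair with $\|\mu\|=1$, we have $\lambda=\lambda_{\max}(K)\ge\frac1M\operatorname{tr}(K)=1$ and, since $\|\tilde V^\top\mu\|^2=\mu^\top K\mu=\lambda$, the identity $\hat v_{\svdstack}=\lambda^{-1/2}\tilde V^\top\mu=\lambda^{-1/2}\sum_i\mu_i\hat v_i$, hence
\[
|\langle\hat v_{\svdstack},v\rangle|^2=\frac{(\mu^\top a)^2}{\lambda}.
\]
By \Cref{prop:single_table}, $a\pto\beta_0\mathbf{1}$ and $K$ has unit diagonal; the crucial additional input — discussed last — is that $\langle\hat v_i,\hat v_j\rangle\pto\beta_0^2$ for $i\ne j$, so that (as $M$ is fixed) $K\pto K_\infty:=(1-\beta_0^2)I_M+\beta_0^2\mathbf{1}\mathbf{1}^\top$ in operator norm.

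Granting this, the two regimes follow. If $\theta_0\le c_0^{1/4}$ then $\beta_0=0$, so $\|a\|^2\pto0$ and $|\langle\hat v_{\svdstack},v\rangle|^2\le\|a\|^2/\lambda\le\|a\|^2\pto0$, which is the claimed value (the stated fraction equals $0$ at $\theta_0=c_0^{1/4}$). If $\theta_0>c_0^{1/4}$ then $\beta_0>0$ and $K_\infty$ has a simple leading eigenvalue $1+(M-1)\beta_0^2$ with unit eigenvector $M^{-1/2}\mathbf{1}$ (its remaining eigenvalues all equal $1-\beta_0^2$); by continuity of the leading eigenvalue and — it being simple — of the leading eigenvector, $\lambda\pto1+(M-1)\beta_0^2$ and $\mu\pto M^{-1/2}\mathbf{1}$ (sign chosen so $\mathbf{1}^\top\mu\ge0$), so $(\mu^\top a)^2\pto M\beta_0^2$ and
\[
|\langle\hat v_{\svdstack},v\rangle|^2\;\pto\;\frac{M\beta_0^2}{1+(M-1)\beta_0^2}\;=\;1-\frac{c_0+\theta_0^2}{M\theta_0^4+\theta_0^2-(M-1)c_0},
\]
the last step a routine algebraic simplification using $\beta_0^2=\frac{\theta_0^4-c_0}{\theta_0^4+\theta_0^2}$.

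The remaining claim, $\langle\hat v_i,\hat v_j\rangle\pto\beta_0^2$ for $i\ne j$, is where I expect the real work to be. Writing the component of $\hat v_j$ orthogonal to $v$ as $\sqrt{1-a_j^2}\,w_j$ with $w_j$ a unit vector orthogonal to $v$, we have $\langle\hat v_i,\hat v_j\rangle=a_ia_j+\sqrt{1-a_j^2}\,\langle\hat v_i,w_j\rangle$. Since $X_i$ and $X_j$ are independent, I would condition on $X_j$ (which makes $w_j$ deterministic without changing the law of $X_i$) and invoke the delocalization half of \Cref{prop:single_table}, applied uniformly over deterministic unit vectors orthogonal to $v$, to get $\langle\hat v_i,w_j\rangle\pto0$; with $\sqrt{1-a_j^2}\le1$ this gives $\langle\hat v_i,\hat v_j\rangle-a_ia_j\pto0$, i.e.\ $\langle\hat v_i,\hat v_j\rangle\pto\beta_0^2$. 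The delicate point is that this conditioning step needs a \emph{uniform-in-$w$} delocalization of the eigenvector residual $\hat v_i-a_iv$ inside $v^\perp$, which is precisely the type of estimate the paper develops for its general \svdstack analysis (cf.\ \Cref{app:weightedSVDStackProof}); the rest — the spiked-matrix reduction for \stacksvd and continuity of a bounded-size eigenproblem for \svdstack — is routine.
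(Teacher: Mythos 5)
Your proposal is correct. For the \svdstack half you follow essentially the same route as the paper: pass to the $M\times M$ Gram matrix, establish $\langle\hat v_i,\hat v_j\rangle\pto\beta_0^2$ by conditioning on one matrix and invoking the delocalization half of \Cref{prop:single_table}, then use continuity of the simple leading eigenpair of $(1-\beta_0^2)I+\beta_0^2\mathbf{1}\mathbf{1}^\top$; this is exactly \Cref{lem:delocalization} plus the specialization of \Cref{thm:svd_stack_general} carried out in \Cref{app:SVD_stack_proofs}. One small remark: the ``uniform-in-$w$'' delocalization you flag as the delicate point is not actually required --- the paper's \Cref{lem:delocalization} gets by with the pointwise statement of \Cref{prop:single_table} applied to each conditioned realization $\hat w_2(\omega)$, followed by dominated convergence, so your conditioning sketch already contains the whole argument. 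Where you genuinely diverge is the \stacksvd half. The paper obtains it by specializing \Cref{prop:stacksvd_general}, whose proof runs through the heteroscedastic spiked-matrix machinery of \Cref{app:weightedStackSVDProof} (the secular equation for $R=\tilde u_0\tilde u_0^\top+\Sigma$ and the cited asymptotic results). You instead observe that in the unweighted, equal-parameter case the stacked matrix is itself exactly an instance of the rank-one model \eqref{eq:model_rank1}, with unit left vector $M^{-1/2}(u_1^\top,\dots,u_M^\top)^\top$, signal $\sqrt M\,\theta_0$, i.i.d.\ noise, and aspect ratio $Mc_0$, so \Cref{prop:single_table} applies verbatim. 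This is cleaner and more elementary for the statement at hand (and would in fact recover all of \Cref{prop:stacksvd_general}, since for general unweighted stacking the left vector still has norm $\|\theta\|_2$ and the noise stays homoscedastic); what it cannot do is handle the weighted case, where the stacked noise becomes heteroscedastic and the paper's general machinery is unavoidable. The algebra in both of your closed-form simplifications checks out.
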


These results are special cases of \Cref{thm:svd_stack_general} and \ref{prop:stacksvd_general} respectively, the general results we provide in the next section.
Both methods display the expected consistency: the performance of each approaches $1$ as $\theta_0$ or $M$ increase.
Considering fixed $M>1$, both methods have a performance of 0 below \stacksvd's recovery threshold of $\theta_0 > M^{-1/4} c_0^{1/4}$.
However, above this threshold, \stacksvd performs strictly better.
Note that for $\theta_0 \in (M^{-1/4} c_0^{1/4}, c_0^{1/4})$, \svdstack continues to fall below the recovery threshold, while \stacksvd attains positive performance (see \Cref{fig:cor_1_plot}).

\subsection{Unweighted \svdstack}
We now consider the case where $\theta_i$ can vary across matrices. 
Without loss of generality, we assume that $\beta_1 \geq \ldots \geq \beta_M$, enabling us to state our detectability condition as $\beta_2>0$.

To study \svdstack, in addition to the inner product between $\hat{v}_i$ and the ground truth $v$, we need to characterize the behavior of $\langle \hat{v}_i,\hat{v}_j\rangle$.
If the direction of the component of $\hat{v}_i$ orthogonal to $v$ were drawn uniformly at random from the unit sphere, independently across $i$, then $|\langle \hat{v}_i,\hat{v}_j\rangle|^2 \pto \beta_i^2\beta_j^2$. This result holds, for example, in the special case of i.i.d. Gaussian noise, as $(I - vv^{\top}) \hat{v}_i$ is essentially Haar distributed (after proper normalization, see Lemma 4.4 of \citep{loffler2021optimality}). In \Cref{lem:delocalization}, we leverage the results presented in \Cref{prop:single_table} to prove that this delocalization result also holds under our more general noise assumptions. In particular, this means that the estimated singular vectors from independent matrices are asymptotically uncorrelated except through their shared alignment with the true signal $v$.

Our analysis reduces the study of the $M \times d$ matrix $\tilde{V}$ to the $M \times M$ matrix $\tilde{V}\tilde{V}^\top \pto A_\beta$ (\Cref{lem:delocalization}).
In particular, with $\beta=(\beta_1,...,\beta_M)$:
\begin{align*} 
    A_\beta &:= \beta \beta^{\top} + \text{diag}(1-\beta_1^2, \ldots, 1-\beta_M^2) \\
    &= \begin{bmatrix}
        1 & \beta_1\beta_2 & \ldots\\
        \beta_1 \beta_2 & 1 & \ldots\\
        \vdots & \vdots & \ddots
    \end{bmatrix} \numberthis \label{eq:A_beta_main_text}.
\end{align*}

\begin{prop}
\label{thm:svd_stack_general}
    Under \Cref{assum:general_noise,assum:main}, when $\beta_2 > 0$, \svdstack satisfies:
    \begin{equation*}
        |\langle \hat{v}_\svdstack, v \rangle|^2 \pto 
        \frac{\left(\beta^\top v_{\text{max}}(A_\beta)\right)^2}{\lambda_\text{max}\left(A_\beta\right)}.
    \end{equation*}
    When $\beta_1 = 0$, this inner product converges in probability to 0.
\end{prop}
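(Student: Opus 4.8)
The plan is to reduce the spectral analysis of the $M\times d$ matrix $\tilde V$ (whose rows are $\hat v_i^\top$) to that of the $M\times M$ Gram matrix $\tilde V\tilde V^\top$, and then invoke the delocalization result (\Cref{lem:delocalization}) to identify the limit of this Gram matrix with the deterministic matrix $A_\beta$ of \eqref{eq:A_beta_main_text}. Concretely, writing $\tilde V = \sum_{i} e_i \hat v_i^\top$, note that the nonzero singular values of $\tilde V$ coincide with those of $\tilde V\tilde V^\top$, and the leading right singular vector of $\tilde V$ is
\[
    \hat v_\svdstack \;=\; v_{\text{max}}(\tilde V) \;=\; \frac{1}{\sigma_{\max}(\tilde V)}\,\tilde V^\top\, v_{\text{max}}(\tilde V\tilde V^\top)
    \;=\; \frac{1}{\sigma_{\max}(\tilde V)}\sum_{i=1}^M \bigl(v_{\text{max}}(\tilde V\tilde V^\top)\bigr)_i\,\hat v_i .
\]
So $\hat v_\svdstack$ is an explicit linear combination of the $\hat v_i$'s with coefficients given by the top eigenvector of the small matrix $\tilde V\tilde V^\top$. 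The first step is therefore to establish $\tilde V\tilde V^\top \pto A_\beta$ entrywise: the diagonal entries $\|\hat v_i\|^2 = 1$ exactly, and the off-diagonal entries $\langle \hat v_i,\hat v_j\rangle$ satisfy $\langle\hat v_i,\hat v_j\rangle^2 \pto \beta_i^2\beta_j^2$ by \Cref{prop:single_table} together with the delocalization of the residuals $(I-vv^\top)\hat v_i$ (\Cref{lem:delocalization}); one also needs to pin down the \emph{sign}, which follows because $\langle\hat v_i,v\rangle\langle\hat v_j,v\rangle \to \pm\beta_i\beta_j$ and, after fixing signs so that $\langle\hat v_i,v\rangle\to\beta_i\ge 0$ (a choice that does not affect $\hat v_\svdstack$ up to global sign), the cross term is $\langle\hat v_i,v\rangle\langle v,\hat v_j\rangle + \langle(I-vv^\top)\hat v_i,(I-vv^\top)\hat v_j\rangle \pto \beta_i\beta_j + 0$.

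Given $\tilde V\tilde V^\top \pto A_\beta$, the second step is a continuity/perturbation argument. Since $\beta_2>0$, $A_\beta$ has a simple top eigenvalue $\lambda_{\max}(A_\beta) > 1 \ge \lambda_2(A_\beta)$ — this needs a short verification: $A_\beta = \beta\beta^\top + D$ with $D = \diag(1-\beta_i^2)\succeq 0$, and the top eigenvalue strictly exceeds the rest because $\beta\beta^\top$ is rank one with $\beta\ne 0$ and the spectral gap is controlled by $\beta_1^2,\beta_2^2>0$; a clean way is to note $\lambda_{\max}(A_\beta)\ge 1+\beta_2^2>1$ by testing against the vector supported on the two largest coordinates, while $\lambda_2(A_\beta)\le 1$ by interlacing of $D$ under the rank-one update... actually more carefully, $A_\beta - I = \beta\beta^\top - \diag(\beta_i^2)$ has exactly one positive eigenvalue (it is a rank-one perturbation of a nonpositive diagonal), giving $\lambda_2(A_\beta)\le 1 < \lambda_{\max}(A_\beta)$. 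With a simple top eigenvalue, the maps $B\mapsto \lambda_{\max}(B)$ and $B\mapsto v_{\max}(B)$ (the latter up to sign, which washes out) are continuous at $A_\beta$ by the Davis–Kahan theorem, so $\lambda_{\max}(\tilde V\tilde V^\top)\pto\lambda_{\max}(A_\beta)$ and $v_{\max}(\tilde V\tilde V^\top)\pto v_{\max}(A_\beta)$ (in a fixed sign convention).

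The third step assembles the limit. From the formula above,
\[
    \langle \hat v_\svdstack, v\rangle \;=\; \frac{1}{\sigma_{\max}(\tilde V)}\sum_{i=1}^M \bigl(v_{\max}(\tilde V\tilde V^\top)\bigr)_i\,\langle\hat v_i,v\rangle
    \;\pto\; \frac{1}{\sqrt{\lambda_{\max}(A_\beta)}}\,\bigl\langle v_{\max}(A_\beta),\,\beta\bigr\rangle,
\]
using $\sigma_{\max}(\tilde V)=\sqrt{\lambda_{\max}(\tilde V\tilde V^\top)}\pto\sqrt{\lambda_{\max}(A_\beta)}$, $\langle\hat v_i,v\rangle\pto\beta_i$ under the sign convention, and continuity of the eigenvector; squaring gives $|\langle\hat v_\svdstack,v\rangle|^2\pto (\beta^\top v_{\max}(A_\beta))^2/\lambda_{\max}(A_\beta)$ as claimed. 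For the degenerate case $\beta_1=0$: then $\beta_i=0$ for all $i$, so by \Cref{lem:delocalization} $\tilde V\tilde V^\top\pto I_M$, and moreover each $\hat v_i$ delocalizes against $v$, i.e. $\langle\hat v_i,v\rangle\pto 0$; since $\hat v_\svdstack$ lies in $\Span(\hat v_1,\ldots,\hat v_M)$ with bounded coefficients (the eigenvector of a matrix tending to $I_M$ has unit norm), $\langle\hat v_\svdstack,v\rangle = \sum_i c_i\langle\hat v_i,v\rangle \pto 0$ — here one must be slightly careful since $v_{\max}(I_M)$ is not unique, but the bound $|\langle\hat v_\svdstack,v\rangle|\le \sum_i |\langle\hat v_i,v\rangle|\pto 0$ holds regardless of which unit-norm coefficient vector is selected.

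The main obstacle is the delocalization input and the sign bookkeeping in Step 1: establishing $\langle\hat v_i,\hat v_j\rangle^2\pto\beta_i^2\beta_j^2$ under the general (non-Gaussian) noise of \Cref{assum:general_noise} is exactly the content of \Cref{lem:delocalization}, which I am permitted to assume; given it, the remaining work — spectral gap of $A_\beta$, continuity of eigendecomposition at a simple eigenvalue, and propagating convergence in probability through these continuous maps — is standard. One subtlety worth flagging is that convergence of $\tilde V\tilde V^\top$ is only \emph{entrywise in probability} over a fixed-size $M\times M$ matrix, so it is automatically convergence in operator norm, and the continuous-mapping theorem applies directly without any uniformity concerns.
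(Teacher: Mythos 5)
Your proposal is correct and follows essentially the same route as the paper: reduce to the $M\times M$ Gram matrix $\tilde V\tilde V^\top$, use \Cref{lem:delocalization} (including the same sign convention and the decomposition of the cross term into the $vv^\top$ part plus a vanishing residual) to get entrywise convergence to $A_\beta$, establish the spectral gap $\lambda_{\max}(A_\beta)\ge 1+\beta_1\beta_2>1\ge\lambda_2(A_\beta)$ so that Davis--Kahan gives convergence of the top eigenvector, and then assemble the limit by the continuous mapping theorem, with the $\beta_1=0$ case handled by the same triangle-inequality bound over the row space of $\tilde V$. The only cosmetic difference is that you certify $\lambda_2(A_\beta)\le 1$ via a Weyl-type argument on $A_\beta - I$ rather than the paper's Cauchy interlacing on the rank-one update of $\diag(1-\beta^2)$; both are standard and equivalent.
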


We defer the proof of this result to \Cref{app:SVD_stack_proofs}. The requirement that $\beta_2 > 0$ arises as a necessary and sufficient condition to ensure the largest eigenvalue of $A_{\beta}$ is unique.
See \Cref{sec:svdstack_threshold} for additional discussion of the case where $\beta_1>0$ but $\beta_2=0$ yielding $A_\beta = I$.
To interpret our performance guarantee, note that:
\begin{align*}
    \max\left(1,\|\beta\|^2\right) \le \lambda_\text{max}\left(A_\beta\right) &\le \|\beta\|^2 + 1 - \min_i \beta_i^2, \\
     \left(\beta^\top v_{\text{max}}(A_\beta)\right)^2 &\le \|\beta\|^2.
\end{align*}
When all $\beta_i$ are equal, $v_{\text{max}}$ will be directly proportional to the all ones vector, and so the upper bounds for both $\left(\beta^\top v_{\text{max}}(A_\beta)\right)^2$ and $\lambda_\text{max}$ will be attained, recovering the result in \Cref{thm:simple_thm1}.
On the other extreme, if only two $\beta_i$ are nonzero and are equal to $\beta_0$, then $v_{\text{max}}(A_\beta)$ will simply have $1/\sqrt{2}$ on its first two entries and zero otherwise. This yields $\lambda_\text{max}(A_{\beta})=1+ \beta_0^2$ in the denominator, and $2\beta_0^2$ in the numerator, both attaining their respective upper bounds, with an overall performance of $2\beta_0^2/ (1+\beta_0^2)$, which is independent of $M$.
As expected, \svdstack's performance can be expressed solely as a function of the $\beta_i$, with no additional dependence on $\theta$ or $c$.

\subsection{\stacksvd unweighted analysis}

Extending the analysis of \stacksvd to the case with nonuniform signal strength is comparatively straightforward, as the concatenated matrix with equal noise variances is now itself simply a matrix that can be easily studied through the Random Matrix Theory framework.

\begin{prop}
\label{prop:stacksvd_general}
    Under \Cref{assum:general_noise,assum:main}, the performance of \stacksvd is:
    \begin{equation*}
        | \langle \hat{v}_\stacksvd, v \rangle |^2 \pto \begin{cases}
            \frac{\|\theta\|_2^4-\|c\|_1}{ \|\theta\|_2^2(\|\theta\|_2^2 + 1)} & \text{ if } \|\theta\|_2 > \|c\|_1^{1/4}, \\
                   0 & \text{otherwise.}
               \end{cases}
    \end{equation*}
\end{prop}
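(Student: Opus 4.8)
The plan is to recognize $X_{\text{Stack}}$ as a single signal-plus-noise matrix that already fits the framework of \Cref{prop:single_table}, after identifying the correct effective signal strength and aspect ratio. Write $N := n_1 + \dots + n_M$, so $X_{\text{Stack}} \in \R^{N \times d}$ with $N/d \to \|c\|_1 =: c_{\text{tot}}$. The noise part of $X_{\text{Stack}}$ is the vertical concatenation of the $E_i$; its entries are i.i.d., mean zero, with $\E(\sqrt{d}\,z)^2 = 1$ and bounded fourth moment, so the concatenated noise satisfies \Cref{assum:general_noise} with the same constant. The signal part is $\sum_i$ (embedded) $\theta_i u_i v^\top$; stacking vertically, this equals $\tilde u\, \tilde\theta\, v^\top$ where $\tilde u \in \R^N$ is the unit vector obtained by concatenating the vectors $\theta_i u_i$ and normalizing, and $\tilde\theta := \big(\sum_i \theta_i^2\big)^{1/2} = \|\theta\|_2$. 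Thus $X_{\text{Stack}} = \tilde\theta\, \tilde u\, v^\top + \tilde E$ is exactly a rank-one spiked model of the form \eqref{eq:model_rank1} with signal strength $\|\theta\|_2$ and aspect ratio $c_{\text{tot}}$.

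Next I would apply \Cref{prop:single_table} (equivalently Theorem 1 of \cite{liu2023asymptotic}) directly to this matrix. The proposition gives $|\langle \hat v_{\stacksvd}, v\rangle|^2 \ip \frac{\tilde\theta^4 - c_{\text{tot}}}{\tilde\theta^4 + \tilde\theta^2}$ when $\tilde\theta \ge c_{\text{tot}}^{1/4}$, and $0$ otherwise. Substituting $\tilde\theta = \|\theta\|_2$ and $c_{\text{tot}} = \|c\|_1$ yields
\[
    |\langle \hat v_{\stacksvd}, v\rangle|^2 \pto \frac{\|\theta\|_2^4 - \|c\|_1}{\|\theta\|_2^4 + \|\theta\|_2^2} = \frac{\|\theta\|_2^4 - \|c\|_1}{\|\theta\|_2^2(\|\theta\|_2^2 + 1)}
\]
above the threshold $\|\theta\|_2 > \|c\|_1^{1/4}$, and $0$ below it, which is precisely the claimed statement.

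The one point requiring care — and the main (minor) obstacle — is checking that \Cref{prop:single_table} applies verbatim despite the fact that $N/d$ only converges to $c_{\text{tot}}$ through a particular joint limit of the $n_i$; but since the cited RMT result only needs the aspect ratio of the single matrix in question to converge to a fixed constant in $(0,\infty)$, and $N/d \to \|c\|_1 \in (0,\infty)$ holds under \eqref{eq:RMT_limit}, there is no issue. One should also note that $\tilde u$ depends on $N$ (and on the $u_i$), but \Cref{prop:single_table} places no distributional assumption on the left signal vector beyond being a fixed unit vector for each problem size, so this is harmless. A remark worth including: unlike \svdstack, here the per-matrix detectability thresholds $\theta_i \ge c_i^{1/4}$ play no role — only the aggregate quantities $\|\theta\|_2$ and $\|c\|_1$ enter — which is the structural reason \stacksvd can succeed even when every individual matrix is below its own detectability threshold.
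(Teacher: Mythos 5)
Your proof is correct, and it takes a genuinely different --- and more elementary --- route than the paper's. You observe that in the unweighted case the stacked matrix is itself a single \emph{homoscedastic} rank-one spiked matrix: since \Cref{assum:general_noise} posits one common i.i.d.\ law for the entries across all $i\in[M]$, the concatenated noise again satisfies \Cref{assum:general_noise}; the concatenated signal equals $\|\theta\|_2\,\tilde u\, v^\top$ with $\tilde u$ a unit vector; and the aggregate aspect ratio converges to $\|c\|_1\in(0,\infty)$. Hence \Cref{prop:single_table} applies verbatim with $\theta\mapsto\|\theta\|_2$ and $c\mapsto\|c\|_1$, giving the stated limit and threshold. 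The paper instead writes $X_{\text{stack}} = \tilde u_0 v^\top + \Sigma^{1/2}E$ for a general diagonal weight matrix $\Sigma$, works out the eigenstructure of $R=\tilde u_0\tilde u_0^\top+\Sigma$ via the secular equation, invokes the heteroscedastic spiked-model results of \cite{liu2023asymptotic} and \cite{10.3150/19-BEJ1129} to obtain the general performance $L(w)$, and only then specializes to $w=1_M$ (where $\gamma_1=\|\theta\|_2^2+1$). Your route is shorter, reuses a result already established in the paper, and makes the structural point --- that only the aggregates $\|\theta\|_2$ and $\|c\|_1$ enter --- immediate; the paper's route costs more machinery here but delivers the optimally weighted \Cref{thm:stacksvd_weighted} from the same computation. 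The two points of care you flag (convergence of the combined aspect ratio, and the absence of any distributional assumption on the left spike direction in the single-matrix result) are exactly the right ones, and both check out.
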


The proof of this proposition essentially follows that of Theorem 2.3 of \cite{10.3150/19-BEJ1129} concerning the asymptotic behavior of spiked eigenvectors of signal-plus-noise matrices with heteroscedastic noise; see Appendix \ref{app:weightedStackSVDProof} for details.
Observe that the performance of \stacksvd depends on the vectors $\theta$ and $c$ only through their norms ($\|\theta\|_2$ and $\|c\|_1$).
This exchangeability is due to the lack of weighting, and can lead to undesirable edge cases.
While this approach draws strength from tables with $\theta_i$ below the threshold of detectability, it can also be corrupted by large $c_i$ from a single table.
Note that \Cref{prop:stacksvd_general} holds even when $M\to\infty$, as long as 
the limiting quantities $\bar{c} = \|c\|_1$ and $\bar{\theta}^2=\|\theta\|_2^2$ exist and are finite:
\begin{equation*}
    \bar{c}=\lim_{M,n_m,d\to\infty}
{\sum_{m=1}^M n_m}/d \quad , \quad
\bar{\theta}^2 =\lim_{M\to\infty} \sum_{m=1}^M\theta_m^2.
\end{equation*}
While \svdstack essentially takes the best of what each table has to offer, \stacksvd instead depends on an ``average'' quantity across the tables.
For example, if two tables have large $\theta_i$, as more low SNR tables with large $c_i$ and small $\theta_i$ are included, the performance of \svdstack will stay the same while \stacksvd suffers and can fall below the recovery threshold.
We elaborate in \Cref{remark:svd_outperform_stack} below, and show this effect in simulations in \Cref{fig:fig3_interesting}b.
This is because the spectrum of $A_\beta$ will not change as uninformative $X_i$ are added, simply adding a new row and column with all zeros except for the diagonal entry.
Thus, $v_{\text{max}}$ will not change (except adding a 0 at the end), and $\lambda_\text{max}$ will be unaffected.
Essentially, high signal matrices can be diluted by low signal ones for \stacksvd, which is not true for \svdstack.
However, this can be avoided by a simple binary weighting scheme, discussed in the following subsection.

\subsubsection{Binary-weighted \stacksvd}
As discussed \stacksvd can be very brittle: if one table has a very large $c_i$ and relatively small $\theta_i$, it can singlehandedly drive the performance of \stacksvd to 0.
To address this, practical implementations of \stacksvd often use a binary weighting scheme, where tables that are judged to be low signal are given a weight of 0 and discarded from the analysis \cite{ma2024optimal}.
This is a simple yet effective way to improve the performance of \stacksvd, and we provide a brief result characterizing the performance of this method.

\begin{cor}\label{cor.2}
    Under \Cref{assum:general_noise,assum:main} the performance of binary-weighted \stacksvd, where tables with $\theta_i^4 < c_i$ are discarded, is given by (assuming at least one table is above the threshold of detectability):
    \begin{align*}
        | \langle \hat{v}_\stacksvd, v \rangle |^2 \pto \frac{\left(\sum_{i \in \CS} \theta_i^2\right)^2 - \sum_{i \in \CS} c_i}{\sum_{i \in \CS} \theta_i^2 (\theta_i^2 + 1)},\quad\text{where}\quad \CS := \{ i : \theta_i^4 \geq c_i \}.
    \end{align*}
\end{cor}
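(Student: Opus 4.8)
The plan is to reduce Corollary \ref{cor.2} directly to Proposition \ref{prop:stacksvd_general} by observing that the binary-weighted \stacksvd estimator, with weights $w_i = \mathds{1}\{\theta_i^4 \ge c_i\}$, is exactly the unweighted \stacksvd estimator applied to the subcollection $\{X_i : i \in \CS\}$. Formally, the concatenated matrix $\begin{bmatrix} w_1 X_1 \\ \vdots \\ w_M X_M\end{bmatrix}$ has the rows corresponding to $i \notin \CS$ identically zero, so its nonzero part is precisely $X_{\text{Stack}}$ built from the tables in $\CS$; deleting zero rows changes neither the singular values nor the right singular vectors. Hence $\hat v_\stacksvd(w) = v_{\text{max}}\!\left(\begin{bmatrix} X_i : i \in \CS \end{bmatrix}\right)$, the unweighted \stacksvd estimator for the sub-problem indexed by $\CS$.

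Next I would verify that the sub-problem $\{X_i : i \in \CS\}$ still satisfies the hypotheses of Proposition \ref{prop:stacksvd_general}: each such $X_i$ obeys the rank-one model \eqref{eq:model_rank1} with the same shared $v$, the noise assumption \Cref{assum:general_noise} is preserved under passing to a subset of matrices, and the scaling limits \eqref{eq:RMT_limit} hold for the indices in $\CS$ with the same $c_i$. Since $\CS$ is fixed (it does not depend on $d$, as $\theta_i$ and $c_i$ are fixed in the rank-one setting) and nonempty by assumption, Proposition \ref{prop:stacksvd_general} applies verbatim with $\theta$ replaced by $(\theta_i)_{i \in \CS}$ and $c$ replaced by $(c_i)_{i \in \CS}$. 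This yields
\begin{equation*}
    |\langle \hat v_\stacksvd(w), v\rangle|^2 \pto \frac{\left(\sum_{i \in \CS}\theta_i^2\right)^2 - \sum_{i \in \CS} c_i}{\left(\sum_{i\in\CS}\theta_i^2\right)\left(\sum_{i\in\CS}\theta_i^2 + 1\right)}
\end{equation*}
provided the detectability condition $\left(\sum_{i\in\CS}\theta_i^2\right)^2 > \sum_{i\in\CS} c_i$ holds.

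The only remaining point is to check that the detectability condition is automatically satisfied whenever $\CS$ contains at least one table above threshold. If $j \in \CS$, then $\theta_j^4 \ge c_j$; I would use superadditivity of squaring on nonnegative reals, $\left(\sum_{i\in\CS}\theta_i^2\right)^2 \ge \sum_{i\in\CS}\theta_i^4 \ge \sum_{i\in\CS} c_i$, where the last inequality is termwise by the definition of $\CS$. Strictness (to land in the non-degenerate branch rather than exactly at the boundary) follows as long as $|\CS|\ge 2$ or the single table is strictly above threshold; at the exact boundary the performance formula still evaluates to $0$ continuously, matching the phase transition, so the stated formula holds without a case split. I do not anticipate a genuine obstacle here — the main thing to be careful about is the bookkeeping that removing zero rows is a legitimate operation on the SVD (it is, since $X_{\text{Stack}}^\top X_{\text{Stack}}$ is unchanged) and that $\CS$ is deterministic and fixed in $d$, so there is no issue of a data-dependent or growing index set interacting with the asymptotics.
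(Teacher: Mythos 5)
Your proposal is correct and follows essentially the same route the paper intends: the corollary is stated as an immediate consequence of \Cref{prop:stacksvd_general} applied to the retained subset $\CS$ (the paper gives no separate proof), and your reduction via deleting the zero rows of the weighted stack, together with the check that $\bigl(\sum_{i\in\CS}\theta_i^2\bigr)^2 \ge \sum_{i\in\CS}\theta_i^4 \ge \sum_{i\in\CS}c_i$ guarantees detectability, is exactly the intended argument. The extra care you take about strictness at the boundary and about $\CS$ being deterministic and fixed in $d$ is sound and slightly more explicit than the paper itself.
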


Here the threshold used is $\theta_i^4 \ge c_i$, which is simply the threshold of detection for this table marginally, as indicated in Proposition \ref{prop:single_table}.
One can consider other thresholds on $\theta_i^4/c_i$, or more generally consider any subset of the $M$ tables, in which case this method will trivially always perform at least as well as unweighted \stacksvd.
    Optimizing over all possible subsets $\CS$ to generate $\hat{v}_\stacksvd$,  we obtain:
    \begin{equation*}
        | \langle \hat{v}_\stacksvd, v \rangle |^2 \pto \begin{cases}
            \underset{\CS \in 2^{[M]}}{\max}
 \frac{\left(\sum_{i \in \CS} \theta_i^2\right)^2 - \sum_{i \in \CS} c_i}{\sum_{i \in \CS} \theta_i^2 (\theta_i^2 + 1)} & \text{ if } \underset{\CS \in 2^{[M]}}{\max} \left(\sum_{i \in \CS} \theta_i^2\right)^2 \ge \sum_{i \in \CS} c_i\\
            0 & \text{ otherwise.}
        \end{cases}
    \end{equation*}
Note that this improves not only the performance of \stacksvd, but also the detectability threshold.
As we show in Section \ref{sec:relative_performance}, this simple weighting scheme of only selecting tables where $\theta_i^4 \ge c_i$ dominates \svdstack when all $c_i\le 1$ in \Cref{thm:stacksvd_binary_optimal_svd_stack}, i.e. when none of the tables are excessively ``tall''.

\section{Optimally weighted estimators} \label{sec:optimal_weighting}
By naively aggregating the information across matrices, we lose a key degree of freedom--the weighting of the different information sources--resulting in limited integration performance.
If different matrices have different signal strengths, a nonuniform weighting $w \in \R^M$ (\Cref{stacksvd.def,svdstack.def}) can dramatically improve performance.
In this section, we derive weights $w$ for both \svdstack and \stacksvd that maximize the limiting value of $|\langle \hat{v}(w), v \rangle|^2$.
We will refer to such a $w$ as an \textit{optimal weighting}.
Throughout, we assume $\{\theta_i\}$ are known, and discuss how to consistently estimate them in \Cref{sec:estimating_theta} (\Cref{thm:theta_est}).

\subsection{\stacksvd weighted analysis}
Weighting data integration methods have received renewed attention with the large numbers of datasets available today, including \cite{hong2023optimally} for weighted PCA, and \cite{liu2023asymptotic} for general low-rank and structured covariance matrices.
We prove the following theorem regarding the optimal weights for \stacksvd, defined in \Cref{stacksvd.def}, as well as its asymptotic performance, leveraging the results of \cite{liu2023asymptotic,hong2023optimally}.
\begin{thm}
    \label{thm:stacksvd_weighted} Under \Cref{assum:general_noise,assum:main}, \stacksvd is optimally weighted as 
    \begin{equation*}
    w_i\opt \propto \frac{\theta_i}{\sqrt{\theta_i^2 + c_i}},
\end{equation*}
which yields performance
\begin{equation*}
    (v^\top \hat{v}_\stacksvd)^2 \pto \gamma\opt \quad \text{ the unique solution $x \in (0,1)$ of } \sum_{i=1}^M \theta_i^4 \frac{1-x}{c_i + x\theta_i^{2}} = 1,
\end{equation*}
as long as $\sum_i \theta_i^4/ c_i > 1$, otherwise $\gamma\opt = 0$.
\end{thm}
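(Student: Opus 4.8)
The plan is to cast weighted \stacksvd as a rank-one spiked matrix with \emph{row-heteroscedastic} noise and then apply the corresponding Random Matrix Theory. Stacking the signal directions as $\b a_w := (w_1\theta_1 u_1^\top,\dots,w_M\theta_M u_M^\top)^\top$ and the scaled noise blocks as $W_w$ (whose $i$-th block of $n_i$ rows has i.i.d.\ entries of variance $w_i^2/d$ under \Cref{assum:general_noise}), the weighted stacked matrix is $X_{\textnormal{Stack}}(w)=\b a_w v^\top + W_w$. This is a rank-one perturbation of a noise matrix whose variance profile is piecewise constant across the $M$ blocks, with $\|\b a_w\|_2^2 = \sum_i w_i^2\theta_i^2$ and with the signal's left singular direction $\b a_w/\|\b a_w\|_2$ carrying a fraction $w_i^2\theta_i^2/\sum_j w_j^2\theta_j^2$ of its energy in the block whose noise level is $w_i^2$. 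The overlap between this signal profile and the noise variance profile is exactly what distinguishes the problem from the classical (homoscedastic) spiked setting and makes it an instance of the heteroscedastic / structured-covariance spiked models treated by \cite{liu2023asymptotic} and the weighted-PCA analysis of \cite{hong2023optimally}; I would invoke those results rather than redo the RMT from scratch.

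For a fixed deterministic weight vector $w$, applying that theory to $X_{\textnormal{Stack}}(w)$ yields $|\langle\hat v_{\stacksvd}(w),v\rangle|^2\pto P(w)$ for an explicit limit $P(w)$, together with the associated detectability threshold; below threshold, $v^\top\hat v_{\stacksvd}(w)\pto0$. The point is that the general master equations — Stieltjes-transform fixed-point equations determining the outlier singular value, together with a norm-normalization relation for the spiked eigenvector — are stated for a continuous noise variance profile, but here that profile is a sum of $M$ atoms, so they collapse to a coupled system of finitely many scalar equations indexed by $i\in[M]$. Concretely, a leave-one-block-out expansion of $W_w^\top W_w$ reduces $v^\top\hat v_{\stacksvd}(w)$ to quantities of the form $\tfrac1d\tr(\xi I - W_w^\top W_w)^{-1}$ and per-block resolvent traces; the $v$-dependent bilinear forms are replaced by their isotropic deterministic equivalents because $v$ is independent of the noise and the noise has column-homogeneous variance, while cross-block terms vanish by independence of the $E_i$. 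Solving this finite system gives $P(w)$ as a rational expression in $\{w_i,\theta_i,c_i\}$ and the outlier location.

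It then remains to maximize $P(w)$ over $w\in\R^M_{\ge 0}$. Since $P$ is scale-invariant in $w$, one may normalize, e.g.\ by $\sum_i w_i^2\theta_i^2=1$, write the Lagrange/stationarity conditions — or, more cleanly, optimize directly at the level of the coupled outlier/alignment equations before eliminating $\xi$ — and identify the maximizer $w_i\opt\propto\theta_i/\sqrt{\theta_i^2+c_i}$. The algebraic simplification that collapses the optimum is that $(w_i\opt)^2(\theta_i^2+c_i)$ is then proportional to $\theta_i^2$, so the per-block terms align; substituting $w\opt$ into the master system and eliminating the outlier variable reduces the two coupled equations to the single scalar equation $\sum_{i=1}^M\theta_i^4\,(1-x)/(c_i+x\theta_i^2)=1$, with the performance equal to its root $\gamma\opt$ directly (consistent with $M=1$, where $\beta_1^2$ of \Cref{prop:single_table} solves $\theta_1^4(1-x)/(c_1+x\theta_1^2)=1$, and with \Cref{thm:simple_thm1} in the symmetric case). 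Existence and uniqueness of $\gamma\opt$ are then immediate from monotonicity: $x\mapsto\sum_i\theta_i^4(1-x)/(c_i+x\theta_i^2)$ is continuous and strictly decreasing on $[0,1]$ (each summand has derivative $-\theta_i^4(c_i+\theta_i^2)/(c_i+x\theta_i^2)^2<0$), equals $\sum_i\theta_i^4/c_i$ at $x=0$ and $0$ at $x=1$, so it has a unique root $\gamma\opt\in(0,1)$ iff $\sum_i\theta_i^4/c_i>1$, and otherwise the effective signal is below the threshold from the previous step and $\gamma\opt=0$.

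The main obstacle is the optimization step: one must certify that $w\opt$ is the \emph{global} maximizer of $P(w)$ — $P$ is a ratio of Stieltjes-type quantities and is not manifestly concave, so this requires either showing the stationary point is unique and a maximum, exposing a hidden convexity after a change of variables, or a variational argument — and then carry out the nontrivial algebra that collapses the coupled RMT equations at $w\opt$ to the clean scalar fixed point and matches its threshold to $\sum_i\theta_i^4/c_i>1$. The remaining ingredient to check carefully is that \Cref{assum:general_noise} (i.i.d.\ entries, bounded fourth moment) is precisely the regularity under which the cited heteroscedastic-spike results deliver \emph{convergence in probability} of the relevant bilinear forms, which then transfers verbatim to $\hat v_{\stacksvd}(w\opt)$ since $w\opt$ is a fixed deterministic vector.
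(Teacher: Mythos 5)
Your overall strategy matches the paper's: the same decomposition $X_{\textnormal{Stack}}(w)=\tilde u_0 v^\top + \Sigma^{1/2}E$ with $\tilde u_0$ stacking $w_i\theta_i u_i$ and $\Sigma$ the block-constant variance profile, the same appeal to the heteroscedastic spiked results of \cite{liu2023asymptotic} to obtain the fixed-$w$ limit, and the same intent to lean on \cite{hong2023optimally} for the optimization. The paper indeed arrives at
\begin{equation*}
L(w)=\frac{1-\sum_{j}\frac{c_jw_j^4}{(w_j^2-\gamma_1)^2}}{\gamma_1\sum_j\frac{\theta_j^2w_j^2}{(w_j^2-\gamma_1)^2}},
\end{equation*}
with $\gamma_1$ the outlier eigenvalue of $R=\tilde u_0\tilde u_0^\top+\Sigma$ solving a secular equation, exactly the "finite system of scalar equations" you describe; your monotonicity argument for the uniqueness of the root of $\sum_i\theta_i^4(1-x)/(c_i+x\theta_i^2)=1$ and your consistency checks are also correct.

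The genuine gap is the step you yourself flag as the main obstacle: certifying that $w_i\opt\propto\theta_i/\sqrt{\theta_i^2+c_i}$ is the \emph{global} maximizer. You list three candidate strategies (uniqueness of the stationary point, hidden convexity, a variational argument) but carry out none of them, and a direct Lagrangian analysis is genuinely delicate because $\gamma_1$ depends implicitly on $w$ through the secular equation. The paper closes this gap by an exact reduction rather than a fresh stationarity analysis: it first argues that any table with $\theta_j=0$ should receive weight $w_j=0$ (the denominator of $L$ is independent of such $w_j$ in the admissible range while the numerator is maximized at $w_j=0$), and then changes variables $w_j^2=\tilde w_j\nu_j$ with $\nu_j=c_j/\theta_j^2$ over the remaining indices, which transforms $L$ into precisely the objective of Equation 6.27 of \cite{hong2023optimally}; the global maximizer $\tilde w_j\opt=\frac{1}{\nu_j}\frac{1}{1+\nu_j}$ (equivalently $w_j^2=\theta_j^2/(\theta_j^2+c_j)$) and the threshold $\sum_j\theta_j^4/c_j\le 1$ are then imported wholesale from Lemmas 6.2 and 6.3 of that paper. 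Without either this reduction or an independent global-optimality argument, the optimality claim in your write-up remains unproven.
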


To prove this claim, we characterize the limiting spectral distribution of $R = \E[X_\text{stack}X_\text{stack}^\top]$ for a given weighting $w$, along with it's top eigenvector.
We then compute the inner product between $\hat{v}_\stacksvd$ with $v$, mirroring the analysis of \cite{liu2023asymptotic}, to obtain the asymptotic performance as a function of $w,\theta,c$.
We then manipulate this expression, identifying an optimal weighting and the corresponding performance by solving a nonlinear system of equations utilizing proof techniques from \cite{hong2023optimally}.
Full details are deferred to \Cref{app:weightedStackSVDProof}.

Not only does weighting \stacksvd improve its asymptotic performance, it also improves its detectability threshold.
With the optimal weighting, the detectability threshold is $\sum_i \theta_i^4/ c_i > 1$, compared to $\left(\sum_i \theta_i^2\right)^2/\left(\sum_i c_i\right) > 1$ in the unweighted case.
By Cauchy-Schwarz, weighting improves performance: 
$\left(\sum_i \theta_i^2\right)^2
\le \left( \sum_i \theta_i^4/c_i\right) \left( \sum_i c_i\right)$, making it so that whenever \stacksvd detects the signal, so will optimally-weighted \stacksvd.
This improvement is strict when $\theta_i^2/c_i$ is not constant across $i$.
In \Cref{sec:experiments}, we provide empirical evidence supporting this theoretical insight (\Cref{fig:fig3_interesting}b,c).

\subsection{\svdstack weighted analysis}

In contrast to \stacksvd, the optimal weighting of \svdstack, defined in \Cref{svdstack.def}, has not to our knowledge been studied.
Here, we derive the optimal weights for \svdstack, and its resulting performance.

\begin{thm}
    \label{thm:svdstack_weighted}
    Under \Cref{assum:general_noise,assum:main}, an optimal weighting of \svdstack is
    \begin{equation}\label{svdstack.weight}
        w_i \opt = \theta_i \sqrt{\frac{\theta_i^2+1}{\theta_i^2+c_i}}
    \end{equation}
    which, when $\beta_{1}>0$, yields performance
    \begin{equation*}
        |\langle v, \hat{v}_\svdstack \rangle|^2 \pto \frac{S}{S+1}, \quad \text{ where } \quad S = \sum_i \frac{\beta_i^2}{1-\beta_i^2}.
    \end{equation*}
\end{thm}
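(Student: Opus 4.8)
The plan is to proceed in three stages: reduce the weighted stack to a fixed $M\times M$ spectral problem; prove a universal upper bound on performance valid for every weight vector via a generalized Rayleigh quotient; and verify that the stated $w\opt$ attains that bound.

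For the reduction, I would first fix the (otherwise arbitrary) signs of the $\hat v_i$ so that $\langle\hat v_i,v\rangle\ge0$. By \Cref{prop:single_table} and the delocalization result \Cref{lem:delocalization}, the $M\times M$ Gram matrix then satisfies $\tilde V_w\tilde V_w^\top\pto D_wA_\beta D_w$, where $D_w=\diag(w)$ and $A_\beta$ is as in \eqref{eq:A_beta_main_text}; note $\beta_i<1$ always (since $\theta_i^4-c_i<\theta_i^4+\theta_i^2$), so $A_\beta$ is positive definite. Since $\tilde V_w\tilde V_w^\top$ has fixed size, its leading eigenvalue converges, and whenever $\lambda_{\text{max}}(D_wA_\beta D_w)$ is simple the leading left singular vector $\phi$ of $\tilde V_w$ converges (up to sign) to $v_{\text{max}}(D_wA_\beta D_w)$. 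Writing $\hat v_\svdstack(w)=\tilde V_w^\top\phi/\|\tilde V_w^\top\phi\|$ with $\tilde V_w^\top\phi=\sum_i\phi_iw_i\hat v_i$, the same bookkeeping as in the proof of \Cref{thm:svd_stack_general} gives, for each fixed $w$ at which the limiting leading eigenvalue is simple,
\[
|\langle\hat v_\svdstack(w),v\rangle|^2\pto f(w):=\frac{\big\langle v_{\text{max}}(D_wA_\beta D_w),\,D_w\beta\big\rangle^2}{\lambda_{\text{max}}(D_wA_\beta D_w)}.
\]

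Next I would prove the universal upper bound. Every right singular vector of $\tilde V_w$ lies in $\Span\{\hat v_1,\dots,\hat v_M\}$, so $\hat v_\svdstack(w)=\sum_i a_i\hat v_i$ for some $a\in\R^M$ with $a^\top Ga=1$, where $G_{ij}:=\langle\hat v_i,\hat v_j\rangle\pto(A_\beta)_{ij}$. With $\hat b_i:=\langle\hat v_i,v\rangle\pto\beta_i$, scale-invariance of the ratio gives
\[
|\langle\hat v_\svdstack(w),v\rangle|^2=\frac{(a^\top\hat b)^2}{a^\top Ga}\le\sup_{u\ne0}\frac{(u^\top\hat b)^2}{u^\top Gu}=\hat b^\top G^{-1}\hat b\pto\beta^\top A_\beta^{-1}\beta,
\]
using $G\pto A_\beta$ and positive-definiteness. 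A Sherman--Morrison computation with $D:=\diag(1-\beta_i^2)$ gives $\beta^\top A_\beta^{-1}\beta=S-S^2/(1+S)=S/(1+S)$ for $S=\beta^\top D^{-1}\beta=\sum_i\beta_i^2/(1-\beta_i^2)$, and identifies the maximizing effective coefficients as $a_i\propto\beta_i/(1-\beta_i^2)$. Hence $|\langle\hat v_\svdstack(w),v\rangle|^2\le S/(S+1)+o_P(1)$ for every $w$, which also singles out $S/(S+1)$ as the only possible limit.

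It then remains to show $f(w\opt)=S/(S+1)$. Substituting $\beta_i^2=(\theta_i^4-c_i)/(\theta_i^4+\theta_i^2)$ for the critical tables ($\theta_i^4\ge c_i$) produces the crucial identity $(w_i\opt)^2(1-\beta_i^2)=1$; subcritical tables ($\theta_i^4<c_i$) have $\beta_i=0$ and $(w_i\opt)^2<1$. Thus $D_{w\opt}A_\beta D_{w\opt}=bb^\top+\Gamma$ is a rank-one update of a diagonal matrix, with $b_i=w_i\opt\beta_i$ supported on the critical tables (where $\Gamma_{ii}=1$) and $\|b\|^2=\sum_i(w_i\opt)^2\beta_i^2=\sum_i\beta_i^2/(1-\beta_i^2)=S$; the secular equation then gives $\lambda_{\text{max}}=1+S$, which strictly exceeds every diagonal entry of $\Gamma$, so the leading eigenvalue is simple and its eigenvector vanishes on the subcritical coordinates, and a short computation yields $\big\langle v_{\text{max}},D_{w\opt}\beta\big\rangle^2=S$, whence $f(w\opt)=S/(S+1)$. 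I expect the main obstacle to be carrying out the reduction rigorously --- convergence of the leading singular structure of $\tilde V_w$ and control of the random coefficients $a(w,d)$, including possible near-ties in the top singular value for general $w$ --- which is exactly why I would obtain the upper bound from the span plus the Gram-matrix limit alone (no spectral gap needed) and establish achievability by exhibiting the strictly positive gap $S>0$ at $w\opt$. The algebraic collapse $(w_i\opt)^2(1-\beta_i^2)\equiv1$ is the secondary crux, and the subcritical tables are the one place that needs a brief separate check that they stay out of the leading subspace.
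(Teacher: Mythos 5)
Your proposal is correct and follows essentially the same route as the paper's proof: both reduce to the limiting Gram matrix $A_\beta$, obtain the upper bound $\beta^\top A_\beta^{-1}\beta = S/(S+1)$ by maximizing the generalized Rayleigh quotient over the span of the $\hat v_i$ (the paper phrases this as a quadratically constrained linear program solved via Sherman--Morrison), and verify achievability at $w\opt$ through the identity $(w_i\opt)^2(1-\beta_i^2)=1$ on the supercritical tables, which turns $D_{w\opt}A_\beta D_{w\opt}$ into a rank-one-plus-identity matrix whose top eigenvector is read off directly. Your explicit note that the upper bound needs no spectral gap while achievability supplies one is a slightly cleaner packaging of the same logic, including the same brief check that subcritical tables (where the diagonal entry drops below $1$) stay out of the leading eigenspace.
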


We defer the detailed proof to \Cref{app:weightedSVDStackProof}, and provide a brief sketch below. 
As in the proof of unweighted \svdstack, we show that the performance is determined by the limiting spectrum of $\hat{V}_w \hat{V}_w^{\top}$ which converges in probability to a version of $A_\beta$, where $\hat{V}_w = \diag{(w)} \tilde{V}$. 
We leverage the fact that $\hat{v}_\svdstack \in \text{span}(\{\hat{v}_i\})$, which allows us to formulate the problem of identifying the optimal coefficients for $\hat{v}_\svdstack$ as a quadratic program.
We then show that these coefficients are achievable by selecting appropriate weights $w$ for $\tilde{V}_w$, and performing an SVD.

To provide some intuition regarding this optimal weighting (\ref{svdstack.weight}), we consider the case where all tables have $\beta_i>0$, and our noise matrices $E_i$ have entries drawn i.i.d. from $\CN(0,1/d)$.
In this case, we show that the optimal weights equalize (stabilize) the noise variance. Note that when above the detectability threshold the optimal weights (\ref{svdstack.weight}) can be rewritten as $w_i\opt = 1/\sqrt{1-\beta_i^2}$.
By Lemma 4.4 of \cite{loffler2021optimality}, with Gaussian noise the component of $\hat{v}_i$ not aligned with $v$ is Haar distributed (up to normalization).
As a result, defining $\tilde{E} \in \R^{M \times d}$ with entries drawn i.i.d. from $\CN(0,1/d)$, the matrix $\tilde{V}$ and its weighted variant $\tilde{V}_w$ can be written as:
\begin{align*}
    \tilde{V} &\approx \beta v\top + \diag{\left(\sqrt{1-\beta^2}\right)} \tilde E ,\\
    \tilde{V}_w &= \diag{(w)} \tilde{V}  \approx \left(\frac{\beta}{\sqrt{1-\beta^2}}\right) v^\top  + \tilde E,
\end{align*}
\noindent where the approximation emphasizes that the result from \cite{loffler2021optimality} applies to $(I-vv^\top)\hat{v}_1$, rather than $\hat{v}_1-\beta_1 v$, which are asymptotically equivalent.

While the above discussion motivates this specific choice as noise-variance-equalizing weights, it is not apriori clear that this weighting will yield the optimal performance, necessitating our more general analysis culminating in Theorem \ref{thm:svdstack_weighted}.
We see that these weights share similarities with those from \stacksvd, with an extra multiplicative factor of $\sqrt{\theta_i^2+1}$.
Additionally, note that when $c_i=1$ for all $i$ the optimal weights are $w_i\opt \propto \theta_i$, an intuitive choice.

\section{Relative performance of methods} \label{sec:relative_performance}
Having characterized the performance of \stacksvd and \svdstack, both weighted and unweighted, we now compare them in an instance-dependent context, to provide a more nuanced understanding of their relative strengths and weaknesses.
We list these performance comparisons in \Cref{tab:tikz_comparison_summary}, and display them pictorially in \Cref{fig:tikz_comparison}.

\begin{table}
\vspace{-.2cm}
\centering
\begin{tabular}{|l|l|l|}
\hline
\textbf{Result} & \textbf{Summary} & \textbf{Setting} \\
\hline
\Cref{prop:dominance} & Weighted \stacksvd dominates weighted \svdstack & For all $\theta,c$ \\
\hline
Prop. \ref{thm:stacksvd_binary_optimal_svd_stack} 
 & Binary \stacksvd dominates weighted \svdstack & For all $c_i \le 1$ \\
\hline
\Cref{remark:stack_outperform_svd} & \stacksvd can outperform weighted \svdstack & \begin{tabular}[c]{@{}l@{}}Many low SNRs\\ $\beta_1 = 0$, $\|\theta\|_2^4 > \|c\|_1$\end{tabular} \\
\hline
\Cref{remark:svd_outperform_stack} & \svdstack can outperform binary \stacksvd & \begin{tabular}[c]{@{}l@{}}Imbalanced SNRs\\ $\beta_2 > 0$ and $\|\theta\|_2^4 < \|c\|_1$\end{tabular} \\
\hline
\end{tabular}
\caption{Summary of the relative performances of different methods. Weighted \stacksvd uniformly dominates all other methods (binary \stacksvd, \stacksvd, weighted \svdstack, and \svdstack). Among these four no one method uniformly outperforms the others. Comparisons displayed pictorially in \Cref{fig:tikz_comparison}.}
\label{tab:tikz_comparison_summary}
\vspace{-.5cm}
\end{table}

We begin with the following remark, showing that for certain instances unweighted \stacksvd can outperform optimally weighted \svdstack (and by extension unweighted \svdstack).
We corroborate this result via simulation in \Cref{fig:fig3_interesting}a.
\begin{remark}\label{remark:stack_outperform_svd}
Unweighted \stacksvd can outperform optimally weighted \svdstack, particularly when many tables exhibit weak signal strengths below the detection threshold. 
\end{remark}
\begin{proof}[Justification of \Cref{remark:stack_outperform_svd}]
From \Cref{thm:simple_thm1}, unweighted \stacksvd can outperform even optimally weighted \svdstack by utilizing many tables with $\theta_i \le c_i^{1/4}$.
Concretely; consider $\theta_i=\theta_0 = 1$, $c_i = c_0 = 1$ for all $i$.
All tables have $\beta_i=0$, and so \svdstack and optimally weighted \svdstack fall below the detection threshold, with performance $0$.
Unweighted \stacksvd, however, has a performance of $\frac{M^2-M}{M(M+1)} = 1-\frac{2}{M+1}$, which is positive for $M>1$ and goes to 1 as $M\to \infty$.
This characterizes a broad class of instances, where many tables may have signal strength below the detectability threshold marginally, but when analyzed together \stacksvd is able to aggregate strength across \textit{all} tables to obtain good performance.
\end{proof}

While \Cref{prop:single_table} precludes \svdstack outperforming \stacksvd when all tables have the same $\theta_i$ and $c_i$, \svdstack can in fact outperform \stacksvd once tables have unequal $\theta_i$.
Intuitively, this occurs when there are low SNR tables that dilute the signal of the high SNR ones, causing \stacksvd to fail while even unweighted \svdstack essentially ignores the low SNR ones.
We formalize this in the following remark, showing that  unweighted \svdstack can actually outperform even optimally binary-weighted \stacksvd.

\begin{remark}
\label{remark:svd_outperform_stack}
    Unweighted \svdstack can outperform binary-weighted \stacksvd when tables have highly imbalanced signal strengths. 
\end{remark}

\begin{proof}[Justification of \Cref{remark:svd_outperform_stack}]
    We begin by proving the easier claim, that unweighted \svdstack can outperform unweighted \stacksvd, and then extend our discussion to  binary-weighted \stacksvd.

\medskip \noindent \textbf{Improvement over unweighted \stacksvd:}
the detection threshold of \svdstack can be lower than that of \stacksvd in the presence of ``nuisance'' tables, when:
\begin{align*}
    \beta_2 > 1 \quad \text{ and } \quad 
    \|\theta\|_2^4 < \|c\|_1,
\end{align*}
where the first condition is for \svdstack to be above its detection threshold, and the second condition is for \stacksvd to fall below its detection threshold.
This formalizes the intuition that \stacksvd fails either by the addition of many low signal tables ($\theta_i=0$, $c_i>0$), or by the addition of a small number of tables with large $c_i$.
As a concrete example, consider $c_i=c_0 $ for all $i$, with $M > 2$, where $\theta_1 = \theta_2 = \theta_0>c_0^{1/4}$ and $\theta_3, \ldots, \theta_M = 0$, with $\beta_0$ defined appropriately.
Then, for $M>4\theta_0^4/c_0$, the asymptotic performance of \stacksvd is 0, while the performance of \svdstack is $\frac{2 \beta_0^2}{1+ \beta_0^2}$.

\medskip \noindent \textbf{Improvement over binary-weighted \stacksvd:}
\svdstack can outperform binary-weighted \stacksvd (defined in \Cref{cor.2}), especially in cases where one table with low SNR (large $c_i$ and small $\theta_i$) dilutes the signal of the other tables.
As a concrete example, consider $c_1=c_2=1$, with $c_3$ large.
Here, $\theta_1=\theta_2=2$, with $\theta_3 = c_3^{1/4}$, yielding $\beta_1^2=\beta_2^2 = .75$, with $\beta_3=0$.
Then, the performance of \svdstack is $\frac{2\beta_1^2}{1+\beta_1^2} = 6/7$.
However, binary-weighted \stacksvd has $\|\theta\|_2^2 = \sqrt{c_3} + 8$, and $\|c\|_1 = c_3+2$.
This yields a performance of: $\frac{(c_3^{1/2}+8)^2 - c_3-2}{(c_3^{1/2}+8)(c_3^{1/2}+9)} = \frac{16 \sqrt{c_3} + 62}{c_3 + 17 \sqrt{c_3} + 72} \to 0$ as $c_3 \to \infty$.

\medskip \noindent\textbf{Improvement over all binary weightings of \stacksvd:}
For completeness, we additionally consider other subset selection rules for binary-weighted \stacksvd.
Concretely, beyond $w_i = \mathds{1}\{\theta_i^4 > c_i\}$, one can generally consider any subset $\CS \subseteq [M]$ of the tables to perform unweighted \stacksvd on.
As we show, \svdstack can outperform all binary weightings of \stacksvd.
This can happen when tables have similar $\beta_i$'s, but largely different $c_i$'s.
As a concrete example, consider the case where $\theta_1= \sqrt{5}$ with $c_1=1$, and $\theta_2 = 4$ with $c_2=38.4$ where, $\beta_1=\beta_2=0.8$.
Here, binary \stacksvd has a performance of $0.869$, which decreases to 0.8 if either of the two tables is used on its own.
Weighted \svdstack on the other hand has a performance of $0.889$.
This implies that there exist examples where \svdstack outperforms \textit{any} binary weighting of \stacksvd, not just the naive one of $w_i = \mathds{1}\{\beta_i > 0\}$. \end{proof}

As shown by the above examples, the key to \svdstack outperforming \stacksvd is the presence of tables with large $c_i$ and relatively small $\theta_i$, to dilute the signal of the other tables.
This is in fact necessary, as shown in the following proposition.

\begin{prop} \label{thm:stacksvd_binary_optimal_svd_stack}
    \stacksvd with weights $w_i = \mathds{1}\{\beta_i > 0\}$ dominates optimally-weighted \svdstack when $c_i \le 1$ for all $i$, yielding strict improvement when $\beta_{2}>0$.
\end{prop}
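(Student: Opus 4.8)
The plan is to reduce the comparison to a single scalar inequality and then dispatch it with the mediant inequality, invoking $c_i\le 1$ only at the last step. First I would pin down the two performances explicitly. The estimator $\hat v_\stacksvd$ with weights $w_i=\mathds 1\{\beta_i>0\}$ is exactly unweighted \stacksvd applied to the subfamily indexed by $\CS:=\{i:\beta_i>0\}=\{i:\theta_i^4>c_i\}$. Every $i\in\CS$ has $\theta_i^4>c_i$, so $\big(\sum_{i\in\CS}\theta_i^2\big)^2\ge\sum_{i\in\CS}\theta_i^4>\sum_{i\in\CS}c_i$, placing the restricted family strictly above the \stacksvd detectability threshold; \Cref{prop:stacksvd_general} then gives $|\langle\hat v_\stacksvd,v\rangle|^2\pto \frac{T^2-C}{T(T+1)}$ with $T:=\sum_{i\in\CS}\theta_i^2$, $C:=\sum_{i\in\CS}c_i$. (If $\CS=\emptyset$ both methods have performance $0$ and there is nothing to prove, so assume $\CS\ne\emptyset$.) On the other side, \Cref{thm:svdstack_weighted} gives optimally weighted \svdstack performance $\frac{S}{S+1}$ with $S=\sum_{i\in\CS}\frac{\beta_i^2}{1-\beta_i^2}$, and \Cref{prop:single_table} yields the identity $\frac{\beta_i^2}{1-\beta_i^2}=\frac{\theta_i^4-c_i}{\theta_i^2+c_i}$.

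Next I would reduce to a scalar inequality. Since $t\mapsto t/(t+1)$ is increasing on $[0,\infty)$ and $\frac{T^2-C}{T(T+1)}=\frac{\rho}{\rho+1}$ with $\rho:=\frac{T^2-C}{T+C}$, the target inequality $\frac{T^2-C}{T(T+1)}\ge\frac{S}{S+1}$ is equivalent to $\rho\ge S$, i.e.
\[
\frac{T^2-C}{T+C}\ \ge\ \sum_{i\in\CS}\frac{\theta_i^4-c_i}{\theta_i^2+c_i}.
\]
Writing $a_i:=\theta_i^2>0$, $b_i:=c_i\in(0,1]$, $A:=\sum_{i\in\CS}a_i$, $B:=\sum_{i\in\CS}b_i$, and using the identities $\frac{a^2-b}{a+b}=a-b-\frac{b(1-b)}{a+b}$ applied to each pair $(a_i,b_i)$ and to $(A,B)$, both sides collapse to $A-B$ minus a positive term, and the inequality becomes
\[
\frac{B(1-B)}{A+B}\ \le\ \sum_{i\in\CS}\frac{b_i(1-b_i)}{a_i+b_i}.
\]

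To prove this last inequality, note first that since $b_i\le B$ we have $1-b_i\ge 1-B$, so the right side is at least $(1-B)\sum_{i\in\CS}\frac{b_i}{a_i+b_i}$. If $B\ge 1$, the left side is $\le 0$ while every summand on the right is $\ge 0$ (this is the only place $c_i\le 1$ is used), so we are done. If $B<1$, the mediant inequality $\frac{\sum b_i}{\sum (a_i+b_i)}\le\max_i\frac{b_i}{a_i+b_i}\le\sum_i\frac{b_i}{a_i+b_i}$ gives $(1-B)\sum_i\frac{b_i}{a_i+b_i}\ge(1-B)\frac{B}{A+B}$, which is exactly the left side. For the strict improvement when $\beta_2>0$ (equivalently $|\CS|\ge 2$): then $b_i<B$ strictly for every $i\in\CS$, so $1-b_i>1-B$ and the first bound is strict when $B<1$; and when $B\ge 1$ the right side is strictly positive (not all of the $|\CS|\ge 2$ values $b_i$ can equal $1$), while the left side is $\le 0$. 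So the inequality is strict in all cases with $|\CS|\ge 2$.

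The only step with real content is finding the reduction in the second paragraph, in particular the decomposition $\frac{a^2-b}{a+b}=a-b-\frac{b(1-b)}{a+b}$, which is what isolates the hypothesis $c_i\le 1$ (it makes each term $\frac{b_i(1-b_i)}{a_i+b_i}$ nonnegative) and turns the problem into a mediant comparison; the rest is bookkeeping. The one other point to be careful about is that the correct limiting performance of binary-weighted \stacksvd is obtained by applying \Cref{prop:stacksvd_general} to the restricted family $\{X_i:i\in\CS\}$, giving denominator $T(T+1)$ with $T=\sum_{i\in\CS}\theta_i^2$, rather than a per-table sum in the denominator.
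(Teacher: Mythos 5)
Your proof is correct and follows essentially the same route as the paper's: both apply the increasing map $z \mapsto z/(1-z)$ to reduce the comparison to $\sum_{i\in\CS}\frac{\theta_i^4-c_i}{\theta_i^2+c_i}$ versus $\frac{T^2-C}{T+C}$, and both use the identical decomposition $\frac{a^2-b}{a+b}=(a-b)-\frac{b(1-b)}{a+b}$ to cancel the $A-B$ terms and isolate the inequality $\frac{B(1-B)}{A+B}\le\sum_{i\in\CS}\frac{b_i(1-b_i)}{a_i+b_i}$, which is exactly where $c_i\le 1$ enters in the paper as well. The only divergence is the final step: the paper cases on $B$ versus $1$ and, for $B<1$, cross-multiplies and expands into pairwise terms whose nonnegativity is compared against $-2\sum_{j<k}c_jc_k$, whereas you use $1-b_i\ge 1-B$ together with $\frac{B}{A+B}\le\sum_{i\in\CS}\frac{b_i}{a_i+b_i}$; your finish is somewhat more economical but buys nothing structurally different. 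One small slip in your strictness argument: for $B\ge 1$ you claim the right side is strictly positive because ``not all $b_i$ can equal $1$,'' but nothing prevents, say, $c_1=c_2=1$ with $\CS=\{1,2\}$, in which case the right side is $0$. Strictness survives anyway, since then $B=|\CS|\ge 2>1$ makes the left side $\frac{B(1-B)}{A+B}$ strictly negative; you should replace the parenthetical with this observation (and note that if $B=1$ exactly with $|\CS|\ge 2$ then every $b_i<1$, so the right side genuinely is positive).
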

This proposition (proof in \Cref{app:binary_weighting}) indicates that, whenever we are able to detect and discard tables with signals below the detectability threshold, \stacksvd outperforms weighted \svdstack, as long as the tables are not too large.

Finally, we show that if we improve the weighting of \stacksvd beyond binary, and instead optimize over \textit{all} weightings, that optimally weighted \stacksvd dominates all other methods without any restrictions on $c$.

\begin{thm}
    \label{prop:dominance}
    Optimally weighted \stacksvd dominates unweighted \stacksvd and optimally weighted \svdstack, providing strict improvement when $\theta_i$ are not all equal, and when at least two $\theta_i$ are nonzero (respectively).
\end{thm}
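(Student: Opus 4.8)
The plan is to prove the two dominance claims separately, handling optimally weighted \stacksvd vs.\ unweighted \stacksvd first (the easy direction) and then optimally weighted \stacksvd vs.\ optimally weighted \svdstack (the substantive direction). For the first claim, unweighted \stacksvd corresponds to the feasible weight choice $w=(1,\ldots,1)$, so by definition of $w\opt$ as a maximizer of the limiting value of $|\langle \hat v_\stacksvd(w),v\rangle|^2$, optimally weighted \stacksvd can only do better. Strictness when the $\theta_i$ are not all equal follows from the Cauchy--Schwarz gap noted just after \Cref{thm:stacksvd_weighted}: $\left(\sum_i\theta_i^2\right)^2\le\left(\sum_i\theta_i^4/c_i\right)\left(\sum_i c_i\right)$ with equality iff $\theta_i^2/c_i$ is constant; I would need to check that this strict inequality at the level of detectability thresholds propagates to a strict inequality of the performance values $\gamma\opt$ versus the unweighted formula in \Cref{prop:stacksvd_general}, which should follow from monotonicity of the relevant implicit equation.

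For the main claim, the natural strategy is to exhibit, for \emph{any} $\theta,c$, a weighting $w$ of \stacksvd whose limiting performance is at least $S/(S+1)$ with $S=\sum_i \beta_i^2/(1-\beta_i^2)$, the performance of optimally weighted \svdstack from \Cref{thm:svdstack_weighted}; optimality of $w\opt$ then finishes. First I would reparametrize: recall $\beta_i^2=(\theta_i^4-c_i)/(\theta_i^4+\theta_i^2)$ when $\theta_i^4\ge c_i$ and $0$ otherwise, so $\beta_i^2/(1-\beta_i^2)=(\theta_i^4-c_i)/(\theta_i^2+c_i)$ on the detectable tables. Thus $S=\sum_{i:\theta_i^4>c_i}(\theta_i^4-c_i)/(\theta_i^2+c_i)$, matching the expression in \Cref{tab:results_summary}. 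The target inequality $\gamma\opt\ge S/(S+1)$, using the implicit characterization $\sum_i\theta_i^4\frac{1-\gamma\opt}{c_i+\gamma\opt\theta_i^2}=1$, becomes: show that plugging $x=S/(S+1)$ into the left-hand side gives a value $\ge 1$ (since the left side is decreasing in $x$ on $(0,1)$, this forces $\gamma\opt\ge x$). That is, I must verify
\[
\sum_{i=1}^M \theta_i^4\,\frac{1/(S+1)}{c_i+\theta_i^2 S/(S+1)}=\sum_{i=1}^M\frac{\theta_i^4}{(S+1)c_i+\theta_i^2 S}\ \ge\ 1.
\]
Restricting the sum to detectable tables only makes it smaller, so it suffices to prove this with the sum over $i$ with $\theta_i^4>c_i$. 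Writing $s_i:=\beta_i^2/(1-\beta_i^2)=(\theta_i^4-c_i)/(\theta_i^2+c_i)$ so that $S=\sum s_i$, a short computation should show $\theta_i^4/\big((S+1)c_i+\theta_i^2 S\big)\ge s_i/S$ for each detectable $i$ (this is where the algebra lives: it should reduce, after clearing denominators, to something like $S\theta_i^4\ge s_i((S+1)c_i+\theta_i^2 S)$, i.e.\ using $s_i(\theta_i^2+c_i)=\theta_i^4-c_i$, to a true inequality); summing over $i$ then yields $\sum_i\theta_i^4/((S+1)c_i+\theta_i^2 S)\ge \sum_i s_i/S=1$, as desired.

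The main obstacle is the per-table inequality in the last step — verifying $\theta_i^4/((S+1)c_i+\theta_i^2 S)\ge s_i/S$ cleanly, and in particular checking it does not secretly require $c_i\le 1$ (which \Cref{thm:stacksvd_binary_optimal_svd_stack} needed but this theorem must avoid). I would double-check by testing the equality case: when all $\theta_i,c_i$ are equal the two methods should tie (\Cref{thm:simple_thm1}), so the per-table inequality must be tight there, which pins down the correct algebraic manipulation. If the naive per-table bound fails for some configurations, the fallback is to not split termwise but instead bound $\sum_i\theta_i^4/((S+1)c_i+\theta_i^2 S)$ directly, e.g.\ via a Cauchy--Schwarz or tangent-line argument in the variables $s_i$, exploiting convexity of $x\mapsto x/((S+1)+x S/\theta_i^2\cdot\ldots)$ — but I expect the termwise inequality to work. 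Finally, for the strictness claim ("strict improvement when at least two $\theta_i$ are nonzero"), I would track when the per-table inequalities are simultaneously tight: tightness forces a rigid relation among the $(\theta_i,c_i)$, and with two genuinely distinct detectable tables at least one inequality is strict, giving $\gamma\opt>S/(S+1)$; combined with optimality of $w\opt$ this completes the proof.
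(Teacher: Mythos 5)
Your proposal follows essentially the same route as the paper's proof: reduce to showing $f(S/(S+1))\ge 1$ via monotonicity of the implicit equation, restrict to detectable tables, and establish the per-table bound $\theta_i^4/\bigl((S+1)c_i+\theta_i^2 S\bigr)\ge s_i/S$, which after clearing denominators and using $s_i(\theta_i^2+c_i)=\theta_i^4-c_i$ collapses to $S\ge s_i$ — true with no restriction on $c_i$, and strict exactly when another table carries signal, giving the stated strictness. This is the paper's argument in all essentials, so the proposal is correct.
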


We defer the proof details to \Cref{app:binary_weighting}, and provide a sketch below.
Above the threshold of detectability, the performance of optimally weighted \stacksvd (\Cref{thm:stacksvd_weighted}) is the unique root of the function $f$ defined below:
\begin{equation*} 
    f(x) = \sum_{i=1}^M \theta_i^4 \frac{1-x}{c_i + x\theta_i^2} -1.
\end{equation*}
$f$ is a decreasing function of $x$, so we show that $f(x)>0$ for $x= S/(S+1)$, the performance of optimally weighted \svdstack (\Cref{thm:svdstack_weighted}).

This improvement in performance by optimally weighted \stacksvd can be dramatic for certain problem instances.
In fact, optimally weighted \stacksvd can have asymptotic performance going to 1, even when all other methods fall below the threshold of detectability.
\svdstack fails when $\theta_i^4 \le c_i$ for all $i$, and \stacksvd fails when $\|\theta\|^4 \le \|c\|_1$.
This can occur when $c_i$ is very large for one table with small $\theta_i$, where the remaining tables all have weak signal.
As a concrete example, consider $\theta_1=0$, $c_1$ to be large, and $\theta_2 = \hdots = \theta_{M}=1$ and $c_2 = \hdots = c_{M} = 1$.
Then,  $\|\theta\|^4 = (M-1)^2$, and so for $c_1>(M-1)^2 - (M-1)$ unweighted \stacksvd is unable to recover $v$.
Additionally, all tables fall below the marginal detection threshold, so \svdstack fails as well.
Thus, optimally weighted \stacksvd has nonzero asymptotic performance, but all other methods fail.
We numerically validate this surprising behavior in \Cref{fig:fig3_interesting}c.

\Cref{prop:dominance} shows that optimally weighted \stacksvd dominates both optimally weighted \svdstack and unweighted \stacksvd.
Initially, one might think that optimally binary-weighted \stacksvd may be able to bridge this gap, as we showed in \Cref{thm:stacksvd_binary_optimal_svd_stack}.
However, as we show in the following proposition, optimal weighting is indeed critical: we can construct examples where optimally-weighted \svdstack and optimally binary-weighted \stacksvd fall below the threshold of detectability, but optimally weighted \stacksvd has performance going to 1.

\begin{prop}
    \label{prop:binarystacksvd_inadmissable}
    For any $\eps\in(0,1)$, there exists a problem instance of size $M= \lceil e^{-\gamma} \exp(2/\eps)\rceil$ where optimally weighted \stacksvd has asymptotic performance greater than $1-\eps$, while optimally binary-weighted \stacksvd and optimally weighted \svdstack both fall below their recovery thresholds.
    This additionally implies that unweighted \stacksvd and \svdstack are also below their recovery thresholds.
\end{prop}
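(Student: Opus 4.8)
The plan is to write down an explicit one-parameter family of instances. Fix $\eps\in(0,1)$, let $\gamma$ be the Euler--Mascheroni constant, $H_M=\sum_{i=1}^M 1/i$, and set $M=\lceil e^{-\gamma}\exp(2/\eps)\rceil$. Take all signal strengths equal, $\theta_1=\dots=\theta_M=\theta_0$ for a large constant $\theta_0$ to be fixed at the very end, and let the aspect ratios grow linearly, $c_i = 2i\,\theta_0^4$. Against the closed forms collected in \Cref{tab:results_summary} I would then check three things: (i) both unweighted and optimally weighted \svdstack lie below the detectability threshold; (ii) no binary weighting of \stacksvd, and in particular not unweighted \stacksvd, exceeds its detectability threshold; (iii) optimally weighted \stacksvd has asymptotic performance strictly above $1-\eps$. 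The last sentence of the proposition is then automatic, being contained in (i) and in the $\CS=[M]$ case of (ii).

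For (i), note $\theta_i^4/c_i = 1/(2i)\le 1/2<1$ for every $i$, so by \Cref{prop:single_table} all $\beta_i=0$; hence $\beta_1=0$ and $\max_i\theta_i^4/c_i<1$, and by \Cref{thm:svd_stack_general} together with \Cref{thm:svdstack_weighted} the squared inner products $|\langle\hat v_\svdstack,v\rangle|^2$, weighted or not, converge in probability to $0$. For (ii), for any $\CS\subseteq[M]$ with $|\CS|=k$, monotonicity of $i\mapsto c_i$ gives
\[
\sum_{i\in\CS}c_i \;\ge\; \sum_{j=1}^k c_j \;=\; \theta_0^4\sum_{j=1}^k 2j \;=\; k(k+1)\theta_0^4 \;>\; k^2\theta_0^4 \;=\; \Bigl(\sum_{i\in\CS}\theta_i^2\Bigr)^2 ,
\]
so every subset is strictly below the \stacksvd detectability threshold; by \Cref{cor.2} and its subset-optimized extension, optimally binary-weighted \stacksvd (and unweighted \stacksvd, the choice $\CS=[M]$) converges in probability to $0$.

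For (iii), here $\sum_i\theta_i^4/c_i=\tfrac12 H_M>1$ since $M\ge 5$, so \Cref{thm:stacksvd_weighted} applies and the performance $\gamma\opt$ is the unique $x\in(0,1)$ with $\sum_{i=1}^M\theta_i^4\frac{1-x}{c_i+x\theta_i^2}=1$, which simplifies to
\[
(1-x)\sum_{i=1}^M \frac{1}{2i+x/\theta_0^2} \;=\; 1 .
\]
Each summand lies in $\bigl(\tfrac{1}{2i+1/\theta_0^2},\tfrac{1}{2i}\bigr]$, so the left-hand side converges uniformly on $x\in[0,1]$ to $(1-x)\tfrac12 H_M$ as $\theta_0\to\infty$; by strict monotonicity of both sides in $x$ this forces $\gamma\opt\to 1-2/H_M$. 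Finally $M\ge e^{-\gamma}\exp(2/\eps)$ gives $H_M>\ln M+\gamma\ge 2/\eps$, hence $1-2/H_M>1-\eps$ with a strictly positive gap, and fixing $\theta_0$ sufficiently large (depending only on $\eps$) yields $\gamma\opt>1-\eps$.

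The main obstacle is arranging all three requirements at once. Forcing binary \stacksvd to fail on every one of the $2^M$ subsets requires $c_i$ to grow at least linearly in $i$, with the prefactor $2$ calibrated precisely so the partial sums $\sum_{j=1}^k c_j$ strictly beat $k^2\theta_0^4$; simultaneously, driving optimally weighted \stacksvd toward $1$ needs $\sum_i\theta_i^4/c_i=\tfrac12 H_M$ to diverge, which is exactly what dictates $M=\Theta(e^{2/\eps})$ and the $e^{-\gamma}$ correction. A secondary technical point is that $\gamma\opt$ is defined only implicitly; this is handled by sending $\theta_0\to\infty$ to enter the regime $c_i\gg\theta_i^2$, where $\gamma\opt\to 1-1/\sum_i(\theta_i^4/c_i)$, and by exploiting monotonicity of $x\mapsto(1-x)\sum_i\theta_i^4/(c_i+x\theta_i^2)$ so that only one-sided estimates are needed.
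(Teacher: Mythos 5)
Your proof is correct, but it takes a genuinely different construction from the paper's. The paper fixes $\theta_i=1$ and $c_i=2i-1$, so that every prefix of tables sits \emph{exactly} at the binary \stacksvd detectability threshold ($(\sum_{i\le k}\theta_i^2)^2=k^2=\sum_{i\le k}c_i$), and then verifies the performance bound directly by plugging $x=1-\eps$ into $f(x)=\sum_i\theta_i^4\frac{1-x}{c_i+x\theta_i^2}$ and lower-bounding $\frac{\eps}{2}\sum_{i=1}^M\frac{1}{i-\eps/2}\ge\frac{\eps}{2}H_M\ge\frac{\eps}{2}(\ln M+\gamma)\ge 1$; this gives a single fully explicit instance per $M$ with a one-line check and no auxiliary limit. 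You instead use a two-parameter family $\theta_i=\theta_0$, $c_i=2i\theta_0^4$ and send $\theta_0\to\infty$: this buys strict (rather than borderline) failure of every one of the $2^M$ binary weightings, with the margin $k(k+1)\theta_0^4>k^2\theta_0^4$, and a clean closed-form limit $\gamma\opt\to 1-2/H_M$ obtained by uniform convergence of $(1-x)\sum_i\frac{1}{2i+x/\theta_0^2}$ to $(1-x)H_M/2$ together with monotonicity in $x$. The cost is that your instance is only asymptotically specified (you must take $\theta_0$ large depending on $\eps$), whereas the paper's is pinned down once $M$ is. Both arguments hinge on the same mechanism --- the divergence $H_M\ge\ln M+\gamma\ge 2/\eps$, which is exactly what dictates $M=\lceil e^{-\gamma}\exp(2/\eps)\rceil$ --- and both correctly reduce the subset condition to prefixes by monotonicity of $c_i$. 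Your steps (i)--(iii) all check out against \Cref{prop:single_table}, \Cref{cor.2}, and \Cref{thm:stacksvd_weighted}, so this is a valid alternative proof.
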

Notationally, $\gamma \approx .577$ denotes the Euler-Mascheroni constant \cite{dence2009survey}.
We construct such an example by taking $\theta_i=\theta_0=1$ for all $i$, and $c_i = 2i-1$.
This satisfies the necessary requirements, as firstly each table is below the threshold of detectability marginally.
Secondly, the optimal binary weighting must select a prefix of tables (as the tables are sorted by decreasing SNR), where each prefix is carefully balanced to sit exactly at the threshold of detectability.
We defer additional details to \Cref{app:binary_weighting_opt}, and numerically validate the increasing performance of optimally weighted \stacksvd in \Cref{fig:binary_stack_svd_inadmissable}.

\section{Extensions} \label{sec:extensions}
Thus far we have focused on the rank 1 setting, and assumed that all the signal parameters (the $\theta_i$) were known.
In this section, we show that our results naturally extend to any finite rank $r>1$ shared components, and that $\theta_i$ can be estimated from data.
To study the rank $r$ case, we recall the initial model where:
\begin{equation} \label{eq:rank_r_model}
    X_i = \sum_{j=1}^r \theta_{ij} u_{ij} v_j^\top + E_i.
\end{equation}
Here, $\{v_j\}_{1\le j\le r}$ are shared orthonormal vectors, and $\{u_{ij}\}_{1\le j\le r}$ are individual-specific loadings, which are all orthonormal.
We define the matrix $V \in \R^{d \times M}$ with columns $v_j$.
We make the following two assumptions to analyze these methods.

\begin{assum} \label{assum:rank_r}
   We assume that the matrices $\{X_i\}$ are generated according to the rank-$r$ model \eqref{eq:rank_r_model}, and satisfy the classical RMT scaling limits in \eqref{eq:RMT_limit}.
\end{assum}

\begin{assum}\label{assum:thetai_ordered}
We assume that $\theta_{i1} > \ldots > \theta_{ir}>0$ for all $1 \leq i \leq M$.
\end{assum}

The first assumption simply provides the setting, while the second, that the $\theta_i$ follow the same order for each table, is for simplicity of exposition.
We relax this assumption to allow for essentially arbitrary $\theta_{ij}$ (unordered, some components can have equal value) in \Cref{sec:rank_r}. In this section, we derive the performance of the rank-$r$ variants of \stacksvd and \svdstack using the optimal weights derived in section \ref{sec:optimal_weighting}.

\subsection{Rank-r \stacksvd}

Our approach to extend weighted \stacksvd to the rank $r$ setting is inspired by the weighted PCA model of \cite{hong2023optimally}.
Algorithmically, due to the differing weightings required for each component in this rank-$r$ setting, we assemble for each $j=1,\hdots,r$ the matrix $X_\text{Stack}^{(j)}$ based on the optimal weightings $w_{ij}$ for table $i$ and component $j$. Formally:
\begin{equation*}
    X_\text{Stack}^{(j)}
    := \begin{bmatrix}
        w_{1,j} X_1 \\
        \vdots \\
        w_{M,j} X_M 
    \end{bmatrix},\qquad \text{where}\quad  w_{ij} = \frac{\theta_{ij}}{\sqrt{\theta_{ij}^2 + c_i}}.
\end{equation*}
For each table $X_\text{Stack}^{(j)}$, we denote its $j$-th largest right singular vector as $\hat{v}_{j, \stacksvd} := v_j(X_\text{Stack}^{(j)})$, and define $\hat{V}_\stacksvd = 
    \begin{bmatrix}
        \hat{v}_{1, \stacksvd} & \ldots & \hat{v}_{r, \stacksvd}
    \end{bmatrix}$, which simply forms $\{\hat{v}_{j, \stacksvd}\}$ into a $d \times r$ matrix.
Note that the columns of $\hat{V}_\stacksvd$ will be asymptotically orthogonal. 
To analyze the asymptotic performance of this method for the $j$-th component, we define analogously to \Cref{thm:stacksvd_weighted}:
\begin{equation} \label{eq:stacksvd_gammak}
    \gamma_j = \text{ the unique solution $x \in (0,1)$ of } \sum_{i=1}^M \theta_{ij}^4 \frac{1-x}{c_i + x \theta_{ij}^{2}} = 1.
\end{equation}
This enables us to provide the following Corollary regarding rank-$r$ weighted \stacksvd.

\begin{cor}
    \label{thm:rank_r_stacksvd}
    \hspace{-.3cm}
    Under \Cref{assum:rank_r,assum:general_noise,assum:thetai_ordered}, rank-$r$ weighted-\stacksvd satisfies:
    \begin{align*}
        \left|\langle v_j,\hat{v}_{j, \stacksvd}\rangle\right|^2 &\pto \gamma_j,\\
        \left\| V^\top \hat{V}_\stacksvd \right\|_F^2 &\pto \sum_{j=1}^r \gamma_j.
    \end{align*}
\end{cor}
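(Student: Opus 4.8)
The plan is to reduce the rank-$r$ statement to $r$ separate rank-one analyses, each of which is handled by Theorem~\ref{thm:stacksvd_weighted}. For a fixed component index $j$, consider the auxiliary family of data matrices obtained by subtracting off all the other shared components:
\begin{equation*}
    X_i^{(j)} := X_i - \sum_{\ell \neq j} \theta_{i\ell}\, u_{i\ell} v_\ell^\top = \theta_{ij}\, u_{ij} v_j^\top + E_i.
\end{equation*}
This is exactly a rank-one joint signal-plus-noise model of the form \eqref{eq:model_rank1} with signal strengths $\{\theta_{ij}\}_i$, shared right vector $v_j$, and the same noise matrices $E_i$ satisfying \Cref{assum:general_noise}. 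Applying Theorem~\ref{thm:stacksvd_weighted} to this family with the weights $w_{ij} = \theta_{ij}/\sqrt{\theta_{ij}^2 + c_i}$ gives that the leading right singular vector of the weighted stack converges in the sense that its squared inner product with $v_j$ tends to $\gamma_j$ as defined in \eqref{eq:stacksvd_gammak}. The first displayed limit in the corollary then follows, provided we can argue that replacing $X_i^{(j)}$ by the actual $X_i$ (which additionally contains the other signal directions) does not change the leading singular vector asymptotics of $X_{\text{Stack}}^{(j)}$, and provided we correctly track which singular vector is being extracted.

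The two points flagged above are where the real work lies. First, for the weighting $w_{\cdot,j}$, the matrix $X_{\text{Stack}}^{(j)}$ has a spiked population covariance $\sum_\ell (\sum_i w_{ij}^2\theta_{i\ell}^2)\, v_\ell v_\ell^\top$ that is rank $r$, not rank one; its $j$-th eigenvalue is the one associated with the $v_j$ direction under the chosen weights, and \Cref{assum:thetai_ordered} together with the monotonicity of $w\mapsto w\theta^2/(w\theta^2+c)$-type quantities is what guarantees that the spike magnitudes inherit the same ordering across $\ell$, so that the $j$-th right singular vector of $X_{\text{Stack}}^{(j)}$ is indeed the one aligned with $v_j$ rather than with some $v_\ell$, $\ell\neq j$. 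I would invoke the multi-spike version of the Benaych-Georges--Nadakuditi / \cite{10.3150/19-BEJ1129} machinery (the same results underlying Theorem~\ref{thm:stacksvd_weighted}, applied coordinatewise in the population eigenbasis) to conclude both that $|\langle v_j, \hat v_{j,\stacksvd}\rangle|^2 \pto \gamma_j$ and that $|\langle v_\ell, \hat v_{j,\stacksvd}\rangle|^2 \pto 0$ for $\ell\neq j$, as well as asymptotic delocalization against any fixed direction orthogonal to $\Span\{v_1,\dots,v_r\}$. The main obstacle is establishing this separation cleanly — i.e.\ verifying that the BBP-type phase transition and eigenvector overlap formulas, which are stated for a single spike in the cited sources, carry over to the finitely-many-spike setting with heteroscedastic (block-constant) noise variances across the stacked rows; this is essentially the content of the rank-$r$ extension of \cite{hong2023optimally} and I would cite/adapt it, deferring the detailed verification to \Cref{app:weightedStackSVDProof} and \Cref{sec:rank_r}.

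For the Frobenius-norm statement, I would write
\begin{equation*}
    \left\| V^\top \hat V_\stacksvd \right\|_F^2 = \sum_{j=1}^r \sum_{\ell=1}^r \left|\langle v_\ell, \hat v_{j,\stacksvd}\rangle\right|^2,
\end{equation*}
and then apply the per-entry limits just established: the diagonal terms $\ell = j$ each converge in probability to $\gamma_j$, while all off-diagonal terms $\ell\neq j$ converge in probability to $0$. Since there are finitely many terms ($r^2$, with $r$ fixed), the sum converges in probability to $\sum_{j=1}^r \gamma_j$ by the continuous mapping theorem and closure of $\pto$ under finite sums. The only subtlety to note here is that $V\in\R^{d\times M}$ is fixed notation from the excerpt but in the rank-$r$ model $V$ has columns $v_1,\dots,v_r$, so the sum over $\ell$ runs to $r$; no further argument beyond the finite-sum convergence is needed, and the asymptotic orthogonality of the columns of $\hat V_\stacksvd$ (already noted in the text) is consistent with but not required for this computation.
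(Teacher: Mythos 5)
Your proposal is correct and, despite its opening framing, lands on essentially the same argument as the paper: the operative steps in both are to apply the multi-spike heteroscedastic machinery of \cite{liu2023asymptotic,hong2023optimally} to the weighted stack $X_{\text{Stack}}^{(j)}$, to use \Cref{assum:thetai_ordered} to guarantee that the spike aligned with $v_j$ is the $j$-th largest under the weighting $w_{\cdot j}$ (the paper makes this bookkeeping explicit via the quantities $\tilde\theta_{jk}$ and the index $\ell_j$, which equals $j$ under the ordering assumption), and then to obtain the Frobenius-norm limit by expanding the finite double sum and letting the cross terms vanish. Your initial rank-one reduction via $X_i^{(j)}$ serves only as a heuristic for the form of $\gamma_j$ — as you yourself note, the other spikes cannot simply be subtracted away — but since you immediately pivot to the multi-spike analysis and correctly identify both the spike-ordering issue and the need for the multi-spike overlap formulas, this is a presentational rather than a substantive difference.
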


This aggregate measure of similarity $\| V^\top \hat{V}_\stacksvd \|_F^2$ is simply the sum of the squared cosines of the principal angles when $\hat{V}$ is orthonormal.
However, for any matrix $\hat{V}_\stacksvd$ with unit norm columns, this subspace similarity measure is always bounded above by $r$.
We discuss the algorithmic and proof details in \Cref{sec:unorderedrankrstack}, and show how this extends to unordered $\theta_{ij}$.

\subsection{Rank-r \svdstack} 

To study \svdstack in the rank $r$ setting, we analogously define $\beta_{ij}$ as the performance of the $j$-th component in the $i$-th table, and $S_j$ as the overall performance for the $j$-th component aggregated across all tables. Specifically:
\[
\beta_{ij} := \frac{\theta_{ij}^4 - c_i}{\theta_{ij}^4 + \theta_{ij}^2} \mathds{1} \left\{ \theta_{ij}^4 \geq c_i \right\},\qquad S_j := \sum_i \frac{\beta_{ij}^2}{1-\beta_{ij}^2}.
\]
We define $\hat{v}_{ij}$ as the $j$-th right singular vector in the $i$-th table, and define
\begin{align*}
    \tilde{V} := \begin{bmatrix}
        w_{11} \hat{v}_{11} & \hdots & w_{1r} \hat{v}_{1r} & \hdots & w_{M1} \hat{v}_{M1} & \hdots & w_{Mr} \hat{v}_{Mr}
    \end{bmatrix}^\top,\qquad \text{where}\quad  w_{ij} = \theta_{ij} \sqrt{\frac{\theta_{ij}^2 + 1}{\theta_{ij}^2 + c_i}}.
\end{align*}
Algorithmically, our estimator $\hat{V}_\svdstack \in \R^{d \times r}$ is generated by taking the first $r$ right singular vectors of $\tilde{V} \in \R^{Mr \times d}$. 
We can then provide the following guarantees.

\begin{cor}
    \label{thm:rank_r_svdstack}
\hspace{-.3cm} Under \Cref{assum:rank_r,assum:general_noise,assum:thetai_ordered},
rank-$r$ weighted \svdstack satisfies:
\begin{align*}
    \left| \langle v_j, \hat{v}_{j,\svdstack} \rangle \right|^2 &\pto \frac{S_j}{S_j+1} \quad \text{ for $j=1,2,\hdots,r$},\\
    \left\| V^\top \hat{V}_\svdstack \right\|_F^2 &\pto \sum_j \frac{S_j}{S_j+1}.
\end{align*}
\end{cor}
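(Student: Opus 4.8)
The plan is to reduce the rank-$r$ problem to $r$ essentially decoupled rank-one problems, and then invoke Theorem \ref{thm:svdstack_weighted} componentwise. First I would observe that, under \Cref{assum:thetai_ordered}, for each table $i$ the $j$-th singular vector $\hat{v}_{ij} = v_j(X_i)$ is governed by the same single-matrix theory used in \Cref{prop:single_table}; in particular $|\langle \hat v_{ij}, v_j\rangle|^2 \ip \beta_{ij}^2$, while $|\langle \hat v_{ij}, v_\ell\rangle|^2 \ip 0$ for $\ell\neq j$ (the true signal directions $v_1,\dots,v_r$ are mutually orthogonal, and each is a ``deterministic unit vector orthogonal to $v_j$'' from the point of view of the $j$-th spike, so the orthogonal-delocalization half of \Cref{prop:single_table} — suitably extended to finite-rank signals, as in \cite{benaych2012singular,liu2023asymptotic} — applies). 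The key structural fact is then that the $Mr\times Mr$ Gram matrix $\tilde V_w \tilde V_w^\top$ converges in probability to a block-structured limit: writing each weighted row as $w_{ij}\hat v_{ij} \approx w_{ij}\beta_{ij} v_j + w_{ij}\sqrt{1-\beta_{ij}^2}\,(\text{noise direction})$, the noise directions delocalize and are asymptotically uncorrelated across distinct $(i,j)$ pairs — this is the rank-$r$ analogue of \Cref{lem:delocalization}. Consequently the limiting Gram matrix decomposes, after reindexing, into $r$ diagonal blocks $A^{(j)}$, one per component $j$, where $A^{(j)}$ has exactly the form of the matrix analyzed in Theorem \ref{thm:svdstack_weighted} with $\beta$ replaced by $(\beta_{1j},\dots,\beta_{Mj})$ and weights $w_{ij}=\theta_{ij}\sqrt{(\theta_{ij}^2+1)/(\theta_{ij}^2+c_i)}$, plus asymptotically negligible off-block entries.

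Next I would argue that the top $r$ right singular vectors of $\tilde V_w$ align, one-to-one, with the top singular directions of these $r$ blocks. Because $\beta_{1j}\ge\dots$ (or more precisely, because with the optimal weighting each block's leading eigenvalue is $1+S_j$, strictly above the continuous bulk), standard eigenvalue/eigenvector perturbation (Weyl plus Davis–Kahan, or the resolvent argument already used in \Cref{app:weightedSVDStackProof}) shows that for each $j$ there is a singular vector $\hat v_{j,\svdstack}$ of $\tilde V_w$ lying asymptotically in $\mathrm{span}(\{\hat v_{ij}\}_{i\in[M]})$, with its coefficients given by the leading eigenvector of $A^{(j)}$. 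Then the rank-one computation behind Theorem \ref{thm:svdstack_weighted} — in which one writes $\hat v_{j,\svdstack} = \sum_i a_i \hat v_{ij}$, expands $\langle \hat v_{j,\svdstack}, v_j\rangle^2 = (\beta^{(j)\top} a)^2 / (a^\top A^{(j)} a)$, and shows this is maximized by the stated weights with value $S_j/(S_j+1)$ — carries over verbatim for each $j$. The componentwise limits $|\langle v_j,\hat v_{j,\svdstack}\rangle|^2 \ip S_j/(S_j+1)$ follow. For the aggregate statement, since the columns of $\hat V_\svdstack$ become asymptotically orthonormal and each aligns with a distinct $v_j$ (cross terms $\langle v_\ell, \hat v_{j,\svdstack}\rangle^2 \ip 0$ for $\ell\neq j$ by the same delocalization), one has $\|V^\top \hat V_\svdstack\|_F^2 = \sum_{j} \sum_\ell \langle v_\ell, \hat v_{j,\svdstack}\rangle^2 \ip \sum_j S_j/(S_j+1)$.

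The main obstacle I anticipate is rigorously establishing the rank-$r$ delocalization and block-decoupling, i.e.\ that $\langle \hat v_{ij}, \hat v_{i'j'}\rangle \ip 0$ whenever $(i,j)\neq(i',j')$ and the pair is not ``coupled through the same $v_j$.'' For $i\neq i'$ this is the independence argument of \Cref{lem:delocalization} applied component-by-component, but for $i=i'$, $j\neq j'$ one needs that the residuals $(I-\sum_\ell v_\ell v_\ell^\top)\hat v_{ij}$ and $(I-\sum_\ell v_\ell v_\ell^\top)\hat v_{ij'}$ of \emph{two different} singular vectors of the \emph{same} noisy matrix are asymptotically orthogonal — which holds because distinct singular vectors of a single matrix are exactly orthogonal, and subtracting the (small, $O(1)$-dimensional) signal components preserves this up to $o_P(1)$. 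I would also need \Cref{assum:thetai_ordered} (or its relaxation in \Cref{sec:rank_r}) to guarantee that the $r$ leading singular values of each $X_i$ are simple and separated from the bulk, so that the per-matrix SVD picks out the intended directions; handling near-ties in $\theta_{ij}$ across components is exactly what the relaxation in \Cref{sec:rank_r} addresses, and I would defer that subtlety there. The remaining steps are routine given the rank-one machinery already developed.
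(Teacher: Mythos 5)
Your proposal follows essentially the same route as the paper: reorder the rows of $\tilde V_w$ by component, show via the delocalization argument that $\tilde V_w\tilde V_w^\top$ converges entrywise to a block-diagonal matrix with one block per component (each block having exactly one eigenvalue $1+S_j$ above the bulk, by interlacing), and then apply the rank-one machinery of Theorem \ref{thm:svdstack_weighted} block by block, with the cross terms $\langle v_\ell,\hat v_{j,\svdstack}\rangle^2\ip 0$ giving the Frobenius-norm statement. Your explicit treatment of the $i=i'$, $j\neq j'$ decoupling via exact orthogonality of distinct singular vectors of the same matrix is a detail the paper leaves implicit, but the argument is the same.
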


Since the vectors corresponding to different $v_k$ are asymptotically orthogonal, and $\theta_{ij}$ are distinct within a table, the analogous matrix $A_\beta$ corresponding to the limiting behavior of $\tilde{V} \tilde{V}^\top$ is block diagonal with $r$ blocks corresponding to the $r$ components.
We defer the details of this proof to \Cref{sec:rank_r_svdstack}, along with discussion on how this generalizes when $\theta_{ij}$ do not follow \Cref{assum:thetai_ordered}.

\subsection{Estimating unknown signal strength} \label{sec:estimating_theta}
In our construction of optimal weights, and instance-wise comparison of methods, we have assumed that all $\theta_i$ were known.
However, in practice, it must often be estimated from the data. If $\theta_i^4 > c_i$, a consistent estimator of $\theta_i$ can be constructed by correcting the leading singular value for the bias due to the high-dimensional noise \cite{10.3150/19-BEJ1129,liu2023asymptotic}. However, if $\theta_i$ is below this threshold of detectability, it is a priori unclear if it is possible to estimate $\theta_i$ to any degree of accuracy. In particular,
just looking at $X_i$ on its own will not yield any information about $\theta_i$, as the singular value for the signal spike is absorbed into the bulk spectrum \cite{cai2020limiting,dornemann2025tracy}.
As we show, however, by leveraging information from other tables with strong signal we are still able to estimate $\theta_i$.

Our approach for estimating $\theta_i$ that are marginally below the threshold of detectability only requires the existence of at least one table with sufficiently strong signal strength.
Without loss of generality we assume that $X_1$ has $\theta_1^4 > c_1$, and show how to estimate $\theta_2$ despite the possibility that $\theta_2^4 < c_2$.
This is accomplished by performing an SVD of $X_1$, yielding $\hat{v}_1$ and $\sigma_1(X_1)$, where $|\langle v, \hat{v}_1 \rangle|^2 \pto \beta_1^2$. Then, we can estimate $\hat{\beta}_1$ consistently just from $X_1$ by processing $\sigma_1(X_1)$; see Appendix \ref{app:theta_est_proof} for its explicit expression.
Using this, we can estimate $\theta_2$ as:
\begin{equation} \label{eq:theta_estimation}
    \hat{\theta}_2 = \frac{1}{\hat{\beta}_1} \sqrt{\|X_2 \hat{v}_1\|_2^2 - c_2}.
\end{equation}
We prove that this estimator is consistent in the following proposition.

\begin{prop} \label{thm:theta_est}
    Consider two tables $X_1,X_2$ following \Cref{assum:general_noise,assum:main}, where $\theta_1^4 > c_1$.
    Then, $\hat{\theta}_2$ in \Cref{eq:theta_estimation} is a consistent estimator of $\theta_2$, i.e. $\hat{\theta}_2 \pto \theta_2$.
\end{prop}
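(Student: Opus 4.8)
The plan is to substitute the model $X_2 = \theta_2 u_2 v^\top + E_2$ into $\|X_2\hat v_1\|_2^2$ and identify the limit of each resulting term, using crucially that $\hat v_1 = v_{\text{max}}(X_1)$ is a measurable function of $X_1$ and hence independent of $(u_2, E_2)$. Expanding and using $\|u_2\|_2 = 1$,
\[
\|X_2\hat v_1\|_2^2 = \theta_2^2 \langle v,\hat v_1\rangle^2 + 2\theta_2\langle v,\hat v_1\rangle\, u_2^\top E_2\hat v_1 + \hat v_1^\top E_2^\top E_2\hat v_1.
\]
By \Cref{prop:single_table}, $\langle v,\hat v_1\rangle^2 \pto \beta_1^2$, and $\beta_1 > 0$ because $\theta_1^4 > c_1$; hence the first term converges in probability to $\theta_2^2\beta_1^2$.

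Next I condition on $X_1$ and $u_2$, so that $\hat v_1$ and $u_2$ act as fixed unit vectors while $E_2$ still has i.i.d.\ entries of mean zero and variance $1/d$. The cross term: $u_2^\top E_2\hat v_1$ is then a mean-zero linear form with conditional variance $\frac1d\|u_2\|_2^2\|\hat v_1\|_2^2 = \frac1d$, so a conditional Chebyshev bound followed by taking expectations gives $u_2^\top E_2\hat v_1 \pto 0$; as $|\theta_2\langle v,\hat v_1\rangle|\le\theta_2$ is bounded, the entire cross term vanishes in probability. The quadratic form: write $\hat v_1^\top E_2^\top E_2\hat v_1 = \sum_{j=1}^{n_2} Y_j^2$ with $Y_j := \sum_k (E_2)_{jk}(\hat v_1)_k$ i.i.d.\ across $j$, $\E[Y_j\mid X_1,u_2] = 0$, $\E[Y_j^2\mid X_1,u_2] = 1/d$, and — by the bounded fourth moment in \Cref{assum:general_noise} together with $\sum_k(\hat v_1)_k^4 \le (\sum_k(\hat v_1)_k^2)^2 = 1$ — $\E[Y_j^4\mid X_1,u_2] \le (C+3)/d^2$. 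Thus the conditional mean is $n_2/d$ and the conditional variance is at most $n_2(C+3)/d^2 \to 0$, which gives $\hat v_1^\top E_2^\top E_2\hat v_1 \pto c_2$ once we recall $n_2/d\to c_2$.

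Combining the three limits, $\|X_2\hat v_1\|_2^2 - c_2 \pto \theta_2^2\beta_1^2 \ge 0$. Since $\hat\beta_1 \pto \beta_1 > 0$ — the consistency of the bias-corrected leading-singular-value estimator when $\theta_1^4 > c_1$, following \cite{10.3150/19-BEJ1129,liu2023asymptotic}, with the explicit formula recalled in Appendix~\ref{app:theta_est_proof} — and $x\mapsto\sqrt{x_+}$ is continuous, the continuous mapping theorem together with Slutsky's lemma yields
\[
\hat\theta_2 = \frac{\sqrt{\big(\|X_2\hat v_1\|_2^2 - c_2\big)_+}}{\hat\beta_1} \pto \frac{\sqrt{\theta_2^2\beta_1^2}}{\beta_1} = \theta_2,
\]
using $\theta_2,\beta_1\ge 0$; the positive part is immaterial since the limit of the argument is nonnegative (it is only needed for $\hat\theta_2$ to be well-defined when $\theta_2 = 0$). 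The one step that truly requires care is the conditioning argument, which is what lets the delocalization-type estimates for the cross and quadratic terms run under the general noise of \Cref{assum:general_noise} rather than only for Gaussian noise; the rest is Chebyshev's inequality and elementary moment bookkeeping.
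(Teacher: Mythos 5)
Your proof is correct and follows essentially the same route as the paper's: the identical three-term expansion of $\|X_2\hat v_1\|_2^2$, Chebyshev for the cross term after conditioning, and the continuous mapping theorem combined with consistency of $\hat\beta_1$. If anything, your treatment of the quadratic term is slightly more complete than the paper's, since you bound its conditional variance via the fourth-moment assumption to establish concentration around $n_2/d$, whereas the paper only computes the expectation.
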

We defer the proof of this claim to \Cref{app:theta_est_proof}.
Essentially, this estimation is possible because while the singular value corresponding to $v$ falls in the bulk, there is still $\theta_2$ ``excess signal strength'' in this direction.
Thus, $\|X_2 v\|_2^2 \ip c_2 + \theta_2^2$, where for a fixed vector $y$ not aligned with $v$, independent of $X_2$, we would have that $\|X_2 y\|_2^2 \ip c_2$.
Leveraging the identifiable signal in $X_1$, we can successfully estimate $\theta_2$.
As we show via numerical simulations (\Cref{fig:theta_estimation}), this estimator is quite accurate in practice.

\section{Experiments} \label{sec:experiments}
We provide extensive simulations on synthetic data, demonstrating the close correspondence across myriad settings between our theoretical results and the empirical performance of weighted and unweighted \stacksvd and \svdstack.
To begin, we validate our theoretical predictions across a wide range of $\theta_i$ and $c_i$.
In particular, we set $M = 5$, sampled $c_i^{1/4} \iid \text{Expo}(1) + 0.1$ (exponentially distributed) and $\theta_i = c_i^{1/4} \exp(W)$, where $W \sim \mathcal{N}(\mu, 1/100)$.
Across all simulations we set $d = 1000$ and $n_i = d c_i$ (rounded to the nearest integer).
For each choice of $\mu \in \{0,.4,.7\}$, we randomly generated 100 such vectors $c,\theta$.
For each fixed $c$ and $\theta$, we simulated $10$ replicates and applied each of the considered methods to obtain an estimate of $|\langle \hat{v}, v \rangle|^2$. We then compared this to the theoretical prediction for each method to obtain a bias and standard error.
\Cref{fig:asymptotic_bias}a-b shows that, in general, all methods concentrate closely to the theoretical value, with the exception of \svdstack which exhibits a small positive bias that decreases as $\mu$ increases.
Moreover, the variance of \svdstack is typically larger than \stacksvd, especially when $\mu$ is closer to $0$.
This increased variance could be the result of larger fluctuations in the individual singular vectors that are propagated to the final result.
As we show in our subsequent figures, our theoretical predictions are asymptotically accurate as $d \to \infty$.

\begin{figure}
    \centering
    \vspace{-.5cm}
    \includegraphics[scale=0.5]{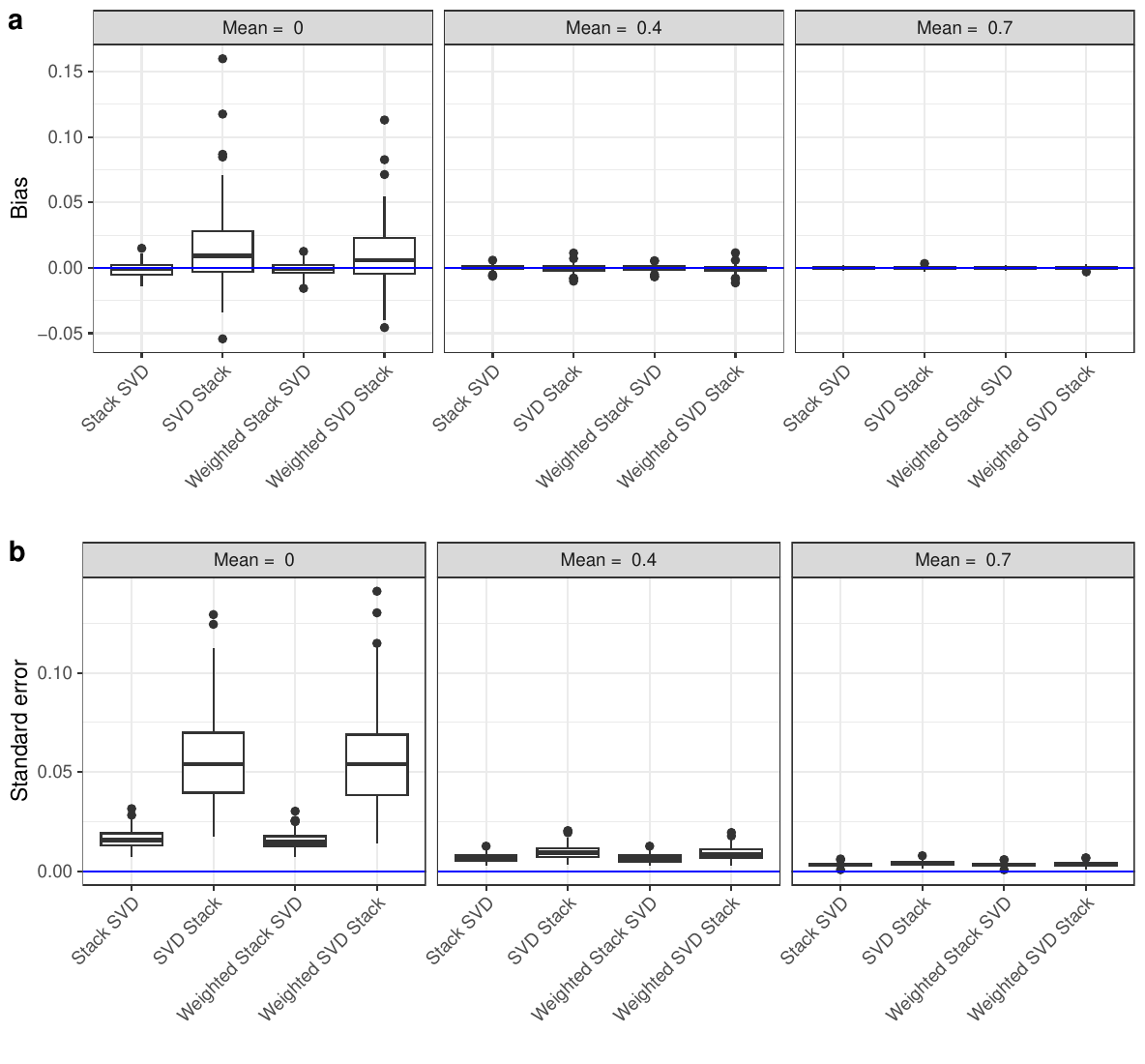}
    \vspace{-.7cm}
    \caption{For each mean $\mu \in \{0,.4,.7\}$, $100$ values of $c^{1/4}$ and $\theta$ were sampled according to the procedure specified in the main text.
    For each fixed value of $c$ and $\theta$, $10$ replicates were generated and $|\langle \hat{v}, v\rangle|^2$ was compared to its theoretical prediction to obtain estimates of the bias (\textbf{a}) and standard error (\textbf{b}).}
    \label{fig:asymptotic_bias}
        \vspace{-.5cm}
\end{figure}

Next, we examine the convergence behavior of different methods in three representative scenarios, shown in \Cref{fig:2_rate}. 
In each case, the empirical performance of the different methods (solid lines) rapidly approaches the corresponding theoretical predictions (dashed lines). 
We evaluate the rate of convergence as $d \to \infty$ by considering settings with $M=2$ and three different signal configurations: $\theta=[\theta_1,\theta_2]$ is slightly below the phase transition (\Cref{fig:2_rate}a), slightly above it (\Cref{fig:2_rate}b), and significantly above it (\Cref{fig:2_rate}c).
We find that the convergence to the theoretical value is faster when $\theta$ is farther from the threshold. 
Interestingly, in \Cref{fig:2_rate}a where both tables are below the marginal threshold of detectability ($\theta = [.98,.76]$), we expect zero performance for both weighted and unweighted \svdstack.
However in practice these methods still exhibit moderate performance when $d$ is small, which only gradually converge to the theoretical limit of 0 as $d\to\infty$, showing the slow convergence from \Cref{fig:asymptotic_bias}.

\begin{figure}[t]
    \centering
    \includegraphics[scale=0.5]{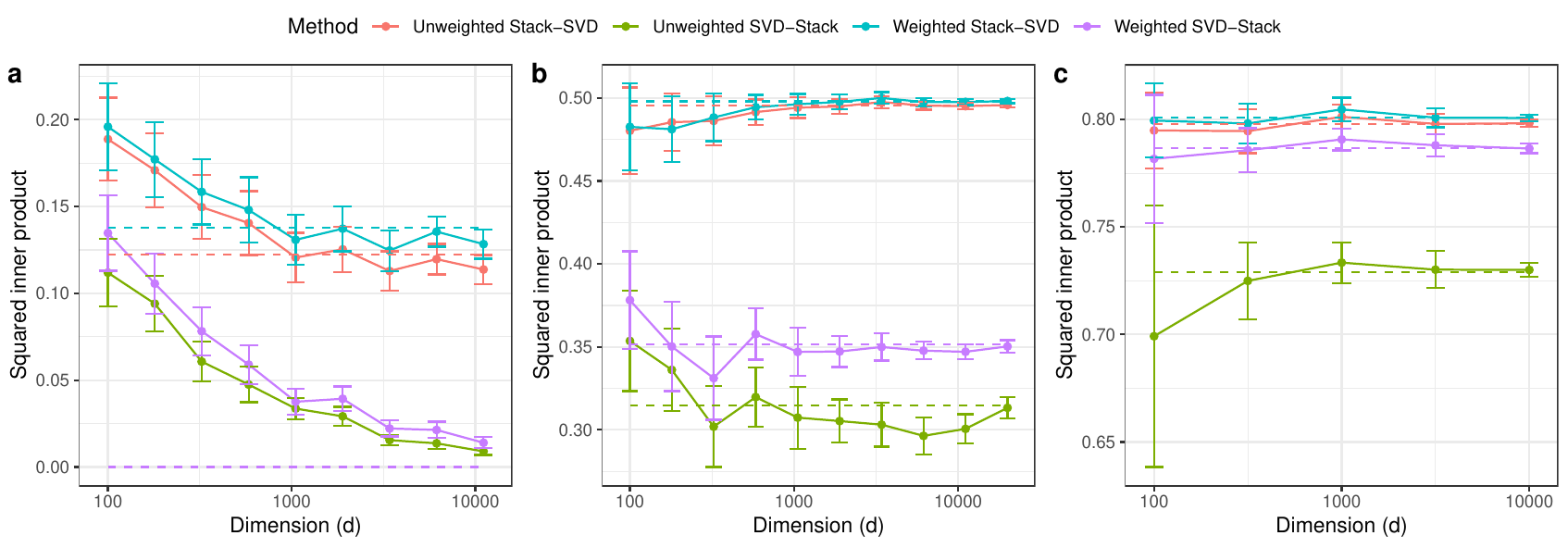}
    \vspace{-.4cm}
    \caption{Empirical results converge to theoretical predictions for $M=2$ tables with differing signal strengths, varying $d$. 
    \textbf{a.} $\theta=[.98,.76]$, matrices are close to the threshold of detectability, necessitating larger $d$ for convergence.
    \textbf{b.} $\theta=[1.2,1.05]$, matrices are slightly above the marginal threshold for convergence, and so all methods require a moderate number of iterations.
    \textbf{c.} $\theta=[2,1.3]$, all matrices are substantially above the threshold, yielding rapid convergence.
    Error bars represent $\pm 1.96$ times the standard error of the mean across $100$ simulations.}
    \label{fig:2_rate}
    \vspace{-.5cm}
\end{figure}

\begin{figure}[b]
    \centering
    \includegraphics[scale=0.5]{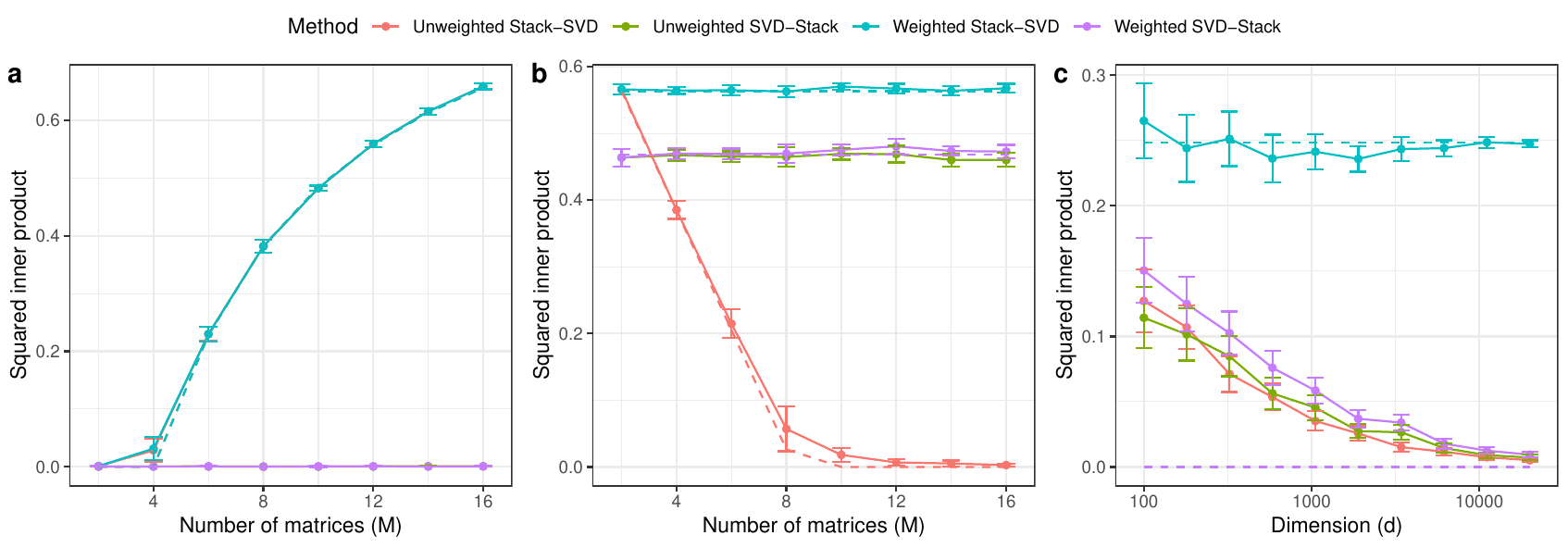}
    \caption{Simulations highlighting several illustrative examples.
    \textbf{a.} Even when each matrix is marginally below the threshold of detectability, and so \svdstack has a performance of 0, \stacksvd can aggregate power across matrices to obtain strong performance.
    \textbf{b.} Two tables have strong signal and all the rest have 0, causing \svdstack to have constant performance, weighted \stacksvd to have high performance, and unweighted \stacksvd to have vanishing performance as $M$ increases.
    \textbf{c.} Simple example with $\theta=[.95,.95,0]$, $c=[1,1,2]$ where all methods except weighted \stacksvd fall below the threshold of detectability.}
    \label{fig:fig3_interesting}
    \vspace{-.5cm}
\end{figure}

In \Cref{fig:fig3_interesting} we focus on some specific examples and empirically verify our key theoretical insights regarding the pros and cons different methods.
First, we show a simple case where all matrices have $\theta_i=0.7$ with $c_i=1$, and $d=10000$ (\Cref{fig:fig3_interesting}a).
Each table is marginally under the threshold of detection, with $\theta_i<1$, and so \svdstack falls below the threshold of detectability.
Weighted and unweighted \stacksvd perform the same, where the threshold of detectability is at $M>4$.
We see that even though each table is marginally uninformative, together they can achieve strong performance, corroborating our results in \Cref{thm:simple_thm1}.
Next, we consider $\theta_i=1.2$ for $i=1,2$, and 0 otherwise, with $c_i=1$ for all $i$ (\Cref{fig:fig3_interesting}b) with $d=10000$.
Here, \svdstack (weighted and unweighted) have constant performance, as the signal in the direction of $v$ is the only consistent signal identified, and so adding a constant number of uninformative tables does not change the performance.
Weighted \stacksvd applies a weight of 0 to tables with $\theta_i=0$, and so maintains high, constant performance.
Unweighted \stacksvd, on the other hand, has its signal diluted by the pure noise tables, and so quickly fails. Appealing to \Cref{prop:stacksvd_general}, \stacksvd falls below the threshold of detectability for $M \ge 9$.
Finally, \Cref{fig:fig3_interesting}c highlights the dominance of weighted \stacksvd.
Here, $\theta = [.95,.95,0]$ and $c=[1,1,2]$, providing an asymptotic performance of $\approx .25$ for weighted \stacksvd, and 0 for all other methods.
We see with increasing $d$ the convergence of the empirical performance to the predicted asymptotic limit.

\section{Application to single-cell data} \label{sec:single_cell}

In this section we demonstrate the practical utility of our theoretical analysis through simulations based on single-cell RNA sequencing (scRNA-seq) data.
Recent scRNA-seq methodology have aimed to address the problem of \textit{ambient} RNA--contaminating transcripts present in solution that are not associated with a particular cell \cite{young2020soupx, fleming2023unsupervised}.
We hypothesized that the weighted versions of \stacksvd and \svdstack could be used to address this problem by assigning lower weights to samples with higher contamination of ambient RNA.

\begin{figure}[t]
    \centering
    \includegraphics[scale=0.6]{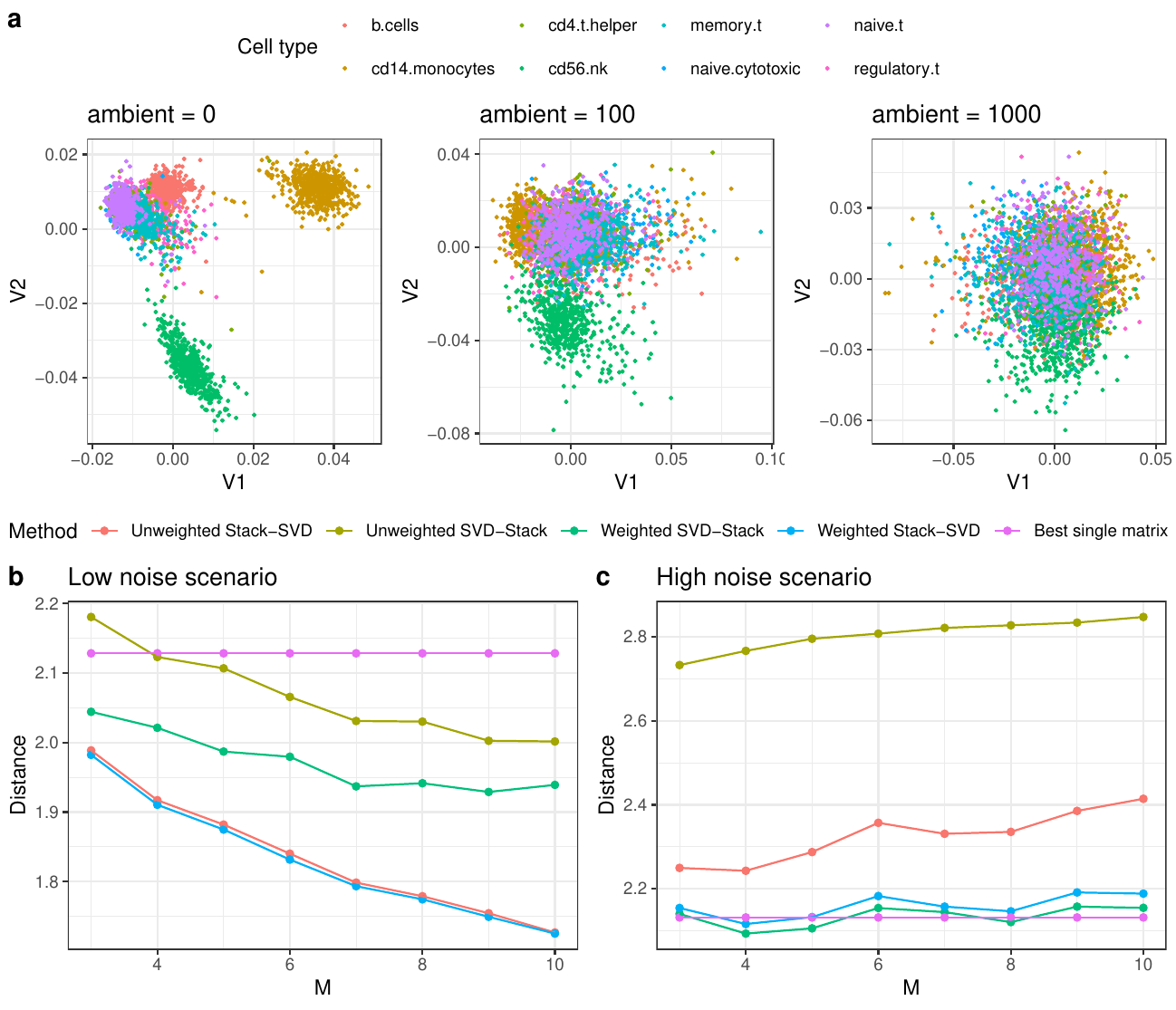}
    \vspace{-.2cm}
    \caption{\textbf{a.} The top two left singular vectors obtained from the (transformed) count matrices for three different levels of (artificially) introduced ambient RNA. The color of each cell corresponds to the cell-type assigned by \cite{zheng2017massively}. \textbf{b.} and \textbf{c.} The low and high noise scenarios were constructed using the procedure described in the main text. The performance of each method was assessed by computing the distance between projection matrices $|| \Pi_{\hat{V}} - \Pi_{V_{\text{true}}} ||_F$.}
    \label{fig:ambient_rna}
    \vspace{-.4cm}
\end{figure}

To test this, we conducted a semi-synthetic analysis based on a real scRNA-seq dataset consisting of a mixture of 8 known cell types \cite{zheng2017massively}.
The data $Y \in \R^{n \times d}$ consists of $n = 3798$ labeled cells and $d = 1572$ genes.
The entry $Y_{ij} \in \Z_{\geq 0}$ encodes the number of unique molecular identifier counts (UMI) corresponding to gene $j$ in cell $i$. 

Existing scRNA-seq research suggests that entries of $Y$ can be modeled as following a Poisson distribution \cite{sarkar2021separating}. Under this assumption, $X = 2\sqrt{Y/d}$ is approximately Gaussian with variance $1/d$ \citep{bartlett1947use}.
Centering the columns of $X$ and performing an SVD yielded $M = 10$ components with singular value above the $1 + \sqrt{c}$ threshold (Fig \ref{fig:tenx_sv}).
We randomly split cells into a training and validation set to obtain $Y_{\text{train}}$ and $Y_{\text{test}}$, from which we can apply the variance stabilizing transformation to obtain $X_{\text{train}}$ and $X_{\text{test}}$.
From this, we defined $V_{\text{true}}$ to be the leading right singular vectors of $X_{\text{test}}$.
To test the performance of different data integration methods in the presence of ambient RNA, we randomly partitioned $Y_{\text{train}}$ row-wise to create $M$ datasets $Y_1, \ldots, Y_M$ and simulated the presence of ambient RNA by adding independent $\text{Poisson}(\lambda_i)$ noise to each entry of $Y_i$.
To stabilize the variance we set $X_i = 2 \sqrt{Y_i/d}$ as before.
For visualization, as $\lambda_i$ increases from $0$ to $1000$ with $M = 3$, the separation between the cell types visible in the leading two left singular vectors decreases (Fig \ref{fig:ambient_rna}).

We assessed the performance of each method by computing $|| \Pi_{\hat{V}} - \Pi_{V_{\text{true}}} ||_F$, where $\Pi_{(\cdot)}$ denotes a projection onto the column space of the given matrix. 
We compared both the weighted and unweighted versions of \stacksvd and \svdstack and considered the setting where the weights are estimated from the available data.
In addition, we also compared the ``lowest-noise only'' method which only uses the SVD of the $X_i$ corresponding to the lowest amount of ambient RNA (although in practice this is typically not known).
As discussed, we partitioned the training data to produce $10$ smaller datasets and varied $M$ by choosing the number of smaller datasets to include in the analysis.
We considered two choices for the amount of ambient RNA $\lambda_i$.
In the ``low noise scenario'', $\lambda_1 = 10$ and $\lambda_i = 50$ for $i \geq 2$, indicating a scenario where all data matrices provide some signal.
In the ``high noise scenario'', $\lambda_1$ is also set to $10$ but $\lambda_i  = 1000$ for $i \geq 2$ which washes out all relevant biological signal from the remaining matrices.
In the low noise scenario (\Cref{fig:ambient_rna}b), the weighted and unweighted \stacksvd methods outperform the \svdstack methods, showing their increased power in the regime where all matrices have relevant signal.
In the high noise scenario (\Cref{fig:ambient_rna}c), the performance  of both unweighted \stacksvd and \svdstack degrade as $M$ increases, showing that both methods are negatively affected by the presence of null tables.
The weighted methods, however, are able to downweight the noisy tables and preserve the relevant signal as $M$ increases, retaining essentially constant performance.

\section{Discussion and future work}

In this work, we have provided a rigorous random matrix theory analysis of two popular families of data integration strategies --- \stacksvd and \svdstack --- for inferring shared latent structure across multiple high-dimensional datasets.
Our results derive optimal weighting procedures, as well as closed-form expressions for their asymptotic performances, in terms of the inner product between the estimated and true shared components, of both unweighted and optimally weighted procedures.
In particular, we have shown that when signal strengths vary across data sources, weighting can substantially improve estimation accuracy.
Conducting a careful instance-dependent analysis, we showed that optimally weighted \stacksvd strictly dominates even optimal weighting of the commonly used \svdstack.
We showed how our framework naturally extends to the case of multiple shared components, thus broadening its applicability to complex, multimodal datasets.

These findings not only clarify the theoretical underpinnings of these two approaches but also offer practical guidance for designing data integration methods in high-dimensional genomic and multimodal applications.
We go beyond the coarse minimax analysis to characterize the instance optimality of different methods, additionally allowing for $n_i$ that differ by constant factors which are lost in minimax analyses.
This work provides justification for the ubiquitous construction of large data atlases particularly in genomics \cite{weinstein2013cancer,gtex2015genotype,regev2017human,the2022tabula}, proving that-at least in the linear setting-integrating data first and then performing inference is better than first performing inference and only then integrating the results.

There are several exciting avenues of future research building on this.
First, our results focus on convergence in probability which could potentially be strengthened to \textit{almost sure} convergence, as has been established in several related settings \citep{benaych2012singular, hong2023optimally}.
Second, our optimal weightings assume that $\theta_i$ is known.
We provided a consistent estimator of $\theta_i$, but the finite sample performance of the estimated weights was not carefully evaluated in this work, so it is not clear if better $\theta$ estimation approaches exist.
Additionally, our analysis assumed that $M$ is finite, whereas the behavior of the estimators could change when $M$ grows in tandem with $n$ and $d$. 
Finally, and most interestingly, many works have considered the presence of both shared and individual-specific subspaces \citep{ma2024optimal, yang2025estimating}: extending our RMT-based analysis to this setting could help tackle these more complex and practical settings.

\bibliographystyle{imsart-number}
\bibliography{refs}
\clearpage
\newpage

\begin{appendix}

\renewcommand{\thefigure}{S\arabic{figure}}
\renewcommand{\thetable}{S\arabic{table}}
\renewcommand{\theequation}{S\arabic{equation}}

\setcounter{figure}{0}    
\setcounter{table}{0}     
\setcounter{equation}{0}  

\section{Proofs for \svdstack}\label{app:weightedSVDStackProof}

We compute the asymptotic performance of \svdstack by leveraging the single-matrix limits in \Cref{prop:single_table}.
This analysis focuses on the stacked matrix of per-table estimates $\tilde{V}$.
Classical tools make it difficult to directly analyze the $d$-dimensional right singular vector $\hat{v}_\svdstack$, as $d \to \infty$, so we instead study the $M$-dimensional left singular vector of $\tilde{V}$.

We begin by optimizing over vectors in the range of $\tilde{V}$.
We show via a quadratic program that for any vector in the range of $\tilde{V}$, its squared inner product with $v$ can be no more than $\frac{S}{S+1}$.
This upper bounds the performance of \svdstack, as the principal right singular vector output by \svdstack must be in this subspace.
Next, we show that with our proposed weighting scheme, we can successfully attain this performance.
This proves the optimality of our \svdstack weights, and the resulting performance.

\subsection{\Cref{lem:delocalization} and its proof}

    \begin{lemma} \label{lem:delocalization}
    Let $\hat{v}_i$ be the top right singular vector of $X_i= \theta_i u_i v^\top + E_i$, $X_i \in \R^{n_i \times d}$ for $i=1,2$, where $X_1$ and $X_2$ are independent. Then, under \Cref{assum:general_noise,assum:main},
    \begin{equation*}
    |\langle \hat{v}_1, \hat{v}_2 \rangle|^2 \pto \beta_1^2 \beta_2^2.
    \end{equation*}
    \end{lemma}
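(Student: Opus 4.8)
The plan is to decompose each $\hat v_i$ into its component along $v$ and its component orthogonal to $v$, and then show the cross term vanishes while the orthogonal-orthogonal term also vanishes. Write $\hat v_i = a_i v + b_i r_i$ where $a_i = \langle \hat v_i, v\rangle$, $b_i r_i = (I - vv^\top)\hat v_i$ with $\|r_i\| = 1$, so that $a_i^2 + b_i^2 = 1$. By \Cref{prop:single_table} we have $a_i^2 \ip \beta_i^2$, hence $b_i^2 \ip 1 - \beta_i^2$ (and WLOG we may fix signs so that $a_i \ip \beta_i$). Expanding,
\begin{equation*}
    \langle \hat v_1, \hat v_2\rangle = a_1 a_2 + a_1 b_2 \langle v, r_2\rangle + a_2 b_1 \langle r_1, v\rangle + b_1 b_2 \langle r_1, r_2\rangle = a_1 a_2 + b_1 b_2 \langle r_1, r_2\rangle,
\end{equation*}
since $r_i \perp v$ by construction. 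Thus it suffices to prove $\langle r_1, r_2\rangle \ip 0$; combined with $a_1 a_2 \ip \beta_1\beta_2$ and boundedness of all quantities by $1$, Slutsky gives $\langle \hat v_1, \hat v_2\rangle \ip \beta_1\beta_2$, and squaring yields the claim. (If $\beta_1 = 0$ or $\beta_2 = 0$ the bound $|\langle\hat v_1,\hat v_2\rangle|^2 \le a_1^2 + b_1^2 b_2^2 \le a_1^2 + b_2^2$ or similar, together with $a_i^2\ip 0$, already forces the inner product to $0$, so we focus on the case $\beta_1,\beta_2 > 0$.)

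The key step is showing $\langle r_1, r_2 \rangle \ip 0$, i.e.\ that the normalized orthogonal residual of $\hat v_1$ is asymptotically uncorrelated with that of $\hat v_2$. The approach is to condition on $X_1$: then $r_1$ is a fixed (deterministic given $X_1$) unit vector, and we want $\langle r_1, \hat v_2\rangle \ip 0$ after projecting out $v$. The obstacle is that $r_1$, as a function of $X_1$, is \emph{not} deterministic and in principle could be correlated with the signal direction $v$ in a way that interacts badly with $X_2$; but crucially $X_2$ is independent of $X_1$, so conditionally on $X_1$ the vector $w^{(d)} := r_1$ is a fixed unit vector orthogonal to $v$. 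The second conclusion of \Cref{prop:single_table} states precisely that $|\langle \hat v_2, w^{(d)}\rangle|^2 \ip 0$ for any \emph{deterministic} sequence of unit vectors $w^{(d)} \perp v$. Applying this conditionally on $X_1$ (the conditional law of $X_2$ is unchanged by independence, and $r_1$ depends only on $X_1$ hence is deterministic under this conditioning) gives $|\langle \hat v_2, r_1\rangle|^2 \ip 0$. Finally, $\langle r_1, r_2\rangle = \langle r_1, \hat v_2 - a_2 v\rangle / b_2 = \langle r_1, \hat v_2\rangle/b_2$ since $r_1 \perp v$; because $b_2^2 \ip 1-\beta_2^2 > 0$ is bounded away from $0$, the ratio also converges to $0$ in probability.

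The main obstacle, therefore, is the measure-theoretic bookkeeping of "applying a convergence-in-probability statement conditionally": one must argue that $|\langle \hat v_2, w\rangle|^2 \ip 0$ holds uniformly enough over deterministic choices of $w$ that one can substitute the random (but $X_2$-independent) vector $r_1$. The clean way to handle this is via the subsequence / dominated-convergence characterization of convergence in probability: for any $\eps > 0$, $\Pr(|\langle \hat v_2, r_1\rangle|^2 > \eps) = \E[\,\Pr(|\langle \hat v_2, w\rangle|^2 > \eps)\big|_{w = r_1}\,]$, and the inner probability, as a function of the deterministic argument $w$ ranging over unit vectors orthogonal to $v$, tends to $0$ — but one should be slightly careful since \Cref{prop:single_table} is stated for a fixed \emph{sequence} $w^{(d)}$ rather than uniformly over $w$. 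Inspecting the proof of Theorem 1 of \cite{liu2023asymptotic} (or arguing directly) shows the convergence is in fact uniform over the unit sphere in $v^\perp$, which suffices; alternatively one extracts subsequences along which $r_1$ converges to a deterministic limit and applies the proposition there. Once this conditioning technicality is dispatched, everything else is the elementary algebra above. This proof strategy also makes transparent why the result would fail if $X_1$ and $X_2$ shared noise: the delocalization is entirely driven by the residual of $\hat v_2$ being "generic" relative to any direction determined by $X_1$ alone.
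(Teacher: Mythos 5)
Your proposal is correct and follows essentially the same route as the paper's proof: decompose along $v$ and its orthogonal complement, use independence to treat the residual of one estimate as a (conditionally) deterministic unit vector in $v^\perp$, and invoke the second part of \Cref{prop:single_table} together with dominated convergence. The ``deterministic sequence'' technicality you flag is resolved in the paper exactly by the conditional-probability-pointwise-in-$\omega$ argument you sketch (no uniformity over the sphere is needed, since for each fixed sample point the realized residuals form a valid deterministic sequence).
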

    
\begin{proof}
We prove this claim by analyzing the projection of $\hat{v}_2$ in the orthogonal complement of $v$.
We simplify the inner product as:
\begin{equation}
    \hat{v}_1^{\top} \hat{v}_2 = \hat{v}_1^{\top} v v^{\top} \hat{v}_2 + \hat{v}_1 (I - vv^{\top}) \hat{v}_2,
\end{equation}
where as before, we assume without loss of generality that $\langle \hat{v}_i, v \rangle \geq 0$ because \svdstack is invariant to sign.
Then $\hat{v}_1^{\top} vv^{\top} \hat{v}_2 \ip \beta_1 \beta_2$ and so the result follows by showing
\begin{equation}
    \hat{v}_1^{\top} (I - vv^{\top}) \hat{v}_2 \ip 0.
\end{equation}
We define $\hat{w}_2 = (I - vv^{\top}) \hat{v}_2 / ||(I - vv^{\top}) \hat{v}_2||_2 \in S^{d-1}$ and note that it is sufficient to prove $\hat{v}_1^{\top} \hat{w}_2 \ip 0$ because $||(I - vv^{\top}) \hat{v}_2 ||_2 \leq 1$ almost surely. Given $\varepsilon > 0$, we wish to show

\begin{equation}
    \lim_{d \to \infty} \E \left[ \mathds{1}( |\hat{v}_1^{\top} \hat{w}_2| > \varepsilon) \right] =  \lim_{d \to \infty} \E \left[ \E \left( \mathds{1}( |\hat{v}_1^{\top} \hat{w}_2| > \varepsilon) | \hat{w}_2 \right) \right] = 0
\end{equation}
Because $\hat{v}_1$ and $\hat{w}_2$ are independent, the inner conditional expectation can be defined as the random variable such that
\begin{equation}
    \omega \mapsto \mathbb{P}( |\hat{v}_1^{\top} \hat{w}_2(\omega)| > \varepsilon)
\end{equation}
for $\omega \in \Omega$ (the proof of this is given by Example 4.1.7 of \citep{durrett2019probability}). Applying Theorem 1 (part 2) of \citep{liu2023asymptotic} and noting that $\hat{w}_2(\omega) \in \text{span}(v)^{\perp}$ for all $\omega$ and $d$ yields that 
\begin{equation}
    \lim_{d \to \infty} \mathbb{P}( |\hat{v}_1^{\top} \hat{w}_2(\omega)| > \varepsilon) = 0
\end{equation}
In other words, the random variable $\E \left( \mathds{1}( |\hat{v}_1^{\top} \hat{w}_2| > \varepsilon) | \hat{w}_2 \right)$ converges pointwise almost surely to $0$. The desired result then follows by applying the dominated convergence theorem. 
\end{proof}

\subsection{Entrywise convergence of eigenvector}

We study the covariance matrix $\tilde{V}\tilde{V}^\top$, whose principal eigenvector corresponds to $\hat{v}_\svdstack$.
Recall that $\tilde{V}$ is the unweighted matrix:
\begin{align*}
    \tilde{V} &= 
     \begin{bmatrix}
        \hat{v}_1^\top\\ \vdots \\ 
        \hat{v}_M^\top
    \end{bmatrix}\\
    \tilde{V}_w &=  \diag(w)\tilde{V}\\
    &=\begin{bmatrix}
        w_1\hat{v}_1^\top\\ \vdots \\ 
        w_M\hat{v}_M^\top
    \end{bmatrix}
\end{align*}
Without loss of generality, we assume throughout that $ \tilde{V} v \ge 0$ entrywise.
Leveraging \Cref{lem:delocalization}, we can show that $\tilde{V}\tilde{V}^\top \pto A_\beta$, where the convergence is entry-wise.

\begin{align*}
A_\beta
&= \begin{bmatrix}
        1 & \beta_1\beta_2 & \ldots\\
        \beta_1 \beta_2 & 1 & \ldots\\
        \vdots & \vdots & \ddots
    \end{bmatrix}\\
    &= \beta\beta^\top +\diag(1-\beta^2)
\end{align*}

We now show that $v_{\text{max}}(\tilde{V}\tilde{V}^\top)$ converges entrywise to $v_{\text{max}}(A_\beta)$.

\begin{lemma} \label{lem:entrywise_conv_eigenvec}
    Suppose that $A_\beta$ has a unique largest eigenvalue $\lambda_1$ with $x =v_{\text{max}}(A_\beta)\in \R^M$. Then $\hat{x} = v_{\text{max}}(\tilde{V}\tilde{V}^\top)$ satisfies $(\hat{x})_m \pto x_m$ for all $m \in [M]$. 
\end{lemma}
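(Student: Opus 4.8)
The plan is to obtain the entrywise convergence of the leading eigenvector from the already-established entrywise convergence $\tilde{V}\tilde{V}^\top \pto A_\beta$ (proved in the preceding paragraph via \Cref{lem:delocalization}) by a standard eigenvector-perturbation argument combined with the continuous mapping theorem. Write $\hat{A} := \tilde{V}\tilde{V}^\top$. Since $\hat{A}$ and $A_\beta$ are symmetric matrices of a fixed size $M\times M$, entrywise convergence in probability immediately upgrades to $\|\hat{A}-A_\beta\|_F\pto 0$ (and hence also convergence in operator norm). Let $\lambda_1>\lambda_2$ denote the two largest eigenvalues of $A_\beta$ and put $\delta := \lambda_1-\lambda_2>0$, which is positive precisely because $A_\beta$ has a simple largest eigenvalue by hypothesis. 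By Weyl's inequality, $\lambda_1(\hat A)\pto\lambda_1$ and $\lambda_2(\hat A)\pto\lambda_2$, so the event on which $\hat A$ has spectral gap larger than $\delta/2$ has probability tending to $1$; on that event $\hat{x}:=v_{\text{max}}(\hat A)$ is well defined up to sign, and on its complement (which has vanishing probability) we may set $\hat x$ arbitrarily.

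Next I would use the fact that the orthogonal projection onto the top eigenspace is a \emph{continuous} function of the matrix on the open set of symmetric matrices whose largest eigenvalue is simple; $A_\beta$ belongs to this set. Concretely, this projection equals the Riesz contour integral $\tfrac{1}{2\pi i}\oint_\Gamma (zI-\,\cdot\,)^{-1}\,dz$ for a fixed small contour $\Gamma$ around $\lambda_1$ separating it from the remaining eigenvalues of $A_\beta$, which is manifestly continuous in the matrix entries; alternatively one may invoke the Davis--Kahan $\sin\Theta$ theorem, which on the high-probability event above gives the quantitative bound $\|\hat{x}\hat{x}^\top-xx^\top\|_F\le C\|\hat A-A_\beta\|_F/\delta$ with $x:=v_{\text{max}}(A_\beta)$. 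Either route, combined with $\|\hat A-A_\beta\|_F\pto 0$ and the continuous mapping theorem, yields $\|\hat{x}\hat{x}^\top-xx^\top\|_F\pto 0$.

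Finally I would pass from convergence of these rank-one projections to convergence of the vectors. For unit vectors one has the identity $\|\hat{x}\hat{x}^\top-xx^\top\|_F^2 = 2\bigl(1-\langle \hat{x},x\rangle^2\bigr)$, so $|\langle \hat{x},x\rangle|\pto 1$; replacing $\hat{x}$ by $\sgn(\langle\hat{x},x\rangle)\,\hat{x}$ — the sign being immaterial for \svdstack, and consistent with the convention $\tilde{V}v\ge 0$, since a simple largest eigenvalue of the (positive semidefinite, entrywise-nonnegative) matrix $A_\beta$ forces $x$ to be entrywise nonnegative by Perron--Frobenius — gives $\|\hat{x}-x\|_2^2 = 2-2\langle \hat{x},x\rangle\pto 0$. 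Then $|(\hat{x})_m-x_m|\le\|\hat{x}-x\|_2\pto 0$ for every $m\in[M]$, which is the claim. The argument is entirely soft; the only points that require any care are (i) dealing with the vanishing-probability event on which $\hat A$ lacks a strict spectral gap so that $\hat x$ need not be well defined, and (ii) fixing the sign of $\hat x$ to match $x$ — both routine, so I expect no genuine obstacle here, the substance having already been done in establishing $\tilde{V}\tilde{V}^\top\pto A_\beta$.
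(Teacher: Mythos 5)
Your argument is correct and follows essentially the same route as the paper's proof: both pass from the entrywise (hence Frobenius-norm, since $M$ is fixed) convergence $\tilde{V}\tilde{V}^\top \pto A_\beta$ to eigenvector convergence via Davis--Kahan, using the spectral gap guaranteed by the simplicity of $\lambda_1$, and then fix the sign of $\hat{x}$. The only cosmetic differences are that you route through the rank-one projections and invoke Perron--Frobenius for the sign convention, while the paper applies the Yu--Wang--Samworth vector form of Davis--Kahan directly and additionally lower-bounds the gap by $\beta_1\beta_2$ via interlacing (a step needed only to verify the lemma's hypothesis in its application, not for the lemma itself).
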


\begin{proof}    
Let $\hat{u}_1 = v_{\text{max}}(\tilde{V}\tilde{V}^\top)$ and $u_1= v_{\text{max}}(A_\beta)$.
When $\beta_{2}>0$ the largest eigenvector of $A_\beta$ is unique up to its sign, so we select the eigenvector $\hat{u}_1$ such that $\langle \hat{u}_1, u_1 \rangle \geq 0$.
We then have by a variant of the Davis-Kahan theorem \cite{yu2015useful} that
\begin{equation}
    || \hat{u}_1 - u_1 ||_2 \leq \frac{ 2^{3/2}|| \tilde{V}\tilde{V}^\top -  A_\beta||_F}{\lambda_1(A_\beta) - \lambda_2(A_\beta)}.
\end{equation}
Analyzing $\lambda_1(A_\beta)$, we note that
$A_\beta$ is a rank one perturbation of a diagonal matrix $D=\diag(1-\beta^2)$.
Cauchy's interlacing theorem implies that $\lambda_1(A_\beta) \ge \lambda_1(D) \ge \lambda_2(A_\beta)$.
If $\beta_{2}>0$, then $\lambda_1(A_\beta)>1 \ge \lambda_1(D)$, and the largest eigenvalue of $A_\beta$ has multiplicity 1.
The largest eigenvector of $A_\beta$ can be derived using the Bunch-Nielsen-Sorensen formula \citep{bunch1978rank}, but to show that the largest eigenvalue is larger than 1 it suffices to take $x$ as a normalized indicator over the entries corresponding to $\beta_{1},\beta_{2}$, in which case the Rayleigh quotient $x^\top A_\beta x = 1+\beta_1\beta_2>1$. 
Thus, $\lambda_1 - \lambda_2 \ge \beta_{1}\beta_{2}$.
Since $\tilde{V}\tilde{V}^\top$ is converging entrywise to $A_\beta$ (\Cref{lem:delocalization}), the numerator is converging to 0, and so $\hat{u}_1 \pto u_1$ element-wise.
\end{proof}

\subsection{Detectability threshold} \label{sec:svdstack_threshold}

Analyzing the detectability threshold of \svdstack, we partition our space into 3 possible options.

\vspace{.1cm}\noindent\textbf{Case 1:} for our primary analysis with \svdstack, our condition is that $\beta_{2}>0$, where multiple matrices have detectable signal.
Then, unweighted \svdstack works as written as $A_\beta$ has a unique largest eigenvalue, which has the stated squared correlation with $v$.

\vspace{.1cm}\noindent\textbf{Case 2:}
On the other extreme, when $\beta_{1}=0$ ($\beta$ is the all zeros vector), we show that unweighted (or weighted) \svdstack has an asymptotic performance of 0.
Since $\hat{v}_\svdstack$ lies within the row space of $\tilde{V}$, we may write 
\begin{equation*}
    \hat{v}_\svdstack = \tilde{u}_{1} \hat{v}_1 + \ldots + \tilde{u}_M \hat{v}_M,
\end{equation*}
where $\tilde{u}$ is the principal left singular vector of $\tilde{V}$, with $\|\tilde{u}\|=1$

From the single-table results in \Cref{prop:single_table}, we know that $v^\top\hat{v}_i \pto \beta_i=0$ for all $i$.
Combining this with the fact that $|\tilde{u}_i| \leq 1$ for all $i$ yields:
\begin{equation*}
    | \langle \hat{v}_\svdstack, v \rangle | \leq \sum_{m=1}^M  |\hat{v}_i, v | \pto 0.
\end{equation*}

\vspace{.1cm}\noindent\textbf{Case 3:}
The most analytically troublesome edge case is when $\beta_{1} > 0$ but $\beta_{2}=0$, as then $A_\beta = I$, and so asymptotically the signal is indistinguishable from the noise solely looking at $\hat{V}$.
In practice, since $\beta_i$ is directly computable for each table, the one table with $\beta_1>0$ can be identified and its estimated $\hat{v}_1$ used as $\hat{v}_\svdstack$, trivially yielding a performance of $\beta_1^2$ with this slight modification to the stated algorithm.
Analyzing unweighted \svdstack without this fix is analytically challenging, with the performance converging to a random variable rather than a constant.

\subsection{Unweighted \svdstack: proof of \Cref{thm:svd_stack_general}}\label{app:SVD_stack_proofs}
Leveraging \Cref{lem:entrywise_conv_eigenvec}, we are able to characterize the asymptotic performance of unweighted \svdstack.

\begin{proof}[Proof of \Cref{thm:svd_stack_general}]
In the unweighted setting, $\hat{v}_\svdstack = \tilde{V}^\top y$ for $y = u_{\text{max}}(\tilde{V})$, up to normalization.
This leads to:
\begin{align*}
    \hat{v}_\svdstack &= \frac{\tilde{V}^\top u_{\text{max}}(\tilde{V})}{\|\tilde{V}^\top u_{\text{max}}(\tilde{V})\|}
\end{align*}

Examining the performance of \svdstack requires taking the inner product with $v$:
\begin{align*}
    \left| \langle v, \hat{v}_\svdstack \rangle \right|^2
    &= \frac{\left| \left\langle v, \tilde{V}^\top u_{\text{max}}(\tilde{V}) \right\rangle \right|^2}{\sigma_\text{max}^2(\tilde{V})}
\end{align*}
The denominator is equivalently $\lambda_\text{max}(\tilde{V}\tilde{V}^\top) \pto \lambda_\text{max}(A_\beta)$.
The numerator simplifies by noting that $\tilde{V} v \pto \beta$.
\begin{align*}
    \langle v, \tilde{V}^\top u_{\text{max}}(\tilde{V})\rangle
    &= \langle \tilde{V} v, u_{\text{max}}(\tilde{V})\rangle\\
    &\pto \langle \beta, v_{\text{max}}(A_\beta) \rangle
\end{align*}
Where the last line uses the continuous mapping theorem, as this is a finite sum of products of pairs of random variables which are each converging to constants.
Again applying the continuous mapping theorem to the ratio yields the desired result:
\begin{equation} \label{eq:svd_stack_power_x}
    \left| \langle v, \hat{v}_\svdstack \rangle \right|^2
    \pto \frac{\left| \langle \beta, v_{\text{max}}(A_\beta) \rangle\right|^2}{\lambda_\text{max}(A_\beta)}
\end{equation}
\end{proof}

To prove \Cref{thm:simple_thm1}, we can now specialize this result to the case where $c_i = c_0$ and $\theta_i = \theta_0$ for all $i$.
\begin{proof}[Proof of \Cref{thm:simple_thm1}]
    When $\theta_0^4 > c_0$, the condition $\beta_{2} > 0$ is satisfied and the result follows by using \Cref{prop:stacksvd_general} and directly computing:
    \begin{align*}
        A_\beta &= \diag(1- \beta_0^2) + \beta_0^2 \b{1}\b{1}^\top\\
        v_{\text{max}}(A_{\beta}) &= \frac{1}{\sqrt{M}} \b{1}\\
        \lambda_{\text{max}}(A_{\beta}) &= 1 + (M-1) \beta_0^2 \\
        \beta^\top v_{\text{max}}(A_{\beta}) &= \sqrt{M} \beta_0
    \end{align*}
Thus the power of unweighted \svdstack when all $\theta_i$ and $c_i$ are equal is:
\begin{align*}
    \left| \langle v, \hat{v}_\svdstack \rangle \right|^2
    \pto \frac{M\beta_0^2}{1 + (M-1) \beta_0^2} = \frac{M\left(\theta_0^4 - c_0\right)}{M\theta_0^4 + \theta_0^2 - (M-1)c_0}
\end{align*}
when $\theta_0^4 > c_0$, which is equivalent to the stated result.
\end{proof}

\subsection{Optimally weighted \svdstack: Proof of \Cref{thm:svdstack_weighted}}
With weighting, the final performance expression simplifies for \svdstack.

\begin{proof}[Proof of \Cref{thm:svdstack_weighted}]

From our analysis of the unweighted case, we see that a simplified result is obtainable by analyzing the spectrum of $A_\beta$.
To study the general weighted case, we note that given any vector $y\in\R^M$, we can evaluate the performance of taking our estimate of $v$ as $\hat{v} \propto \tilde{V}_w^\top y$.
Since $w$ equivalently pointwise scales $y$, we can define $x=wy$.
Concretely, we have that $\tilde{V}_w^\top y = \tilde{V}^\top \diag(w) y = \tilde{V} x$.
Simplifying from the unweighted case, we have that the performance of \svdstack when $\hat{v} \propto \tilde{V}^\top x$ is:
\begin{equation*}
    \left|\langle v, \hat{v} \rangle \right|^2 \pto \frac{\left| \langle \beta, x \rangle\right|^2}{x^\top A_\beta x}
\end{equation*}
Maximizing this expression over $x$, and making the denominator a constraint, yields the following equivalent performance maximization problem:
\begin{align*}
    &\max_{x} \ \left| \langle \beta, x \rangle\right|^2 \numberthis \\
    &\ \text{s.t. } \  x^\top A_\beta x = 1 .
\end{align*}
Identifying an optimal $x$ can be accomplished as:
\begin{align*}
    &\argmax_{x} \ \ x^\top \beta \numberthis \label{eq:svd_stack_opt_problem}\\
    &\ \text{s.t. } \ x^\top A_\beta x =1.
\end{align*}
The solution to this quadratically constrained linear maximization problem is given by $x\opt = A_\beta^{-1} \beta$, since $A_\beta$ is positive definite.
To simplify this, we invert $A_\beta$ using the Sherman-Morrison formula. Defining $D = \text{diag}(1-\beta^2)$, we have that $A_\beta = D + \beta \beta^\top$:

\begin{align*}
    A_\beta^{-1} &= D^{-1} - \frac{D^{-1} \beta \beta^\top D^{-1}}{1 + \beta^\top D^{-1} \beta}
\end{align*}
Thus, we see that $x\opt$ is:
\begin{align*}
    x\opt &= A_\beta^{-1} \beta=  \frac{1}{1 + \beta^\top D^{-1} \beta} D^{-1} \beta= \frac{1}{S+1} \times \frac{\beta}{1-\beta^2} 
\end{align*}
This shows that, with full freedom over choosing $\hat{v}$ in the range of $\tilde{V}$, the optimal $\hat{v} \propto \tilde{V}^\top x \propto \tilde{V}^\top \frac{\beta}{1-\beta^2}$
We now show that this $\hat{v}$ is attainable by taking the SVD of a weighted $\hat{V}_w =  \hat{V}\diag(w)$, with
\begin{equation}
    w_i\opt = \frac{1}{\sqrt{1-\beta_i^2}}.
\end{equation}
Simplifying the matrix $A_{\beta,w\opt}$ evaluated at this $w\opt$:
\begin{align*}
    A_{\beta,w\opt}
    &= (w\opt\beta)(w\opt\beta)^\top + \text{diag}(\left(w\opt\right)^2(1-\beta^2))\\
    &= \left(\frac{\beta}{\sqrt{1-\beta^2}}\right)\left(\frac{\beta}{\sqrt{1-\beta^2}}\right)^\top + I\\
    v_{\text{max}}(A_{\beta,w\opt}) &= \frac{\beta}{\sqrt{1-\beta^2}}
\end{align*}
Since $\hat{v}_\svdstack = v_{\text{max}}(\tilde{V}_{w\opt}) \pto v_{\text{max}}(A_{\beta,w\opt})$, this attains the desired $x\opt$, as:
\begin{align*}
    x\opt &\propto w\opt v_{\text{max}}(\tilde{V}_{w\opt})\\
    &= \frac{1}{\sqrt{1-\beta^2}} \times \frac{\beta}{\sqrt{1-\beta^2}}\\
    &= \frac{\beta}{1-\beta^2}
\end{align*}
This certifies the optimality of the weights $w\opt = 1/\sqrt{1-\beta^2}$.
We now evaluate the performance of \svdstack under this optimal weighting. 
Taking $x\opt = A_\beta^{-1}\beta$ and evaluating the objective yields:
\begin{align*}
    |\langle v,\hat{v}_\svdstack\rangle|^2 
    \pto
    \frac{\left(x^\top \beta\right)^2}{x^\top A_\beta x}&= \frac{\left(\beta^\top A_\beta^{-1} \beta \right)^2}{\beta^\top A_\beta^{-1}\beta}\\
    &= \beta^\top A_\beta^{-1}\beta\\
    &= \beta^\top  \left( \frac{1}{S+1} \times \frac{\beta}{1-\beta^2}\right)\\
    &= \frac{S}{S+1} \numberthis
\end{align*}

This proves \Cref{thm:svdstack_weighted}.
Observe that there are many weightings that can attain this same $\tilde{v}_\svdstack$: concretely, taking 
$\tilde{w}_i\opt = \theta_i \sqrt{\frac{\theta_i^2+1}{\theta_i^2+c_i}}$ yields the same results.
This is because $\tilde{w}\opt \beta = w\opt \beta$, as they only differ when $\beta_i=0$.
Analyzing this further, with this weighting, we have that:
\begin{equation*}
    A_{\beta,\tilde{w}\opt} 
= \left(\frac{\beta}{\sqrt{1-\beta^2}}\right) \left(\frac{\beta}{\sqrt{1-\beta^2}}\right)^\top +\diag{\left((\tilde{w}\opt)^2(1-\beta^2)\right)}.
\end{equation*}
Observe that every entry of this diagonal matrix is at most 1, as $\tilde{w}_i\opt \le w_i\opt$, with equality when $\beta_i > 0$.
Because of this, the largest eigenvector remains proportional to $\beta/\sqrt{1-\beta^2}$, which is nonzero only when $\beta_i>0$, retaining the same inner product with the diagonal matrix.
\end{proof}

\section{Proofs for \stacksvd} \label{app:weightedStackSVDProof}

Note that $X_{\text{stack}}$ may be written as $\tilde{u}_0 v^{\top} + \Sigma^{1/2} E \in \R^{n \times d}$ with $n := \sum_{i=1}^M n_i$ and 

\begin{align}
    \tilde{u}_0 &= \begin{bmatrix}
		\theta_1w_1 u_1\\
		\vdots\\
		\theta_Mw_M u_M
	\end{bmatrix}\in \R^{n} \\ 
    \Sigma &= \text{diag}(\underbrace{w_1^2,...,w^2_1}_{n_1}, \underbrace{w^2_2,...,w^2_2,}_{n_2}...,\underbrace{w^2_M,...,w^2_M}_{n_M}) \\
    E &= \begin{bmatrix} E_1 \\ \vdots \\ E_M \end{bmatrix} \in \R^{n \times d}. 
\end{align}

Our goal is to characterize the asymptotic performance $L(w) = |\langle \hat{v}_\stacksvd, v \rangle|^2$ where $\hat{v}_\stacksvd=v_{\text{max}}(X_{\text{stack}})$, and to find a maximizing choice of $w$.
The first part will prove \Cref{prop:stacksvd_general} as a special case by evaluating $L(1_M)$, where $1_M \in \R^M$ is the vector of all ones. 
Defining $R = \E[X_{\text{stack}}X_{\text{stack}}^\top] = \tilde{u}_0 \tilde{u}_0^{\top} + \Sigma$, the first four assumptions of \cite{liu2023asymptotic} are satisfied when 
\begin{equation}
    \sum_{i=1}^M \frac{c_iw_i^4}{(\gamma_1-w_i^2)^2} < 1, \label{eq:assumption4}
\end{equation}
and $\gamma_1 > \max_{1 \leq i \leq M} w_i^2$, where $\gamma_1$ is the largest eigenvalue of $R$.
When this condition is not met, all the singular values of $X_{\text{stack}}$ will lie within the support of the limiting spectral distribution, and no outlier singular values emerge.
Consequently,  by the same argument used to derive (2.12) and (2.13) in Theorem 2.3 of \cite{10.3150/19-BEJ1129}, the leading right singular vector of $X_{\text{stack}}$ is asymptotically orthogonal to $v$, implying that $L(w) = 0$. 

\medskip 
Provided that \eqref{eq:assumption4} holds, $L(w)$ may be expressed in terms of the eigenvectors and eigenvalues of $R$.
The following result describes the eigenstructure of $R$.

\begin{lemma}
    At least $n_i - 1$ eigenvalues of $R$ are equal to $w_i^2$ for $i \in [M]$, and the remaining eigenvalues are given by the roots of the secular equation 
    \begin{equation}
        f(\lambda) = 1 + \sum_{j=1}^M \frac{\theta_j^2 w_j^2}{w_j^2 - \lambda}
    \end{equation}
    Under condition \eqref{eq:assumption4}, the largest eigenvalue $\gamma_1$ is unique and a corresponding eigenvector $\xi_1$ is given by 
    \begin{equation}
        \xi_1 \propto (\Sigma - \gamma_1 I)^{-1} \tilde{u}_0
    \end{equation}
\end{lemma}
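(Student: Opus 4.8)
The plan is to treat $R = \tilde u_0 \tilde u_0^\top + \Sigma$ as a rank-one additive perturbation of the diagonal matrix $\Sigma$ and read off its spectrum in the usual way. First I would record the block structure: $\Sigma$ acts as $w_i^2 I_{n_i}$ on the $i$-th coordinate block, and within that block $\tilde u_0$ points only in the direction $u_i$. Hence for any $z$ supported on block $i$ with $z \perp u_i$ we have $\Sigma z = w_i^2 z$ and $\tilde u_0^\top z = 0$, so $R z = w_i^2 z$. Ranging over an orthonormal basis of $u_i^\perp$ inside block $i$ produces an $(n_i-1)$-dimensional eigenspace of $R$ with eigenvalue $w_i^2$, which establishes the first claim; note that if several $w_i^2$ coincide the multiplicity is only larger, so ``at least $n_i-1$'' still holds, and these account for $\sum_i(n_i-1)=n-M$ of the $n$ eigenvalues.

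For the remaining eigenvalues I would look for an eigenvector $\xi$ with $R\xi = \lambda\xi$ and $\lambda \ne w_i^2$ for every $i$. Rearranging gives $(\Sigma - \lambda I)\xi = -(\tilde u_0^\top \xi)\,\tilde u_0$, hence $\xi = -(\tilde u_0^\top\xi)(\Sigma-\lambda I)^{-1}\tilde u_0$; since $\xi \ne 0$ forces $\tilde u_0^\top\xi \ne 0$, cancelling that scalar and using $\|u_i\|_2 = 1$ gives $0 = 1 + \tilde u_0^\top(\Sigma-\lambda I)^{-1}\tilde u_0 = 1 + \sum_{j=1}^M \frac{\theta_j^2 w_j^2}{w_j^2-\lambda} = f(\lambda)$. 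So every eigenvalue of $R$ not already of the form $w_i^2$ is a root of the secular equation, and conversely each such root $\lambda$ yields the one-dimensional eigenspace spanned by $(\Sigma-\lambda I)^{-1}\tilde u_0$. In particular, for $\gamma_1$ the formula $\xi_1 \propto (\Sigma - \gamma_1 I)^{-1}\tilde u_0$ checks out directly: since $\tilde u_0^\top(\Sigma-\gamma_1 I)^{-1}\tilde u_0 = f(\gamma_1)-1 = -1$, one has $R\xi_1 = \Sigma\xi_1 + (\tilde u_0^\top\xi_1)\tilde u_0 = \Sigma\xi_1 - \tilde u_0 = \gamma_1\xi_1$, the last equality being $(\Sigma-\gamma_1 I)\xi_1 = \tilde u_0$.

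It then remains to show the largest eigenvalue is simple. Under the standing condition \eqref{eq:assumption4} we have $\gamma_1 > \max_i w_i^2$, so $\gamma_1$ is not one of the $w_i^2$ and must therefore be a root of $f$. On the ray $(\max_i w_i^2,\infty)$ the function $f$ is strictly increasing, since $f'(\lambda) = \sum_j \theta_j^2 w_j^2/(w_j^2-\lambda)^2 > 0$, with $f(\lambda)\to 1 > 0$ as $\lambda\to\infty$ and $f(\lambda)\to -\infty$ as $\lambda$ decreases to $\max_i w_i^2$ (through the block attaining the maximum, where we may assume $\theta_i \ne 0$, dropping it otherwise). Hence $f$ has exactly one root on that ray, it is simple, and it dominates every $w_i^2$; so it is the top eigenvalue of $R$, has multiplicity one, and the eigenvector formula from the previous step applies to it.

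The only place needing care is the degenerate regime where some weights $w_i^2$ coincide or some $\theta_i$ vanish: then the count of ``non-$w_i^2$'' eigenvalues drops, $f$ loses the corresponding poles, and one must check the bookkeeping still matches — a coalesced value $v = w_i^2 = w_j^2$ picks up extra multiplicity from the null directions of the combined block, while the single merged pole of $f$ at $v$ still contributes exactly one secular root between it and the next pole. This is routine interlacing-of-poles-and-roots accounting, and I do not expect a genuine obstacle; everything else is a direct computation.
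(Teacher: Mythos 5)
Your proof is correct and takes essentially the same route as the paper: both treat $R=\tilde u_0\tilde u_0^\top+\Sigma$ as a rank-one update of a diagonal matrix, obtain the $(n_i-1)$-dimensional eigenspaces at $w_i^2$ from vectors in block $i$ orthogonal to $u_i$, characterize the remaining eigenvalues by the secular equation $f(\lambda)=0$, and read off $\xi_1\propto(\Sigma-\gamma_1 I)^{-1}\tilde u_0$ by rearranging the eigen-equation. The two small divergences are cosmetic-to-beneficial: you derive the secular equation directly from the eigenvector relation (the paper uses the matrix determinant lemma, which is equivalent), and for simplicity of $\gamma_1$ the paper simply cites the Bunch--Nielsen--Sorensen interlacing theorem, whereas you prove it by hand via strict monotonicity of $f$ on $\bigl(\max_i w_i^2,\infty\bigr)$ together with the standing condition $\gamma_1>\max_i w_i^2$ from \eqref{eq:assumption4}. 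Your uniqueness argument is more self-contained and also makes explicit the degenerate bookkeeping (vanishing $\theta_i$, coinciding $w_i^2$) that the paper leaves implicit; the paper's citation is shorter. Either version is acceptable.
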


\begin{proof}
    For $\lambda \notin \{w_1^2, \ldots, w_M^2 \}$, we have 
    \begin{align}
        \det(\tilde{u}_0 \tilde{u}^{\top} + \Sigma - \lambda I) &= (1 + \tilde{u}_0^{\top} (\Sigma - \lambda I)^{-1} \tilde{u}_0) \det(\Sigma - \lambda I)^{-1} \\
        &= \left( 1 + \sum_{j=1}^M \frac{\theta_j^2 w_j^2}{w_j^2 - \lambda} \right)\det(\Sigma - \lambda I)^{-1}
    \end{align}
    establishing by the matrix determinant lemma \cite{golub2013matrix} that the roots of $f$ are eigenvalues of $R$.
    Furthermore, note that any vector in $\text{span}(e_1, \ldots, e_{n_1}) \cap \text{span}(\tilde{u}_0)^{\perp}$ is an eigenvector with eigenvalue $w_1^2$.
    In particular, the dimension of the eigenspace corresponding to $w_1^2$ must be at least $n_1 - 1$, implying that the algebraic multiplicity of $w_1^2$ is at least $n_1 -1$. 

    \medskip 

    An eigenvector $\xi_1$ corresponding to $\gamma_1$ satisfies 
    \begin{equation}
        (\tilde{u}_0 \tilde{u}_0^{\top} + \Sigma) \xi_1 = \gamma_1 \xi_1.
    \end{equation}
    Under condition \eqref{eq:assumption4}, $\gamma_1 > \max_i w_i^2$ and thus $(\Sigma - \gamma_1 I)^{-1}$ exists. Rearranging the above gives 
    \begin{equation}
        \xi_1 \propto (\Sigma - \gamma_1 I)^{-1} \tilde{u}_0 
    \end{equation}

    Finally, the uniqueness of $\gamma_1$ follows by the interlacing theorem of \citep{bunch1978rank}.
    
\end{proof}

In this setting, Theorem 2 of \cite{liu2023asymptotic} has computed the asymptotic inner product between $\hat{v}$ and an arbitrary sequence of deterministic unit vectors $b^{(d)}$: 
\begin{equation}
    b^\top\hat v\hat v^\top b \pto \eta_1\frac{b^\top v \tilde{u}_0^{\top} \xi_1\xi_1^\top \tilde{u}_0 v^{\top} b}{\gamma_1} \label{eq:weighted_norm}
\end{equation}
where 
\begin{align}
    \eta_1 &= 1 - \frac{1}{d} \sum_{i=2}^d \frac{\gamma_i^2}{(\gamma_i - \gamma_1)^2} \\
    &= 1 - \sum_{j=1}^M \frac{c_i w_i^4}{(\gamma_1 - w_j)^2} + O(1/d)
\end{align}
Moreover, 
\begin{equation}
    \xi_1=\frac{1}{C}\begin{bmatrix}
		\frac{\theta_1 w_1}{w^2_1-\gamma_1}	u_1\\
		\vdots\\
		\frac{\theta_M w_M}{w^2_M-\gamma_1}	u_M
	\end{bmatrix},\qquad \text{ where } C= \sqrt{\sum_{j=1}^M \frac{\theta_j^2w_j^2}{(w^2_j-\gamma_1)^2}}
\end{equation}
Taking $v$ as $b$ in \eqref{eq:weighted_norm} gives 
\begin{equation}
    (\hat{v}^{\top} v)^2 \ip \frac{\left(1-\sum_{j=1}^M\frac{c_jw_j^4}{(w^2_j-\gamma_1)^2}\right)\left(\sum_{j=1}^M 	\frac{\theta_j^2 w_j^2}{w^2_j-\gamma_1}\right)^2}{\gamma_1\left(\sum_{j=1}^M \frac{\theta_j^2w_j^2}{(w^2_j-\gamma_1)^2}\right)} = \frac{1-\sum_{j=1}^M\frac{c_jw_j^4}{(w^2_j-\gamma_1)^2}}{\gamma_1\left(\sum_{j=1}^M \frac{\theta_j^2w_j^2}{(w^2_j-\gamma_1)^2}\right)} := L(w)
\end{equation}
As mentioned, the above result contains unweighted \stacksvd as a special case, which we now formalize.

\begin{proof}[Proof of \Cref{prop:stacksvd_general}]
    Assuming \eqref{eq:assumption4}, we have
    \begin{equation}
        L(1_M) = \frac{1 - \sum_{j=1}^M \frac{c_j}{(\gamma_1 - 1)^2}}{\gamma_1 \sum_{j=1}^M \frac{\theta_j^2}{(\gamma_1 - 1)^2}}.
    \end{equation}
    Moreover, $\gamma_1 = ||\theta||_2^2 + 1$, so the above becomes 
    \begin{equation}
        \frac{||\theta||_2^4 - ||c||_1}{(||\theta||_2^2 + 1) || \theta ||_2^2}.
    \end{equation}
    Because $L(1_M) > 0$ if and only if \eqref{eq:assumption4}, the phase transition is equivalently given by
    \begin{equation}
        || \theta ||_2^4 > || c||_1.
    \end{equation}
\end{proof}

The following proof derives the optimal weighting and phase transition for \stacksvd. 

\begin{proof}[Proof of \Cref{thm:stacksvd_weighted}]
From the above 2 cases, we have that for all $w,c,\theta$:

\begin{equation} 
    L(w) = \max\left(0, \vphantom{\frac{1-\sum_{j=1}^M\frac{c_jw_j^4}{(w^2_j-\gamma_1)^2}}{\gamma_1 \left(\sum_{j=1}^M \frac{\theta_j^2w_j^2}{(w^2_j-\gamma_1)^2}\right)}}\right.
    \underbrace{\frac{1-\sum_{j=1}^M\frac{c_jw_j^4}{(w^2_j-\gamma_1)^2}}{\gamma_1 \left(\sum_{j=1}^M \frac{\theta_j^2w_j^2}{(w^2_j-\gamma_1)^2}\right)}}_{\bar{L}(w)}
    \left.\vphantom{\frac{1-\sum_{j=1}^M\frac{c_jw_j^4}{(w^2_j-\gamma_1)^2}}{\gamma_1 \left(\sum_{j=1}^M \frac{\theta_j^2w_j^2}{(w^2_j-\gamma_1)^2}\right)}}\right)
\end{equation}
we now simplify and re-parameterize this optimization objective to match the form of \cite{hong2023optimally}.
We only consider the case where Assumption 4 of \cite{hong2023optimally} is satisfied, as if it is not, then $L(w) =0$.

We begin by defining $\jgz := \{j : \theta_j > 0\}$.
Since Assumption 4 holds, $\gamma_1$ is the root of the secular equation, and so $\frac{\partial \gamma_1}{\partial w_j} = 0$ for all $j \not \in \jgz$, as this term is eliminated from the equation. 
More formally, for $j \not \in \jgz$, $\gamma_1(w) = \gamma_1(w')$ where $w'_i = w_i$ for all $i\neq j$, and $w_j' \in [0, \sqrt{\gamma_1(w)})$.
For $w_j' > \sqrt{\gamma_1(w)}$, the largest eigenvalue will now correspond to $w_j'$, and $L(w')=0$.
Then, we have that:
\begin{equation} 
    \bar{L}(w) = \frac{1-\sum_{j=1}^M\frac{c_jw_j^4}{(w^2_j-\gamma_1)^2}}{\gamma_1\left(\sum_{j=1}^M \frac{\theta_j^2w_j^2}{(w^2_j-\gamma_1)^2}\right)}.\label{eq:Lwredefine}
\end{equation}
Examining the denominator, we see that for $j \not \in \jgz$, the denominator is \textit{independent} of $w_j$ (in the relevant range).
Concretely, both $\gamma_1$ and the parenthetical expression are independent of $w_j$.
Examining the numerator (which is separable with respect to $w_j$), by holding all other $w_i$ constant this quantity is maximized by taking $w_j=0$ for $j\not \in \jgz$.
Thus, we can instead optimize over the $|\jgz| =M'$ coefficients, setting those not in $\jgz$ equal to 0:
\begin{equation}
    \max_{w \in \R^M_{\ge 0}} \bar{L}(w) = \max_{w \in \R^{M'}_{\ge 0}} \tilde{L}(w) := \frac{1-\sum_{j \in \jgz} \frac{c_jw_j^4}{(w^2_j-\gamma_1)^2}}{\gamma_1(\sum_{j \in \jgz} \frac{\theta_j^2w_j^2}{(w^2_j-\gamma_1)^2})}. \label{eq:Lwtilde}
\end{equation}
Thus, an optimal $w$ can be attained by setting $w_j=0$ for $j \not \in \jgz$, and so we can only consider those indices in $\jgz$, where $\theta_j>0$.
This enables us to reparameterize this objective function by performing a simple change of variables.
Since we are optimizing over $w_j \ge 0$, we substitute $w_j^2 = \tilde{w}_j \nu_j$, restricting $\tilde{w}_j \ge 0$, where $\nu_j = c_j/\theta_j^2$.
We are allowed to arbitrarily rescale these variables, and since we are now only optimizing over $j \in \jgz$, where $\theta_j>0$, $\nu_j$ is well defined.
Substituting $\tilde{w}$ into \eqref{eq:Lwtilde}, yields:
\begin{align*}
    \tilde{L}(w) &= \frac{1-\sum_{j \in \jgz} \frac{c_jw_j^4}{(w^2_j-\gamma_1)^2}}{\gamma_1(\sum_{j \in \jgz} \frac{\theta_j^2w_j^2}{(w^2_j-\gamma_1)^2})}\\
    &= \frac{1-\sum_{j \in \jgz} \frac{c_j\tilde{w}_j^2 \nu_j^2}{(\tilde{w}_j \nu_j-\gamma_1)^2}}{\gamma_1(\sum_{j \in \jgz} \frac{\theta_j^2 \left(\tilde{w}_j \nu_j\right)}{(\tilde{w}_j \nu_j-\gamma_1)^2})}\\
    &= \frac{1-\sum_{j \in \jgz} \frac{c_j\tilde{w}_j^2 \nu_j^2}{(\tilde{w}_j \nu_j-\gamma_1)^2}}{\gamma_1(\sum_{j \in \jgz} \frac{c_j \tilde{w}_j}{(\tilde{w}_j \nu_j-\gamma_1)^2})} \numberthis \label{eq:Lw_full_expr}
\end{align*}
In the second line we plugged in $w_j^2 = \tilde{w}_j \nu_j$, and in the third $\nu_j = c_j/\theta_j^2$.
This is exactly the optimization problem posed in Equation 6.27 of \cite{hong2023optimally}, where the optimizing $\tilde{w}$ they obtain are:
\begin{equation}
    \tilde{w}\opt_j = \frac{1}{\nu_j}\frac{1}{1+\nu_j}.
\end{equation}
This means that our optimal $w_j$, for $j \in \jgz$, are given by:
\begin{align*}
    w_j^2 &= \tilde{w}_j\opt \nu_j\\
    &=\frac{1}{1+\nu_j}\\
    &= \frac{\theta_j^2}{\theta_j^2+c_j}.\numberthis
\end{align*}

\noindent The overall performance of optimally weighted \stacksvd is then:
\begin{equation}
    \max(0, \tilde{L}(w)), \label{eq:full_fn}
\end{equation}
exactly matching the definition of $\bar{r}_i(w)$ in Lemma 6.2 of \cite{hong2023optimally}.
The second part of Lemma 6.3 of \citep{hong2023optimally} is applied to \eqref{eq:full_fn} to show that for a given $\theta,c$, that $L(w) = 0$ for all $w$ when 
\begin{equation}
    \sum_{j=1}^M \frac{c_j}{\nu_j^2} = \sum_{j=1}^M \frac{\theta_j^4}{c_j} \leq 1.
\end{equation}

\noindent
This yields the full result for optimally weighted \stacksvd, where above this detectability threshold,
evaluated at the optimizing $w_j$, 
\begin{equation*}
    \tilde{L}(w) = \text{ the unique solution $x \in (0,1)$ of } \sum_{i=1}^M \theta_i^4 \frac{1-x}{c_i + x\theta_i^{2}} = 1
\end{equation*}
\end{proof}

\section{Relative performance analysis}

\svdstack and \stacksvd, weighted and unweighted, yield several different choices for methods to use.
We have shown that there exist many scenarios where one can outperform another, visualizing this in \Cref{fig:tikz_comparison}.
In the end, however, weighted \stacksvd is uniformly most powerful.
In this appendix we prove the instance-dependent improvements of different methods.
These relationships are shown in \Cref{fig:tikz_comparison}, and tabulated in \Cref{tab:tikz_comparison_summary}.

\tikzset{
    dominance arrow/.style={
        line width=2pt,
        -triangle 45,
        color=orange
    }
}

\tikzset{
    defn arrow/.style={
        line width=2pt,
        -triangle 45,
        color=blue
    }
}

\tikzset{
    some arrow/.style={
        line width=2pt,
        -triangle 45,
        dashed,
        color=forestgreen
    }
}

\begin{figure}[h]
    \centering
\newcommand{\hspacing}{3} 
\newcommand{\vspacing}{3.5} 
\begin{tikzpicture}[
    font=\sffamily,
    node distance=1.2cm,
    >=Latex,
    align=center,
    every node/.style={minimum height=.75cm}
]
\node (wstackSVD) at (-\hspacing,{2*\vspacing}) {weighted\\\stacksvd};
\node (binarystackSVD) at (-\hspacing,{\vspacing}) {binary\\\stacksvd};
\node (stackSVD) at (-\hspacing,0) {\stacksvd};
\node (wsvdstack) at (\hspacing,{\vspacing}) {weighted\\\svdstack};
\node (svdstack) at (\hspacing,0) {\svdstack};

\draw[defn arrow] (wstackSVD) -- (binarystackSVD) node[midway, sloped, align=center] {For all $\theta,c$\\By defn};
\draw[defn arrow] (binarystackSVD) -- (stackSVD) node[midway, sloped, align=center] {For all $\theta,c$\\By defn};
\draw[defn arrow] (wsvdstack) -- (svdstack) node[midway, sloped, align=center] {For all $\theta,c$\\By defn};

\draw[dominance arrow] (wstackSVD) -- (wsvdstack) node[midway, sloped, align=center] {For all $\theta,c$\\\Cref{prop:dominance}};

\draw[dominance arrow, dashed] (binarystackSVD) to node[midway, sloped, align=center] {For all $c_i \le 1$\\\Cref{thm:stacksvd_binary_optimal_svd_stack}} (wsvdstack);

\draw[some arrow] (svdstack) to node[pos=.2, sloped, align=center] {Some $\theta,c$\\\Cref{remark:svd_outperform_stack}} (binarystackSVD);

\draw[some arrow] (stackSVD) to node[pos=.2, sloped, align=center] {Some $\theta,c$\\\Cref{remark:stack_outperform_svd}} (wsvdstack);
\end{tikzpicture}

\caption{Comparison of methods in terms of their performance under different conditions. 
Solid lines indicate uniform dominance, while dashed lines indicate improvement for some problem instances.
{\color{blue} Blue} indicates that a relationship is by definition, i.e. weighted \svdstack dominates \svdstack because the former optimizes over many weightings, including the unweighted one.
{\color{orange} Orange} indicates that for an easily-defined class of problem instances, we can prove that one method dominates another; e.g. binary \stacksvd dominates weighted \svdstack when all $c_i \le 1$.
{\color{forestgreen} Green} indicates that there are examples where one method outperforms another (given by algebraic expressions).
Arrows can be followed to chain dependencies together, i.e. weighted \svdstack outperforms \stacksvd on some instances, as weighted \svdstack uniformly dominates \svdstack, which outperforms binary \stacksvd on certain instances (\Cref{remark:svd_outperform_stack}) which uniformly dominates \stacksvd.
} \label{fig:tikz_comparison}
\end{figure}
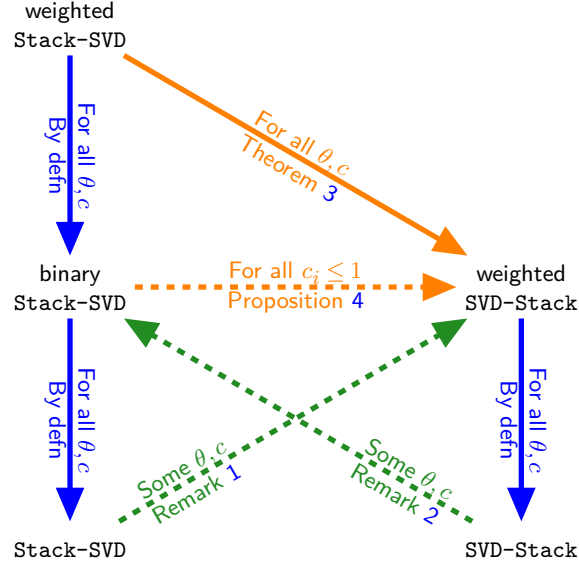

\subsection{Binary-weighted \stacksvd: proof of \Cref{thm:stacksvd_binary_optimal_svd_stack}} \label{app:binary_weighting}

Here, we prove that a simple binary weighting of \stacksvd, with $w_i = \mathds{1}\left\{ \theta_i^4 > c_i\right\}$, yields improved performance over optimally weighted \svdstack, as long as $c_i \le 1$ for all $i$. 
Note that this is not even necessarily the optimal threshold for binary weights (e.g. consider where all $\theta_i = \sqrt{c_0}$), but it is still enough to outperform optimally-weighted \svdstack.

\begin{proof}[Proof of Proposition~\ref{thm:stacksvd_binary_optimal_svd_stack}]
We want to show that binary-weighted \stacksvd has a strictly better asymptotic performance than optimally-weighted \svdstack when $c_i \le 1$ for all $i$ and $M \ge 2$. 
This requires proving the inequality:
\begin{equation} \label{eq:binaryweighting_real_ineq}
    1 - \left( 1 + \sum_{i=1}^M \frac{\theta_i^4 - c_i}{\theta_i^2 + c_i}\right)^{-1} < \frac{\left(\sum_{j=1}^M \theta_j^2\right)^2 - \sum_{j=1}^M c_j}{\sum_j \theta_j^2 \left(\sum_{j=1}^M \theta_j^2 + 1\right)},
\end{equation}
where the LHS is the performance of optimally-weighted \svdstack, and the RHS is the performance of binary-weighted \stacksvd, assuming without loss of generality that all tables have $\theta_i^4 \ge c_i$, as otherwise this table will not be included by binary-weighted \stacksvd, and will not impact the performance of \svdstack (as it has $\beta_i = 0$).
Now, observe that the function $f(z) = z/(1-z)$ is monotonically increasing over the interval $(0,1)$.
Thus, proving our stated inequality can equivalently be done by first applying the transformation $f(z)$ to each side:
\begin{equation} \label{eq:binaryweighting_simplify_general}
    \sum_{i=1}^M \frac{\theta_i^4 - c_i}{\theta_i^2 + c_i} < \frac{\left(\sum_{j=1}^M \theta_j^2\right)^2 - \sum_{j=1}^M c_j}{\sum_{j=1}^M \theta_j^2 + \sum_{j=1}^M c_j}.
\end{equation}
Let $x_i = \theta_i^2$ and $T = \sum_i x_i$. 
Without loss of generality, we assume that for all tables included in the analysis $\theta_i^4 > c_i$, as otherwise this table will not factor into the performance of either method, which implies $x_i^2 > c_i$.

First, we decompose both sides of the inequality. For the left-hand side (LHS), each term can be written as:
\[
\frac{x_i^2 - c_i}{x_i + c_i} = \frac{(x_i - c_i)(x_i + c_i) + c_i^2 - c_i}{x_i + c_i} = (x_i - c_i) + \frac{c_i(c_i - 1)}{x_i + c_i}.
\]
Summing over all $i$ yields:
\[
\text{LHS} = \sum_i (x_i - c_i) + \sum_i \frac{c_i(c_i - 1)}{x_i + c_i} = \left(T - \sum_j c_j\right) + \sum_i \frac{c_i(c_i - 1)}{x_i + c_i}.
\]
Similarly, the right-hand side (RHS) can be decomposed as:
\begin{align*}
\text{RHS} &= \frac{T^2 - \sum_j c_j}{T + \sum_j c_j} \\
&= \frac{\left(T - \sum_j c_j\right)\left(T + \sum_j c_j\right) + \left(\sum_j c_j\right)^2 - \sum_j c_j}{T + \sum_j c_j} \\
&= \left(T - \sum_j c_j\right) + \frac{\left(\sum_j c_j\right)^2 - \sum_j c_j}{T + \sum_j c_j}.
\end{align*}
After canceling the common term $\left(T - \sum_j c_j\right)$, the original inequality is equivalently:
\[
\sum_i \frac{c_i(c_i - 1)}{x_i + c_i} < \frac{\left(\sum_j c_j\right)^2 - \sum_j c_j}{T + \sum_j c_j}.
\]
Since $c_i \in (0, 1)$, the term $(c_i - 1)$ is negative. Multiplying by $-1$ reverses the inequality we wish to show:
\[
\sum_i \frac{c_i(1 - c_i)}{x_i + c_i} > \frac{\sum_j c_j - \left(\sum_j c_j\right)^2}{T + \sum_j c_j} 
\]

If $\sum_j c_j > 1$, the RHS is negative, while the LHS is a sum of nonnegative terms.
Thus, the inequality holds and is strict.

If $\sum_j c_j = 1$, then since $M \ge 2$ we must have that $c_i < 1$ for all $i$, and so the LHS is positive, while the RHS is zero. 
Thus, the inequality holds and is strict.

If $\sum_j c_j < 1$, both sides are positive. Let $y_i = x_i + c_i > 0$. Cross-multiplying gives:
\[
\left(\sum_j y_j\right) \left(\sum_i \frac{c_i(1 - c_i)}{y_i}\right) > \sum_j c_j - \left(\sum_j c_j\right)^2.
\]
Expanding the LHS yields:
\begin{align*}
\text{LHS} &= \sum_i \frac{y_i(c_i - c_i^2)}{y_i} + \sum_{i \ne j} \frac{y_j(c_i - c_i^2)}{y_i} \\
&= \left(\sum_i c_i - \sum_i c_i^2\right) + \sum_{i<j} \left( \frac{y_j}{y_i}(c_i - c_i^2) + \frac{y_i}{y_j}(c_j - c_j^2) \right).
\end{align*}
The RHS can be expanded as:
\begin{align*}
\text{RHS} &= \sum_j c_j - \left(\sum_j c_j\right)^2 \\
&= \sum_j c_j - \left(\sum_j c_j^2 + 2\sum_{j<k} c_j c_k \right) \\
&= \left(\sum_j c_j - \sum_j c_j^2\right) - 2\sum_{j<k} c_j c_k.
\end{align*}
Canceling the common term $\left(\sum c_i - \sum c_i^2\right)$, the inequality reduces to proving:
\[
\sum_{i<j} \left( \frac{y_j}{y_i}(c_i - c_i^2) + \frac{y_i}{y_j}(c_j - c_j^2) \right) > -2\sum_{j<k} c_j c_k.
\]
This final inequality is trivially true (and strict) for $M \ge 2$. The left-hand side is a sum of nonnegative terms, since $y_i, y_j > 0$ and $c_k(1-c_k) >= 0$ for $c_k \in (0,1]$.
The right-hand side is strictly negative. 
This completes the proof.
Thus, having proved \Cref{eq:binaryweighting_simplify_general}, we have proved our desired result of \Cref{eq:binaryweighting_real_ineq}, showing that binary-weighted \stacksvd outperforms optimally weighted \svdstack when all $c_i \le 1$.
\end{proof}

We make a related conjecture, that if $c_i=c_0$ are all equal, or $\theta_i=\theta_0$ are all equal, then binary-weighted \stacksvd outperforms optimally weighted \svdstack.

\subsection{Dominance of optimally weighted \stacksvd}

Having shown that binary \stacksvd dominates optimally weighted \svdstack when $c_i\le 1$, we now prove \Cref{prop:dominance}, which shows that \textbf{for all} $\theta,c$, optimally weighted \stacksvd performs at least as well as weighted \svdstack.

\begin{proof}[Proof of \Cref{prop:dominance}]
    
Recall that the expression for the asymptotic performance of \stacksvd is given as the unique solution $x \in (0,1)$ of the following equation:
\begin{equation*}
    f(x) = \sum_{i=1}^M \theta_i^4 \frac{1-x}{c_i + x\theta_i^2} = 1.
\end{equation*}
In the interval $(0,1)$, $f$ is monotonically decreasing.
Since the asymptotic performance of \svdstack is $S/(S+1)$, to show that \stacksvd dominates \svdstack, it suffices to show that $f(S/(S+1)) \ge 1$.

To prove this, we use the fact that $\theta_i,c_i>0$ for all $i$.
Recall that $\beta_i^2 = \frac{\theta_i^4 - c_i}{\theta_i^4 + \theta_i^2} \mathds{1}\{ \theta_i^4 > c_i \}$.
The one inequality we use is that for $x,y,z>0$ with $z\ge x$:
\begin{equation} \label{eq:helpfulIneq}
    \frac{x+y}{y+z} \ge \frac{x}{z},
\end{equation}
where the inequality is strict if $z > x$.
This follows by cross-multiplying and simplifying.
We now leverage this result to prove the following inequality holds for all $i$, with strict inequality if $\theta_i^4 > c_i$ for at least two $i$:
\begin{equation} \label{eq:dominance_termwise_ineq}
    \frac{\theta_i^4}{c_i + S(\theta_i^2+c_i)} 
    \ge \frac{\frac{\theta_i^4 - c_i}{\theta_i^2 + c_i}}{S} \mathds{1}\left\{ \theta_i^4 > c_i \right\}
    =\frac{\frac{\beta_i^2}{1-\beta_i^2}}{S}  
\end{equation}
We prove this by studying the two cases.
When $\theta_i^4 \le c_i$, the inequality is trivially true (and strict) as the LHS is positive, and the RHS is 0.
When $\theta_i^4 > c_i$, we apply the stated inequality:
\begin{align*}
    \frac{\theta_i^4}{c_i + S(\theta_i^2+c_i)}
    &= \frac{ \frac{\theta_i^4 - c_i}{\theta_i^2 + c_i} + \frac{c_i}{\theta_i^2 + c_i}}{\frac{c_i}{\theta_i^2 + c_i} + S}\\
    & \ge \frac{\frac{\theta_i^4 - c_i}{\theta_i^2 + c_i}}{S}.
\end{align*}
This is strict for the $i$-th term if $S> \frac{\theta_i^4 - c_i}{\theta_i^2 + c_i}$.
Combining these inequalities proves \eqref{eq:dominance_termwise_ineq}.
Evaluating $f$ now allows us to see that:
\begin{align*}
    f\left(\frac{S}{S+1}\right) &= \sum_{i=1}^M \theta_i^4 \frac{1-\frac{S}{S+1}}{c_i + \theta_i^{2}\frac{S}{S+1}}\\
    &= \sum_{i=1}^M  \frac{\theta_i^4}{c_i(S+1) + S\theta_i^{2}}\\
    &= \sum_{i=1}^M \frac{\theta_i^4}{c_i + S(\theta_i^2+c_i)} \\
    & \ge \sum_{i=1}^M \frac{\frac{\theta_i^4 - c_i}{\theta_i^2 + c_i}}{S} \mathds{1}\{ \theta_i^4 > c_i \}\\
    &= \frac{1}{S}\sum_{i=1}^M \frac{\beta_i^2}{1-\beta_i^2}\\
    &= 1 \numberthis
\end{align*}
Where we apply the inequality in \Cref{eq:dominance_termwise_ineq} term-wise, and see that it is strict if there is more than one index $i$ with $\theta_i>0$ (as then $S > \frac{\theta_i^4 - c_i}{\theta_i^2 + c_i}$ for all $i$).
Combined with the monotonicity of $f$, this implies that the asymptotic performance of optimally-weighted \stacksvd is at least as good as that of optimally weighted \svdstack, and is strictly better if at least two tables contain nonzero signal.
\end{proof}

\subsection{Optimally weighted \stacksvd performance goes to 1 while all others undetectable}\label{app:binary_weighting_opt}

In this section, we show that even optimally binary-weighted \stacksvd is inadmissible, and can fall below the threshold of detectability, while optimally weighted \stacksvd has performance going to 1.

\begin{proof}[Proof of \Cref{prop:binarystacksvd_inadmissable}]
This happens when:
\begin{align*}
    \frac{\left(\sum_{i\in S} \theta_i^2\right)^2}{\sum_{i\in S} c_i} &\le 1 \quad \forall \ S \subseteq [M]\\
    \max_i \frac{\theta_i^4}{c_i} &\le 1
\end{align*}
The first condition is that any binary weighting of \stacksvd, i.e. selecting any subset $S$ of tables, is below the threshold of detectability.
The second, to rule out (un)weighted \svdstack, is that each table marginally is below the threshold of detectability. 

We observe that by taking the tables to have constant $\theta_i$ and increasing $c_i$, or constant $c_i$ and decreasing $\theta_i$, the maximizing $S$ will simply be a prefix of this list, as seen by a swapping argument, simplifying the combinatorial nature of the constraint.
Then, we can compute $c_i$ (respectively $\theta_i$) such that for any prefix of the list, the inequality is met with equality, as this maximizes the performance of optimally weighted \stacksvd.

The first option we have is to take $c_i=1$ for all $i$.
Our constraint implies that we must have $(\sum_{i=1}^M \theta_i^2)^2 \le M$, where equality is attained by taking $\theta_j = \sqrt{j} - \sqrt{j-1}$ for $j=1,\ldots,M$.
The asymptotic performance of this scheme is difficult to compute, so we instead consider the second case taking $\theta_i=1$ for all $i$.
In this case, $\sum_{i=1}^M \theta_i^2 = M$, and so we require $\sum_{i=1}^M c_i \ge M^2$.
To sit at this boundary, we take $c_j = j^2 - (j-1)^2 = 2j-1$ for $j=1,\ldots,M$.
We can then evaluate the performance of optimally weighted \stacksvd on this instance, as the root of the following equation:
\begin{equation*}
    f(x) := \sum_{i=1}^M \theta_i^4 \frac{1-x}{c_i + x\theta_i^{2}} = 1.
\end{equation*}
Our goal is to identify how large $M$ needs to be in order for the performance to be greater than $1-\eps$.
Since this function is monotonically decreasing in $x$, we can solve for $x$ such that $1\le f(x)$.
We substitute in $x=1-\eps$ for $\eps\in(0,1)$, and simplify, to obtain:

\begin{align*}
    1 &\le \sum_{i=1}^M \theta_i^4 \frac{1-x}{c_i + x\theta_i^{2}}\\
    &= \sum_{i=1}^M \frac{\eps}{2i-1+1-\eps}\\
    &= \frac{\eps}{2} \sum_{i=1}^M \frac{1}{i - \eps/2}
\end{align*}
We lower bound the RHS to show that this condition is met whenever $M$ is sufficiently large, using the integral comparison test for decreasing functions, and that $\eps<1$
\begin{align*}
    \sum_{i=1}^M \frac{1}{i - \eps/2} \ge \sum_{i=1}^M \frac{1}{i} \ge \ln M + \gamma,
\end{align*}
where $\gamma$ is the Euler-Mascheroni constant.
This implies that $f(x) \ge 1$ whenever
\begin{equation*}
    \frac{2}{\eps} \le \ln M + \gamma \le \sum_{i=1}^M \frac{1}{i} \le \sum_{i=1}^M \frac{1}{i - \eps/2} .
\end{equation*}
Rearranging gives us that the asymptotic performance of weighted \stacksvd is greater than $1-\eps$ whenever
\begin{equation}\label{eq:Mapprox}
    M \ge e^{-\gamma} \exp(2/\eps).
\end{equation}
Since $\gamma >0$, this can be loosened to $M\ge \exp(2/\eps)$.
\end{proof}

Simulations demonstrate how the performance of optimally weighted \stacksvd scales when $\theta_i=1$ for all $i$, and $c_i = 2i-1$ (\Cref{fig:binary_stack_svd_inadmissable}).
Dots indicate the asymptotic performance predictions for the instance constructed for that given $M$ (root finding for the equation $f$), while the solid blue line indicates the closed form expression based on the digamma function \Cref{eq:Mapprox}.

\begin{figure}[h]
    \centering
    \includegraphics[width=0.5\linewidth, clip, trim=0 0 1.6cm 1.4cm]{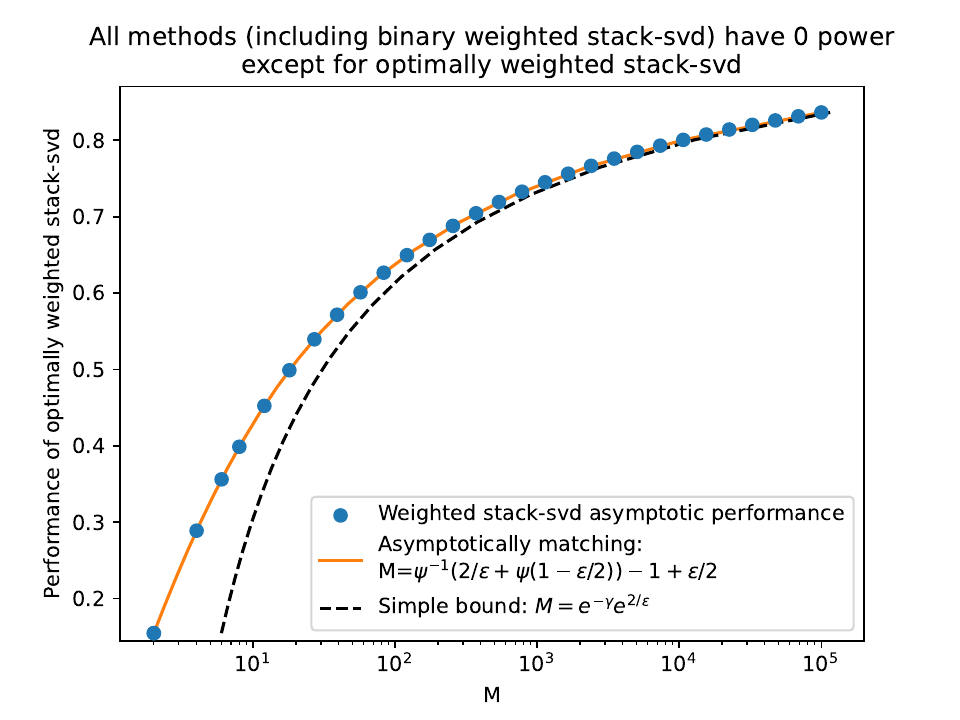}
        \vspace{-.3cm}
    \caption{Simulations demonstrating \Cref{prop:binarystacksvd_inadmissable}. Tables generated so that all methods, including optimally binary-weighted \stacksvd and optimally weighted \svdstack, fall below the detectability threshold, while optimally weighted \stacksvd has performance tending to 1. $\theta_i=1$ for all $i$, with $c_i = 2i-1$. $\psi$ is the digamma function \cite{abramowitz1965handbook}.}
    \label{fig:binary_stack_svd_inadmissable}
        \vspace{-.5cm}
\end{figure}

\section{Rank-r analysis} \label{sec:rank_r}
Here, we provide proofs of our results in the general rank $r$ setting.

\subsection{Error metric}
We begin by simplifying the error metric for $\hat{V}$ from \stacksvd and \svdstack:

\begin{align*}
    \| V^\top \hat{V}\|_F^2
    &= \sum_{i,j} \left(v_i^\top \hat{v}_j\right)^2\\
    &\pto \sum_{i} \gamma_i
\end{align*}
The cross terms converge to 0 by a similar argument as in \Cref{lem:delocalization}.
Note that this metric also bounded above by $r$, even when the columns of $\hat{V}$ are not orthogonal, but simply normalized:

\begin{align*}
    \| V^\top \hat{V}\|_F^2
    &= \sum_i \|V^\top \hat{v}_i \|_2^2\\
    &\le \sum_i 1\\
    &=r
\end{align*}
as $V$ has orthonormal columns, i.e. has spectral norm equal to 1.

\subsection{Rank-r \stacksvd} \label{sec:unorderedrankrstack}

We begin by briefly discussing our work in relation to \cite{hong2023optimally}, who study a weighted PCA setting that can be reformulated to mirror our model.
Converting their setting to our notation, for matrix $i$, the $k$-th component has signal strength $\theta_{i,k} = \lambda_k a_i$, allowing only one degree of freedom per matrix uniformly scaling the singular values.
We show that weighted \stacksvd trivially applies to the more general setting where $\theta_i$ are simply ordered in each table, with $r$ degrees of freedom per table.

Note that if the ordering of the signal strengths is not the same across tables, then more care is needed to ensure we select the right components.
For example, consider $\theta_1 = [2,1]$ and $\theta_2=[1,10]$, where $c_1=c_2=1$.
Then, analyzing the first component, $w_{11} = 2/\sqrt{5}$ and $w_{21} = 1/2$.
However, when weighting according to this, the signal strength corresponding to $v_1$ is scaling with $\sum_i w_{i1}^2 \theta_{i1}^2 = 8/5+1/4 = 1.85$.
A corresponding calculation for the signal strength corresponding to $v_2$ under this weighting yields $4/5+100/4=25.8$.
This means that, even under the optimal weighting for the first component, the second component has stronger signal strength.
However, with knowledge of the $\theta_{ij}$, we can easily account for this with some additional notation and bookkeeping.

We begin by defining this additional notation.
As before, we have the optimal weights for table $i$ for estimating component $j$ as $w_{ij}$.
This is used to generate the stacked matrix $X_\text{Stack}^{(j)}$, for computing $\tilde{v}_j$.
We now introduce additional notation, $\tilde{\theta}_{jk}$ the signal strength corresponding to $v_k$ under the optimal weighting for the $j$-th component.

\begin{align}
    w_{ij} &= \frac{\theta_{ij}}{\sqrt{\theta_{ij}^2 + c_i}} \label{eq:stacksvd_app_wij}\\
    X_\text{Stack}^{(j)}
    &= \begin{bmatrix}
        w_{1j} X_1 \\
        \vdots \\
        w_{Mj} X_M 
    \end{bmatrix} \label{eq:stacksvd_appXstack}\\
    \tilde{\theta}_{jk}^2 &= \sum_{i=1}^M \frac{\theta_{ij}\theta_{ik}}{\sqrt{\theta_{ij}^2 + c_i}} \label{eq:stacksvd_apptildTheta}\\
    \gamma_j &= \text{ the unique solution $x \in (0,1)$ of } \sum_{i=1}^M \theta_{ij}^4 \frac{1-x}{c_i + x \theta_{ij}^{2}} = 1.
\end{align}
We show in \Cref{alg:rank_r_stacksvd} how this procedure works.
As can be seen, $\theta_{ij}$ do not need to be ordered in any way, we simply require that $\tilde{\theta}_{jj} \neq \tilde{\theta}_{jk}$ for all weightings $j$, for all components $k\neq j$ (so that $\ell_j$ in line \ref{alg_line:stacksvd_rank} of \Cref{alg:rank_r_stacksvd} is well-defined).
Observe that if $\theta_{ij}$ follow the same ordering in each table, then the ordering is preserved for each weighting, and so $\ell_j = j$.

\begin{algorithm}[h!]
\caption{Rank-$r$ Weighted Stack-SVD}\label{alg:rank_r_stacksvd}
\begin{algorithmic}[1]
\State \textbf{Input:} Data matrices $\{X_i\}_{i=1}^M$, where $X_i \in \mathbb{R}^{n_i \times d}$; signal strengths $\{\theta_{ij}\}_{i \in [M], j \in [r]}$
\State \textbf{Output:} An estimated basis for the shared subspace, $\hat{V}_\stacksvd \in \mathbb{R}^{d \times r}$.
\State \textbf{Initialize:} $\tilde{V}$ as an empty list of vectors.
\For{$j = 1, \dots, r$} \Comment{Estimate each component $v_j$ separately.}
    \State For each table $i\in[M]$, calculate the optimal weight for component $j$ as $w_{ij}$ (\Cref{eq:stacksvd_app_wij})
    \State For each component $k\in[r]$, compute its signal strength under the $j$-th weighting $\tilde{\theta}_{jk}^2$ (\Cref{eq:stacksvd_apptildTheta}
    \State Construct the weighted stacked matrix $X_{\text{stack}}^{(j)}$ (\Cref{eq:stacksvd_appXstack})
    \State Compute the rank $\ell_j$ of $\tilde{\theta}_{jj}$ amongst $\{\tilde{\theta}_{jk}\}_{k=1}^r$. If $\{\theta_{ij}\}$ are sorted, $\ell_j=j$. \label{alg_line:stacksvd_rank}
    \State Compute $\hat{v}_{j,\stacksvd} \leftarrow v_{\ell_j}\left(X_{\text{stack}}^{(j)}\right)$ as the $\ell_j$-th right singular vector of $X_{\text{stack}}^{(j)}$
\EndFor
\State Combine the collected vectors into a single matrix: $\hat{V}_{\stacksvd} \leftarrow [\hat{v}_{1,\stacksvd}, \dots, \hat{v}_{r,\stacksvd}]$.
\State \textbf{return} $\hat{V}_\stacksvd$
\end{algorithmic}
\end{algorithm}

With this in hand, we can follow a similar path to the existing weighted \stacksvd result in \Cref{thm:stacksvd_weighted} to prove the desired result, by leveraging the results of \cite{liu2023asymptotic,hong2023optimally}.
We state the slightly more general Corollary below.

\begin{cor}[Generalization of \Cref{thm:rank_r_stacksvd}]
Under \Cref{assum:rank_r,assum:general_noise}, and assuming that $\tilde{\theta}_{jj} \neq \tilde{\theta}_{jk}$ for all weightings $j$, for all components $k\neq j$ (\Cref{eq:stacksvd_apptildTheta},
rank-$r$ weighted \stacksvd satisfies:
\begin{align*}
    \left(v_j^\top \hat{v}_{j,\stacksvd}\right)^2 &\pto \gamma_j \quad \text{ for $j=1,2,\hdots,r$},\\
    \left\| V^\top \hat{V}_\stacksvd \right\|_F^2 &\pto \sum_j \gamma_j.
\end{align*}
\end{cor}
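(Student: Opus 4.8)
The plan is to reduce the rank-$r$ claim to $r$ separate applications of the rank-one result \Cref{thm:stacksvd_weighted}, one per component index $j$. The key observation is that the weighted stacked matrix $X_\text{Stack}^{(j)}$ of \eqref{eq:stacksvd_appXstack} is itself a signal-plus-noise matrix with block-heteroscedastic noise of exactly the form analyzed in \Cref{app:weightedStackSVDProof}. Indeed, stacking the blocks $w_{ij} X_i = \sum_{k=1}^r w_{ij}\theta_{ik}\,u_{ik} v_k^\top + w_{ij} E_i$ gives
\begin{equation*}
    X_\text{Stack}^{(j)} = \sum_{k=1}^r \big(\mathrm{stack}_i(w_{ij}\theta_{ik} u_{ik})\big) v_k^\top + \Sigma_j^{1/2} E,
\end{equation*}
where $\Sigma_j$ is the block-diagonal matrix with block $i$ equal to $w_{ij}^2 I_{n_i}$, the $v_k$ are orthonormal, and (under \Cref{assum:thetai_ordered}, or its relaxation) the stacked left factors associated with distinct $k$ are mutually orthogonal. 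As in \Cref{app:weightedStackSVDProof}, the outlier right-singular structure of such a matrix is, in the proportional limit, governed by the eigenstructure of $R_j = \E[X_\text{Stack}^{(j)}(X_\text{Stack}^{(j)})^\top]$ (invoking \cite{liu2023asymptotic,10.3150/19-BEJ1129,hong2023optimally}): each outlier right singular vector concentrates on a single $v_k$, with a limiting squared inner product of the same fixed-point form as in the rank-one analysis, and the outliers are ordered by the effective (noise-adjusted) signal-strength quantity $\tilde\theta_{jk}$ of \eqref{eq:stacksvd_apptildTheta}.

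The next step is to pin down which outlier corresponds to $v_j$ and what its limiting alignment is. The non-degeneracy hypothesis $\tilde\theta_{jj} \neq \tilde\theta_{jk}$ for all $k \neq j$ ensures that the outlier singular value associated with $v_j$ is simple, so its right singular vector is well defined, and that its rank among the outliers is exactly $\ell_j$ as computed in line~\ref{alg_line:stacksvd_rank} of \Cref{alg:rank_r_stacksvd}; this is why we take $\hat{v}_{j,\stacksvd} = v_{\ell_j}(X_\text{Stack}^{(j)})$. Since the per-block noise variances of $X_\text{Stack}^{(j)}$ are $w_{ij}^2/d$ with $w_{ij} = \theta_{ij}/\sqrt{\theta_{ij}^2 + c_i}$ --- precisely the optimal \stacksvd weights of \Cref{thm:stacksvd_weighted} for the rank-one problem with signal strengths $(\theta_{1j},\ldots,\theta_{Mj})$ --- the limiting squared alignment of $\hat{v}_{j,\stacksvd}$ with $v_j$ equals the optimally-weighted rank-one \stacksvd performance for that component, i.e.\ $\gamma_j$, the unique root in $(0,1)$ of $\sum_i \theta_{ij}^4 (1-x)/(c_i + x\theta_{ij}^2) = 1$. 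That the other components $v_k$, $k \neq j$, do not perturb this limit follows from the standard multi-spike decoupling: because the $v_k$ and, within each table, the $u_{ik}$ are orthonormal, the spiked directions separate in the proportional limit, so recovery of $v_j$ obeys the same fixed-point equation as if $v_j$ were the only signal present (the heteroscedastic multi-spike analogue of \cite{benaych2012singular,hong2023optimally}).

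For the Frobenius-norm statement I would combine these per-component alignments with the asymptotic orthogonality of $\hat{v}_{j,\stacksvd}$ to every other shared component. The off-diagonal limit $v_k^\top \hat{v}_{j,\stacksvd} \pto 0$ for $k \neq j$ follows from delocalization of the $\ell_j$-th right singular vector of $X_\text{Stack}^{(j)}$ in directions orthogonal to its own spike --- the second part of Theorem~1 of \cite{liu2023asymptotic}, used just as in the proof of \Cref{lem:delocalization} and extended to the heteroscedastic multi-spike model --- since $v_k$ lies in the signal subspace but is orthogonal to the $v_j$-direction onto which $\hat{v}_{j,\stacksvd}$ concentrates. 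Consequently
\begin{equation*}
    \|V^\top \hat{V}_\stacksvd\|_F^2 = \sum_{j=1}^r \sum_{k=1}^r (v_k^\top \hat{v}_{j,\stacksvd})^2 \pto \sum_{j=1}^r \gamma_j,
\end{equation*}
since this is a finite sum of sequences each converging in probability (to $\gamma_j$ on the diagonal, to $0$ off it), so the continuous mapping theorem applies; the trivial bound $\|V^\top \hat{V}_\stacksvd\|_F^2 \le r$ holds automatically because $V$ has orthonormal columns and each $\hat{v}_{j,\stacksvd}$ is a unit vector.

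The main obstacle I anticipate is the heteroscedastic multi-spike bookkeeping in the second paragraph: one must check carefully that the results of \cite{liu2023asymptotic,hong2023optimally}, stated for a single outlier or for the weighted-PCA parameterization $\theta_{ik} = \lambda_k a_i$, transfer to the stacked model with a general matrix $(\theta_{ik})$ and distinct per-component weights --- in particular that the $\ell_j$-th outlier singular value is simple under the non-degeneracy hypothesis, and that its singular vector is asymptotically confined to the $v_j$-direction rather than mixing with the other $v_k$ or with the bulk. The delocalization input used in the last step is the other place that needs care, but it is a direct analogue of the argument already given for \Cref{lem:delocalization}.
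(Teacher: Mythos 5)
Your proposal is correct and follows essentially the same route as the paper, which itself only sketches this corollary by asserting that one "follows a similar path" to the rank-one weighted \stacksvd analysis via \cite{liu2023asymptotic,hong2023optimally}: per-component reduction through the eigenstructure of $R_j$, identification of the correct outlier via $\ell_j$ under the non-degeneracy hypothesis, and vanishing cross terms $v_k^\top \hat v_{j,\stacksvd}\pto 0$ by the delocalization argument of \Cref{lem:delocalization}. Your write-up is in fact more explicit than the paper's about the multi-spike decoupling step, and the obstacle you flag (transferring the single-spike/weighted-PCA results to a general matrix $(\theta_{ik})$ with distinct per-component weights) is precisely the gap the paper also leaves to the cited references.
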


\subsection{Rank-r \svdstack} \label{sec:rank_r_svdstack}

Here, we provide the detailed proof of \Cref{thm:rank_r_svdstack} in the general case of unordered signal strengths.
The optimal weighting, similarly to the rank 1 case, is given by:
\begin{equation}\label{eq:svdstack_app_wij}
    w_{ij} = \theta_{ij} \sqrt{\frac{\theta_{ij}^2 + 1}{\theta_{ij}^2 + c_i}}
\end{equation}
In this case, we provide the following generalization of \Cref{thm:rank_r_svdstack}.

\begin{cor}[Generalization of \Cref{thm:rank_r_svdstack}]
Under \Cref{assum:rank_r,assum:general_noise}, assuming that a) $\theta_{ij} \neq \theta_{ik}$ for all $i$, for all $j\neq k$, and b) $S_{j}\neq S_k$ for all $j\neq k$,
rank-$r$ weighted \svdstack satisfies:
\begin{align*}
    \left(v_j^\top \tilde{v}_{j,\svdstack}\right)^2 &\pto \frac{S_j}{S_j+1} \quad \text{ for $j=1,2,\hdots,r$},\\
    \left\| V^\top \hat{V}_\svdstack \right\|_F^2 &\pto \sum_j \frac{S_j}{S_j+1}.
\end{align*}
\end{cor}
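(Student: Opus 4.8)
The plan is to decouple the rank-$r$ problem into $r$ essentially independent rank-one \svdstack problems, one for each shared direction $v_j$, and then invoke \Cref{thm:svdstack_weighted} componentwise. Throughout I use that $v_1,\dots,v_r$ are orthonormal, that after relabeling the per-table singular vectors according to the known and (by assumption (a)) within-table distinct signal strengths, $\hat v_{ij}$ is the singular vector of $X_i$ that estimates $v_j$, and that every component is detectable somewhere, i.e.\ $S_j>0$ for all $j$ (the rank-$r$ analogue of the condition $\beta_1>0$ in \Cref{thm:svdstack_weighted}); I also order components so that $S_1>\dots>S_r$.

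The first step is the rank-$r$ delocalization statement. Since within each table the spikes $\theta_{i1},\dots,\theta_{ir}$ are separated, Theorem~1 of \cite{liu2023asymptotic} applies to every spiked singular vector of $X_i$, giving $|\langle\hat v_{ij},v_j\rangle|^2\pto\beta_{ij}^2$, $|\langle\hat v_{ij},v_k\rangle|^2\pto0$ for $k\neq j$, and $|\langle\hat v_{ij},w^{(d)}\rangle|^2\pto0$ for any deterministic unit sequence $w^{(d)}\perp v_j$. Feeding this last (delocalization) statement into the conditioning-plus-dominated-convergence argument of \Cref{lem:delocalization} and using independence across tables yields, for $i\neq i'$, $\langle\hat v_{ij},\hat v_{i'k}\rangle\pto\beta_{ij}\beta_{i'k}\,\mathds{1}\{j=k\}$; for $i=i'$ the singular vectors $\hat v_{i1},\dots,\hat v_{ir}$ are exactly orthonormal, so $\langle\hat v_{ij},\hat v_{ik}\rangle=\mathds{1}\{j=k\}$.

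Next, let $\hat V_w\in\R^{Mr\times d}$ have rows $w_{ij}\hat v_{ij}^\top$ with $w_{ij}=\theta_{ij}\sqrt{(\theta_{ij}^2+1)/(\theta_{ij}^2+c_i)}$, so that $\hat v_{j,\svdstack}=\hat V_w^\top u_j/\sigma_j(\hat V_w)$ with $u_j$ the $j$-th left singular vector of $\hat V_w$. By the previous step $\hat V_w\hat V_w^\top\pto A$ entrywise, and indexing rows and columns by pairs $(i,j)$ the limit entry $(i,j),(i',k)$ vanishes whenever $j\neq k$; hence $A$ is block diagonal with $r$ blocks, the $j$-th being
\[
 A^{(j)}=(w^{(j)}\odot\beta^{(j)})(w^{(j)}\odot\beta^{(j)})^\top+\diag\!\big((w^{(j)})^2(1-(\beta^{(j)})^2)\big),
\]
with $\beta^{(j)}=(\beta_{1j},\dots,\beta_{Mj})$ and $w^{(j)}=(w_{1j},\dots,w_{Mj})$, which is exactly the matrix analyzed in the proof of \Cref{thm:svdstack_weighted}: it has top eigenvalue $S_j+1$ with a simple unit eigenvector $x^{(j)}$ and all other eigenvalues at most $1$. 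Since $S_j>0$ for every $j$ and, by assumption (b), the $S_j$ are distinct, the $r$ largest eigenvalues of $A$ are the simple values $S_1+1>\dots>S_r+1>1$, attached to blocks $1,\dots,r$; the Davis--Kahan bound used in \Cref{lem:entrywise_conv_eigenvec} then gives $u_j\pto\iota_j(x^{(j)})$ entrywise and $\sigma_j^2(\hat V_w)\pto S_j+1$, where $\iota_j$ denotes the zero-padded embedding of block $j$ into $\R^{Mr}$. Finally, since $(\hat V_w v_j)_{(i,k)}=w_{ik}\langle\hat v_{ik},v_j\rangle\pto w_{ij}\beta_{ij}\mathds{1}\{k=j\}$, the vector $\hat V_w v_j$ converges to something supported on block $j$, so $(v_j^\top\hat v_{j,\svdstack})^2=((\hat V_w v_j)^\top u_j)^2/\sigma_j^2(\hat V_w)$ reduces in the limit to precisely the expression evaluated in the proof of \Cref{thm:svdstack_weighted}, namely $S_j/(S_j+1)$; for $k\neq j$ the limits of $\hat V_w v_k$ and $u_j$ live on disjoint blocks, so $v_k^\top\hat v_{j,\svdstack}\pto0$, and $\|V^\top\hat V_\svdstack\|_F^2=\sum_{j,k}(v_k^\top\hat v_{j,\svdstack})^2\pto\sum_j S_j/(S_j+1)$ by the continuous mapping theorem applied to this finite sum.

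I expect the main obstacle to be the first step: establishing the delocalization $|\langle\hat v_{ij},w^{(d)}\rangle|^2\pto0$ for the \emph{non-leading} spiked singular vectors ($j>1$) under the general fourth-moment noise of \Cref{assum:general_noise}, and the bookkeeping that guarantees each $\hat v_{ij}$ stably estimates $v_j$ rather than mixing the shared directions (this is exactly where the within-table spike separation in assumption (a) enters; assumption (b) is needed only to pin down individual eigenvectors of $A$, and without it the aggregate limit $\|V^\top\hat V_\svdstack\|_F^2\pto\sum_j S_j/(S_j+1)$ still holds because it depends only on the top-$r$ eigenspace of $A$). Once the block structure of $A$ is in hand, the remaining steps are a block-by-block replay of the rank-one argument.
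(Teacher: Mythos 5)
Your proposal is correct and follows essentially the same route as the paper's proof: establish entrywise convergence of $\tilde{V}\tilde{V}^\top$ to a block-diagonal limit whose $j$-th block is exactly the rank-one weighted $A_\beta$ matrix, use the distinctness of the $S_j$ (and interlacing/eigenvalue separation) to attach the $j$-th singular vector to the $j$-th block, and then replay the rank-one argument of \Cref{thm:svdstack_weighted} block by block. The one point you flag as a potential obstacle—delocalization of the non-leading spiked singular vectors under the general fourth-moment noise—is indeed the step the paper treats only implicitly (via "an argument similar to" \Cref{thm:svdstack_weighted}), and your plan to handle it by applying the cited single-matrix result to each separated spike and rerunning the conditioning argument of \Cref{lem:delocalization} is the right fix.
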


These requirements are necessary for distinct reasons.
For the first, 
as long as the $S_j$ are unique, we can uniquely identify each component from this SVD of $\hat{V}$ (though this can be generalized).
Algorithmically, this proceeds as follows:
\begin{algorithm}[h!]
\caption{Rank-$r$ Weighted SVD-Stack}\label{alg:rank-r-svdstack}
\begin{algorithmic}[1]
\State \textbf{Input:} Data matrices $\{X_i\}_{i=1}^M$, where $X_i \in \mathbb{R}^{n_i \times d}$; signal strengths $\{\theta_{ij}\}_{i \in [M], j \in [r]}$
\State \textbf{Output:} An estimated basis for the shared subspace, $\hat{V}_\svdstack \in \mathbb{R}^{d \times r}$.
\State \textbf{Initialize:} $\tilde{V} \in \R^{Mr \times d}$
\For{$i = 1, \dots, M$}
    \State Compute the top $r$ right singular vectors $\{\hat{v}_{ij}\}_{j=1}^r$ of $X_i$.
    \For{$j = 1, \dots, r$}
        \State Compute optimal weight $w_{ij}$  (\Cref{eq:svdstack_app_wij})
        \State Set row $(i-1)r +j$ of $\tilde{V}_\svdstack$ to be  $w_{ij} \hat{v}_{ij}^\top$
    \EndFor
\EndFor
\State Compute the top $r$ right singular vectors of $\tilde{V}_\svdstack$ as $\hat{V}_\stacksvd \in \R^{d \times r}$.
\State \textbf{return} $\hat{V}_\stacksvd$
\end{algorithmic}
\end{algorithm}

\begin{proof}[Proof of \Cref{thm:rank_r_svdstack}]
    Reordering the rows of $\tilde{V}$ by component instead of table, the weighted \svdstack matrix may be written as 
    \begin{equation}
    \tilde{V} = \begin{bmatrix}
        w_{11} \hat{v}_{11}^\top \\
        \vdots \\
        w_{M1} \hat{v}_{M1}^\top \\
        \vdots \\
        w_{1r} \hat{v}_{1r}^\top \\
        \vdots \\
        w_{Mr} \hat{v}_{Mr}^\top
        \end{bmatrix}
    \end{equation}

By an argument similar to \Cref{thm:svdstack_weighted}, we have that $\tilde{V}\tilde{V}^\top$ converges in probability to a block diagonal matrix:
\begin{equation}\label{eq:A_beta_rankr}
    A_{\beta} = \begin{bmatrix}
        (w_{\cdot 1} \circ \beta_{\cdot 1}) (w_{\cdot 1} \circ \beta_{\cdot 1})^\top & 0 & 0\\
        0 & \ddots & 0\\
        0 & 0 & (w_{\cdot r} \circ \beta_{\cdot r}) (w_{\cdot r} \circ \beta_{\cdot r})^{\top}
    \end{bmatrix} + \text{diag}(w_{\cdot 1}^2 \circ(1-\beta_{\cdot 1}^2), \ldots, w_{\cdot r}^2 \circ (1-\beta_{\cdot r}^2))
\end{equation} 

Without loss of generality, we assume that $\beta_{ij} > 0$ for all $i,j$ (otherwise this component asymptotically falls below the threshold of detectability and can be ignored).
When using the optimal weights derived for the rank $1$ case (\Cref{eq:svdstack_app_wij}) the above matrix has exactly $r$ eigenvalues greater than $1$, corresponding to the largest eigenvalue of each of the $r$ blocks.
The fact that each block has exactly one eigenvalue greater than $1$ follows by the interlacing theorem of \citep{bunch1978rank}.
First studying the simplified case under \Cref{assum:thetai_ordered} where $\beta_{ij} > \beta_{ij'}$ for $j < j'$, the top eigenvector of \eqref{eq:A_beta_rankr} will be a vector $x \in \R^{Mr}$ where the first $M$ entries are equal to the top eigenvector of $(w_{\cdot 1} \circ \beta_{\cdot 1}) (w_{\cdot 1} \circ \beta_{\cdot 1})^\top - \text{diag}(w_{\cdot 1}^2 (1 - \beta_{\cdot 1}^2))$ and the remaining entries are $0$.
In particular, by an argument similar to the proof of Theorem \ref{thm:svdstack_weighted} we find that the leading right singular vector of $\tilde{V}$ has a squared inner product of $S_j/(1 + S_j)$.
Repeating for $1 \leq k \leq r$ gives the desired result. 

\medskip

When there is $j$ such that $\max_{i \in [M]} \beta_{ij}=0$, the corresponding block is the identity matrix $I_M$. In particular the eigenspace corresponding to eigenvalue $1$ will have non-zero dimension and an argument similar to that in section \ref{sec:svdstack_threshold} can be used to show that the corresponding linear combination of rows of $\tilde{V}$ will have a limiting inner product of $0$ with this $v_j$. 

\medskip 

Note that the above argument easily generalizes to the case when Assumption \ref{assum:thetai_ordered} does not hold, provided the $S_j$ are unique (ensuring that the leading eigenvalue of each block has multiplicity $1$) and that the nonzero $\beta_{ij}$ are unique within a table (ensuring that the matrix is block diagonal with each block corresponding to a distinct component).
\end{proof}

\medskip 

An alternative method to achieve a similar theoretical result is to reduce everything back to the rank 1 setting, by identifying the $r$ groupings of $M$ vectors (one from each table) that constitute the blocks.
A separate SVD can then be applied to each grouping $j$, to obtain $\hat{v}_{j,\svdstack}$.
Note that this still requires that $\theta_{ij}$ are distinct within a table, but obviates the need for $S_j$ to be distinct across tables.
Even if the algorithm is run as written, the guarantee on $\|V^\top \hat{V}_{\svdstack}\|_F^2$ will hold even if the $S_j$ are not unique, as the same subspace will be identified, simply rotated.

\section{Estimating unknown parameters}

As discussed, heretofore we have assumed that $\theta_i$ were all known.
Here, we discuss how to estimate these signal strength parameters from the observed data.

\subsection{Estimating $\theta_i$ below the threshold}\label{app:theta_est_proof}

Below we provide the proof of \Cref{thm:theta_est}, regarding the estimator in \Cref{eq:theta_estimation}:
\begin{equation*} 
    \hat{\theta}_2 = \frac{1}{\hat{\beta}_1} \sqrt{\|X_2 \hat{v}_1\|_2^2 - c_2}.
\end{equation*}

Empirically, this estimator performs well, as we see in \Cref{fig:theta_estimation}.
Here, 10 tables are generated with $d=10k$, $c_i \in [1,1.5]$ for all $i$, with only 1 table above the threshold ($\theta_1=3,c_1=2$, yielding $\beta_1=0.88$).
\begin{figure}[h]
    \centering
    \includegraphics[width=0.5\linewidth, clip, trim=0 0 1.5cm 1.4cm]{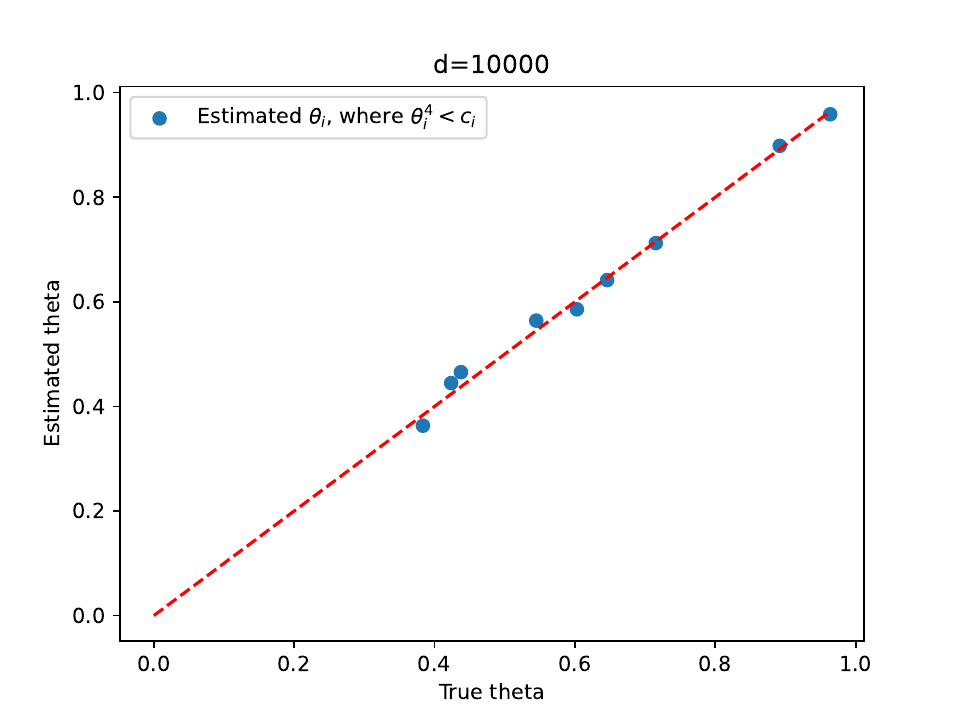}
    \caption{Performance of $\theta$ estimation devised in \Cref{eq:theta_estimation}. Experiment details in \Cref{app:theta_est_proof}.}
    \label{fig:theta_estimation}
\end{figure}

\begin{proof}[Proof of \Cref{thm:theta_est}] 
Since $\theta_1^4 > c_1$, we can construct an asymptotically consistent estimator $\hat{\theta}_1$ for $\theta_1$ as in \cite{liu2023asymptotic}, which corrects for the bias in $\sigma_1(X)$ due to the noise:
\begin{equation}
    \hat{\theta}_1 = \sqrt{\frac{\sigma_1^2(X) - (1+c_1) + \sqrt{(\sigma_1^2(X) - (1+c_1))^2 - 4c_1}}{2}}, \label{eq:theta_est_quadratic}
\end{equation}
where $\sigma_1(X)$ is the largest singular value of $X_1$.
This follows from \cite{liu2023asymptotic} Theorem 2 part 1, which for $A = \theta u v^\top$ studies $R = AA^\top +I = \theta^2 uu^\top + I$.
This theorem gives that the largest singular value of $X= A + E$, where $E$ satisfies \Cref{assum:general_noise}, satisfies $\sigma_1^2(X) \pto \phi(\gamma)$.
Here, $\gamma$ is the spiked singular value above the threshold ($\gamma = \theta^2+1$), and yields:
\begin{equation*}
    \sigma_1^2(X) \pto \theta^2 + 1 + c + \frac{c}{\theta^2}
\end{equation*}

Solving this quadratic yields the stated estimator.
By the continuous mapping theorem, this estimator is consistent for $\theta_1$. 
This means that we can construct a consistent estimator $\hat{\beta}_1$ for $\beta_1$ as well, as $\beta_1 = \frac{\theta_1^4 - c_1}{\theta_1^4 + \theta_1^2}$.
Analyzing our estimator yields:
\begin{align*}
    \hat{\theta}_2 &= \frac{1}{\hat{\beta}_1} \sqrt{\|X_2 \hat{v}_1\|_2^2 - c_2}\\
    &\pto \theta_2.
\end{align*}
The key step in this proof is leveraging that:
\begin{equation*}
    \left(v^\top\hat{v}_1\right)^2 \pto \beta_1^2,
\end{equation*}
by \Cref{prop:single_table}, and that $\hat{v}_1$ is independent of $E_2$.
Then:
\begin{align*}
    \|X_2 \hat{v}_1\|_2^2 
    &= \left\| \langle \hat{v}_1,v \rangle \theta_2 u_2 + E_2 \hat{v}_1\right\|_2^2 \\
    &= \left( \langle \hat{v}_1,v \rangle\right)^2 \theta_2^2 + 2 \langle \hat{v}_1,v \rangle \theta_2 u_2^\top E_2 \hat{v}_1 + \left\|E_2 \hat{v}_1\right\|_2^2\\
    &\pto \theta_2^2 \beta_1^2 + c_2
\end{align*}
In the last line, we analyze the 3 terms separately. 
The first term follows directly from \Cref{prop:single_table}.
The second term follows since $E_2$ is independent of $u_2$ and $\hat{v}_1$:
\begin{align*}
    u_2^\top E_2 \hat{v}_1 &= \sum_{i,j} u_{2,i} E_{2, (i,j)} \hat{v}_{1,j}.
\end{align*}
This is has mean 0, as entries of $E_2$ are independent of each other and have mean 0 marginally.
We can then compute its variance (denoting $x = u_2, E=E_2,y=\hat{v}_1$):
\begin{align*}
    \text{Var}\left(u_2^\top E_2 \hat{v}_1\right)
    &= \E \left[\left(x^\top E y\right)^2\right]\\
    &= \sum_{ij}\E \left[ x_i^2 E_{ij}^2 y_j^2\right]\\
    &= \left(\sum_i x_i^2\right) \left( \sum_j y_j^2\right) \E \left[  E_{11}^2 \right]\\
    &= \E \left[  E_{11}^2 \right]\\
    &= 1/d
\end{align*}
By applying the Chebyshev inequality, this gives us that $u_2^\top E_2 \hat{v}_1 \pto 0$. 
The final term can be analyzed as:
\begin{align*}
    \E \left[\left\|E_2 \hat{v}_1\right\|_2^2\right]
    &= \sum_{k=1}^{n_2} \E \left[\left(E_{2,k}^\top \hat{v}_1\right)^2\right]\\
    &= \sum_{k=1}^{n_2} \sum_{\ell=1}^d \hat{v}_{1,\ell}^2 \E \left[E_{2,(k,\ell)}^2\right]\\
    &= \sum_{k=1}^{n_2}  \frac{1}{d}\sum_{\ell=1}^d \hat{v}_{1,\ell}^2\\
    &= \frac{n_2}{d}\\
    &\to c_2
\end{align*}
Where the cross terms have 0 mean, as entries of $E$ are 0 mean and independent. 
Since $\hat{v}_1$ is a unit vector, the sum of its squared entries is 1.

Plugging these back into our expression for $\hat{\theta}_2$, and using the asymptotic consistency of $\hat{\beta}_1$ and the continuous mapping theorem, yields the desired result.
\end{proof}

\section{Supplementary Figures and Tables}

In \Cref{fig:exponential}, we show that the theoretical results hold even when the noise is i.i.d. exponentially distributed (centered, with mean 0, and scaled so that the variance is $1/d$).
This noise is both a) non-Gaussian, and b) non symmetric, but since it has mean 0, appropriate variance, and has bounded higher moments, it still works in practice, where the asymptotic theory predicted for Gaussian noise holds.
We simulate $d=2000$ with $c_i=1$ and $\theta_i = 1 + 2/(i+1)$ with increasing $M$ and exponential noise in \Cref{fig:S2_sub1}.
Then, increasing $d$, we simulate $M=3$ with $c_i=c_0=1$ and $\theta=[1.7,1.6,1.5]$ in \Cref{fig:S2_sub2}.

\begin{figure}[h]
    \centering
    \includegraphics[scale=0.25]{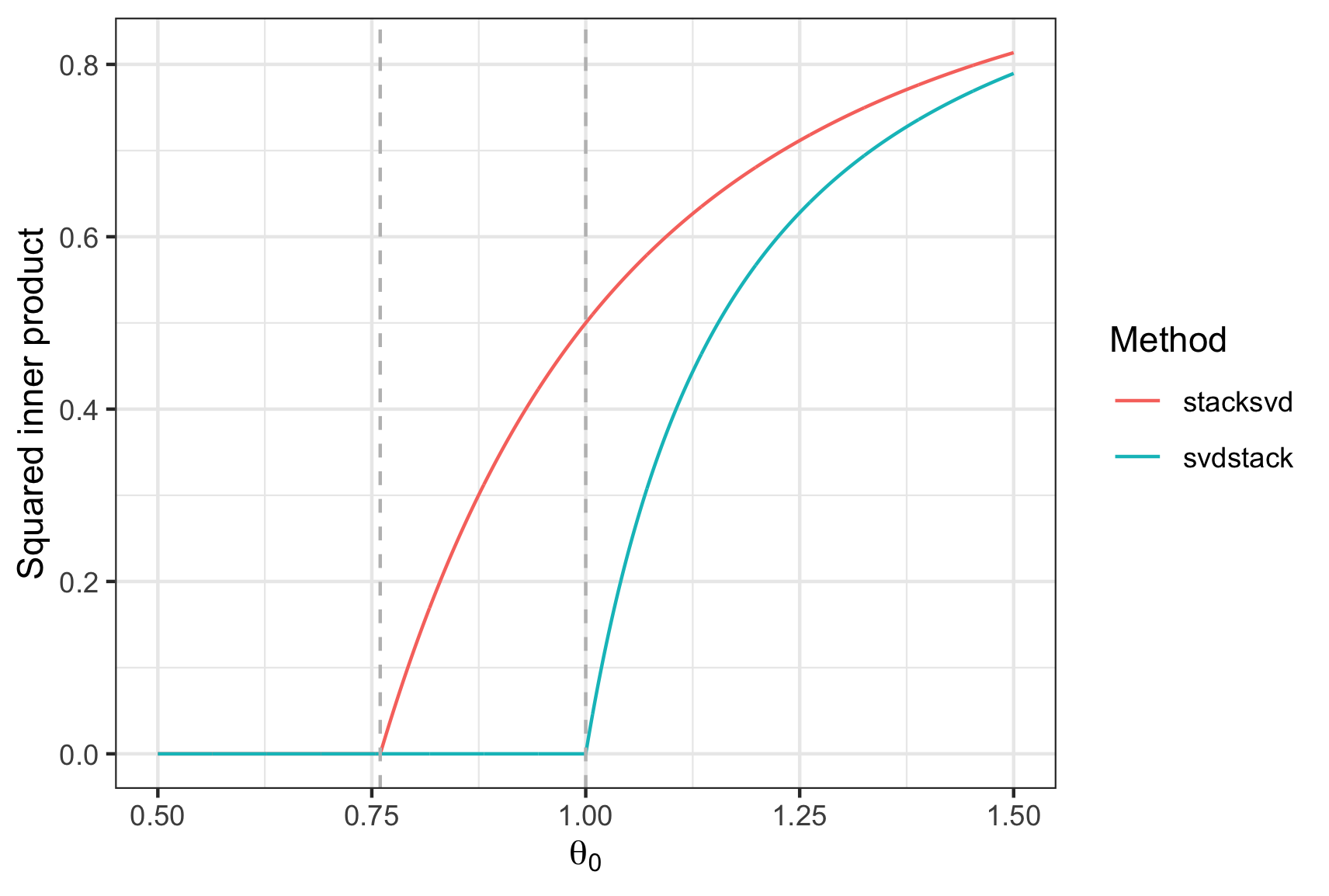}
    \caption{The performance of \stacksvd and \svdstack in the situation where $M = 3$, all $\theta_i = \theta_0$ and all $c_i = 1$, the setting of \Cref{thm:simple_thm1}. The dashed lines represent the phase transition points for \stacksvd ($M^{-1/4}$) and \svdstack ($1$).}
    \label{fig:cor_1_plot}
\end{figure}

\begin{figure}[ht]
    \centering
    \begin{subfigure}[t]{0.512\linewidth}
        \centering
        \includegraphics[width=\linewidth, clip, trim=0 0 1.6cm 1.4cm]{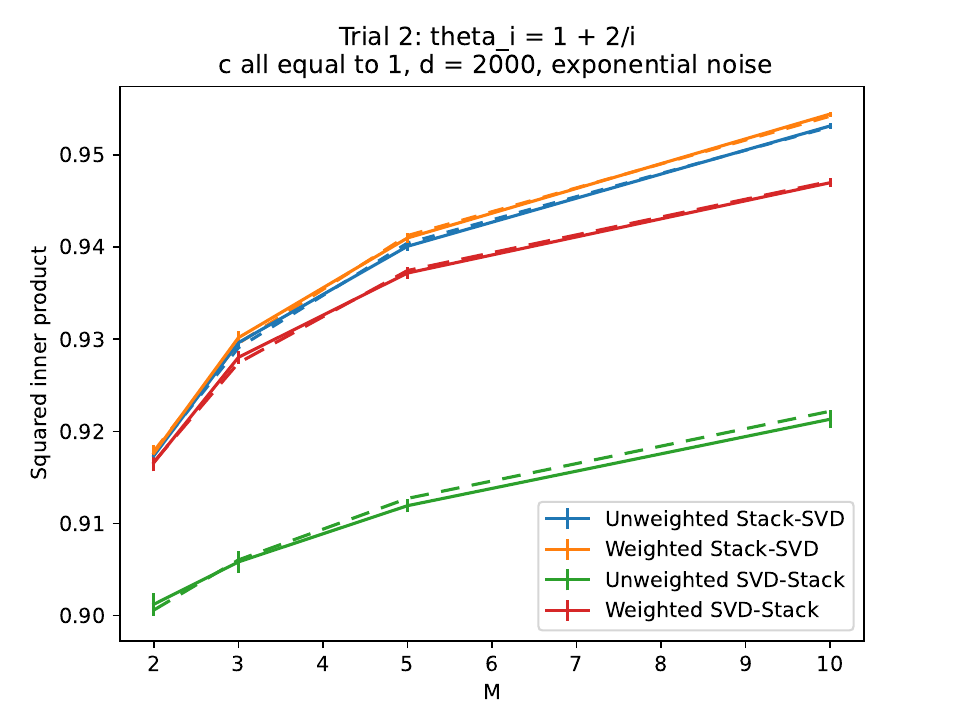}\vspace{-.3cm}
        \caption{}
        \label{fig:S2_sub1}
    \end{subfigure}
    \hfill
    \begin{subfigure}[t]{0.47\linewidth}
        \centering
        \includegraphics[width=\linewidth, clip, trim=1.2cm 0 1.6cm 1.4cm]{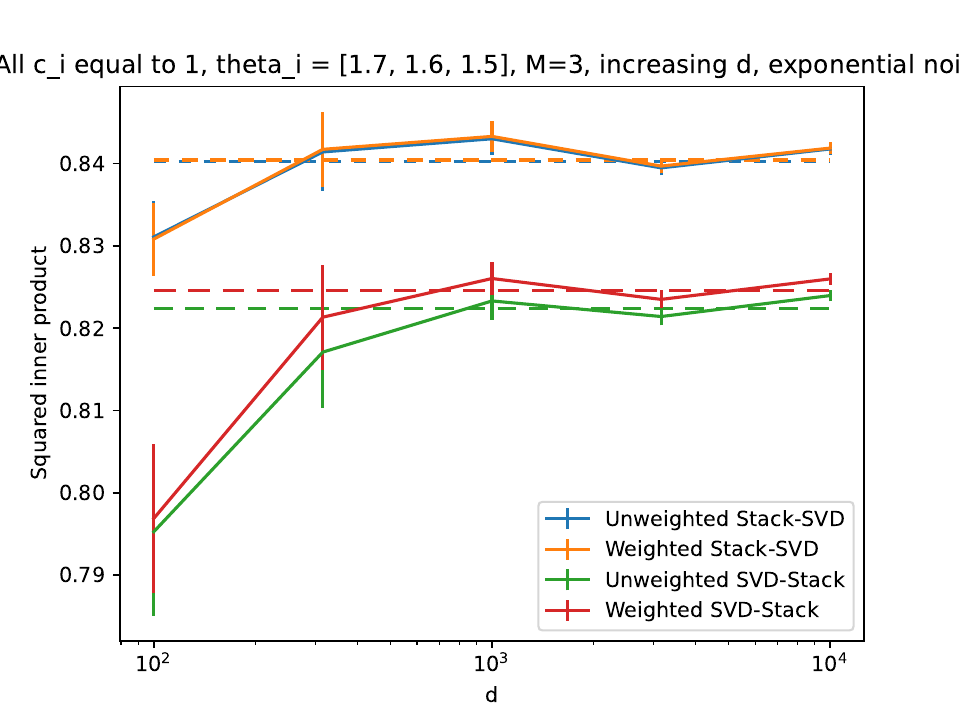}\vspace{-.3cm}
        \caption{}
        \label{fig:S2_sub2}
    \end{subfigure}
    \vspace{-.4cm}
    \caption{Theoretical results exactly hold, even when noise is i.i.d. exponentially distributed (centered, with mean 0, and scaled so that the variance is $1/d$).
    } \label{fig:exponential}
        \vspace{-.5cm}
\end{figure}

\begin{figure}[h]
    \centering
    \includegraphics[width=.5\linewidth]{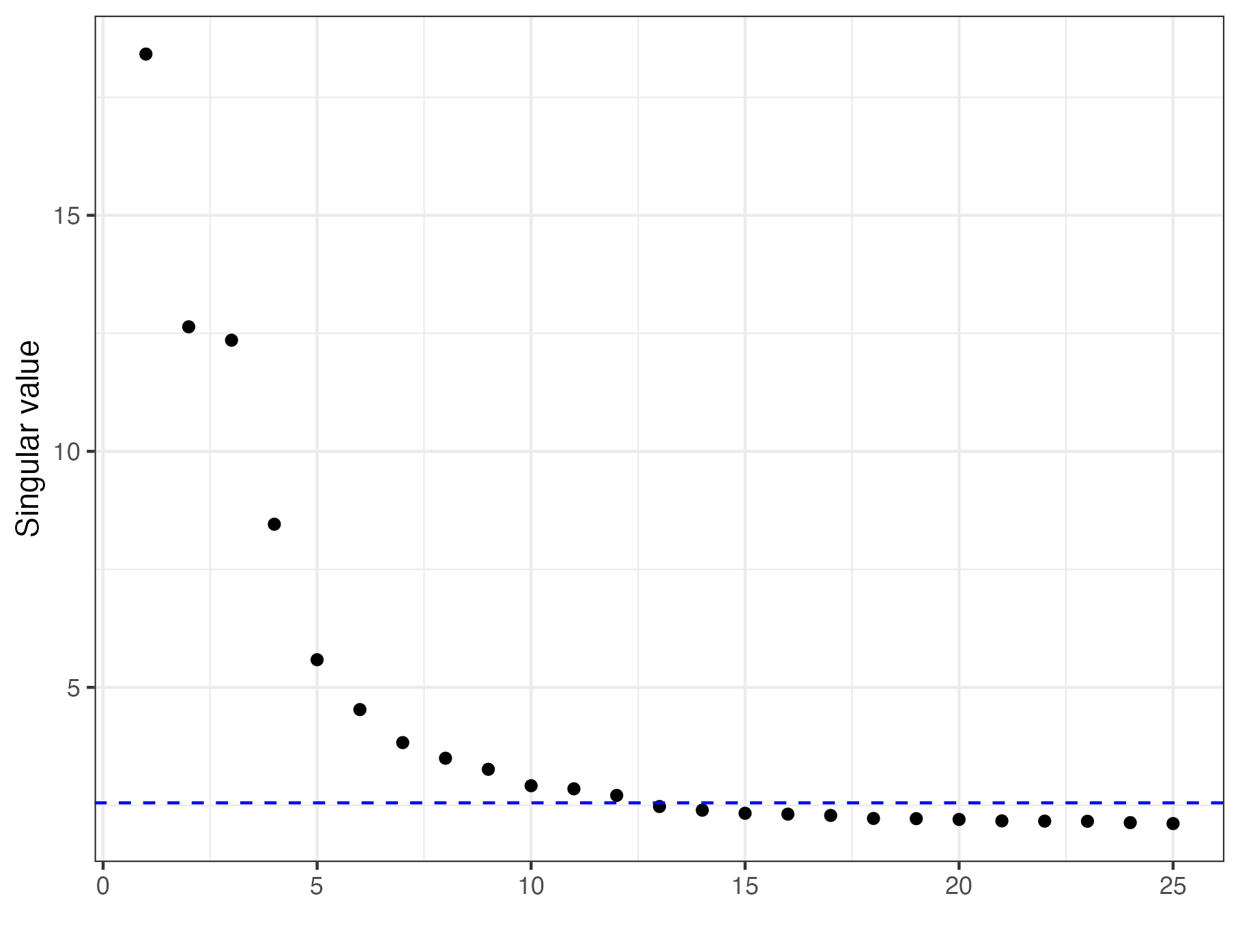}
    \caption{The leading singular values obtained from the transformed counts on the \cite{zheng2017massively} dataset. The blue dashed line indicates $1 + \sqrt{c}$.}
    \label{fig:tenx_sv}
\end{figure}

\end{appendix}


\begin{funding}
TZB was supported by funding from the Eric and Wendy Schmidt Center at the Broad Institute of MIT and Harvard.
PBN was supported by the National Institutes of Health grant T32CA009337.
RAI was supported in part by funding from NIH Grants R35GM131802, and R01HG005220.
\end{funding}

\begin{supplement}

\stitle{Code for data analysis and simulations}
\sdescription{All code used to generate the results in this paper is publicly available on Github at \url{https://github.com/phillipnicol/stackedSVD}.}

\end{supplement}

\end{document}